\newcommand\RedeclareMathOperator{%
  \@ifstar{\def\rmo@s{m}\rmo@redeclare}{\def\rmo@s{o}\rmo@redeclare}%
}
\newcommand\rmo@redeclare[2]{%
  \begingroup \escapechar\m@ne\xdef\@gtempa{{\string#1}}\endgroup
  \expandafter\@ifundefined\@gtempa
     {\@latex@error{\noexpand#1undefined}\@ehc}%
     \relax
  \expandafter\rmo@declmathop\rmo@s{#1}{#2}}
\newcommand\rmo@declmathop[3]{%
  \DeclareRobustCommand{#2}{\qopname\newmcodes@#1{#3}}%
}
\newcommand\reallywidehat[1]{%
\savestack{\tmpbox}{\stretchto{%
  \scaleto{%
    \scalerel*[\widthof{\ensuremath{#1}}]{\kern.1pt\mathchar"0362\kern.1pt}%
    {\rule{0ex}{\textheight}}%WIDTH-LIMITED CIRCUMFLEX
  }{\textheight}% 
}{2.4ex}}%
\stackon[-6.9pt]{#1}{\tmpbox}%
}
\DeclareSymbolFont{rsfs}{U}{rsfs}{m}{n}
\DeclareSymbolFontAlphabet{\mathscrsfs}{rsfs}
\numberwithin{equation}{section}
\newtheoremstyle{myexample} % name
    {\topsep}                    % Space above
    {\topsep}                    % Space below
    {\rm }                   % Body font
    {}                           % Indent amount
    {\bf }                   % Theorem head font
    {.}                          % Punctuation after theorem head
    {.5em}                       % Space after theorem head
    {}  % Theorem head spec (can be left empty, meaning normal)
\newtheoremstyle{myremark} % topsep
    {\topsep}                    % Space above
    {\topsep}                    % Space below
    {\rm}                        % Body font
    {}                           % Indent amount
    {\bf}                        % Theorem head font
    {.}                          % Punctuation after theorem head
    {.5em}                       % Space after theorem head
    {}  % Theorem head spec (can be left empty, meaning normal)
\newtheorem{claim}{Claim}[section]
\newtheorem{theorem}{Theorem}
\newtheorem{proposition}[claim]{Proposition}
\newtheorem{corollary}[claim]{Corollary}
\newtheorem{definition}[claim]{Definition}
\newtheorem{lemma}[theorem]{Lemma}
\numberwithin{theorem}{section}
\theoremstyle{myremark}
\newtheorem{remark}{Remark}[section]
\theoremstyle{myremark}
\theoremstyle{myexample}
\definecolor{darkgreen}{rgb}{0.0, 0.5, 0.0}
\newcommand{\msedit}[1]{#1}
\newcommand{\bea}{\begin{eqnarray}}
\newcommand{\eea}{\end{eqnarray}}
\newcommand{\<}{\langle}
\renewcommand{\>}{\rangle}
\newcommand{\wt}{\widetilde}
\def\iid{{\text{i.i.d.~}}}
\def\eg{{\text{e.g.~}}}
\def\ie{{\text{i.e.~}}}
\def\eps{{\varepsilon}}
\def\tmu{\tilde{\mu}}
\def\bmu{{\boldsymbol{\mu}}}
\def\cF{{\mathcal F}}
\def\<{\langle}
\def\>{\rangle}
\def\dv{{\partial v}}
\def\P{\mathbb{P}}
\def\b0{{\boldsymbol{0}}}
\def\Ber{{\text{ Ber}}}
\def\oq{\overline{q}}
\def\uq{\underline{q}}
\def\cA{{\mathcal A}}
\def\Cost{\mathsf{Cost}}
\renewcommand{\b}{\mathbf{b}}
\def\lt{\left}
\def\rt{\right}
\def\eps{\varepsilon}
\def\bbE{{\mathbb{E}}}
\def\bbP{{\mathbb{P}}}
\def\bbR{{\mathbb{R}}}
\def\bbZ{{\mathbb{Z}}}
\def\cA{{\mathcal{A}}}
\def\cF{{\mathcal{F}}}
\RedeclareMathOperator*{\P}{\bbP}
\def\ogamma{\overline{\gamma}}
\def\ugamma{\underline{\gamma}}
\def\of{\overline{f}}
\def\uf{\underline{f}}
\def\oalpha{\overline{\alpha}}
\def\uS{\underline{S}}
\def\eps{\varepsilon}
\def\cA{{\mathcal A}}
\def\Beta{\mathrm{Beta}}
\def\Bin{\mathrm{Bin}}
\def\HG{\mathrm{HyperGeom}}
\newcommand{\alg}{\mathcal{A}}
\newcommand{\adv}{\mathbb{A}}
\def\bbE{{\mathbb{E}}}
\def\bbP{{\mathbb{P}}}
\def\lt{\left}
\def\rt{\right}
\def\tmu{\tilde{\mu}}
\def\bmu{\boldsymbol{\mu}}
\def\strength{\text{strength}}
\title{
Asymptotically Optimal Pure Exploration for \\ Infinite-Armed Bandits
}
\author{
   Xiao-Yue Gong\thanks{
        Operations Research Center, Massachusetts Institute of Technology
    }
    \and
    Mark Sellke\thanks{
        Amazon Core AI
    }
}
\begin{document}

\date{}

\maketitle

\begin{abstract}
We study pure exploration with infinitely many bandit arms generated \iid from an unknown distribution. Our goal is to efficiently select a single high quality arm whose average reward is, with probability $1-\delta$, within $\eps$ of being among the top $\eta$-fraction of arms; this is a natural adaptation of the classical PAC guarantee for infinite action sets. We consider both the fixed confidence and fixed budget settings, aiming respectively for minimal \emph{expected} and \emph{fixed} sample complexity.

For fixed confidence, we give an algorithm with expected sample complexity $O\lt(\frac{\log (1/\eta)\log (1/\delta)}{\eta\eps^2}\rt)$. This is optimal except for the $\log (1/\eta)$ factor, and the $\delta$-dependence closes a quadratic gap in the literature. For fixed budget, we show the asymptotically optimal sample complexity as $\delta\to 0$ is $c^{-1}\log(1/\delta)\big(\log\log(1/\delta)\big)^2$ to leading order. Equivalently, the optimal failure probability given exactly $N$ samples decays as 
% $\exp\Big(-\frac{\big(c\pm o(1)\big)N}{\log^2 N}\Big)$
$\exp\big(-cN/\log^2 N\big)$, up to a factor $1\pm o_N(1)$ inside the exponent.
The constant $c$ depends explicitly on the problem parameters (including the unknown arm distribution) through a certain Fisher information distance. Even the strictly super-linear dependence on $\log(1/\delta)$ was not known and resolves a question of \cite{grossman2016amplification}.
\end{abstract}

{\small \tableofcontents}

\section{Introduction}

In many learning problems, one faces the classical exploration versus exploitation tradeoff. A central example is the \msedit{(stochastic)} \msedit{multi-armed} bandit \msedit{\cite{Lai-Robbins-85,berry1985bandit}}, where an agent is presented with a set of arms each of which when played gives a stochastic reward from an unknown and arm-dependent distribution. The performance of a bandit algorithm is most commonly determined by its \emph{regret}, \ie the difference between its average reward and the expected reward from the best arm. 
\msedit{Multi-armed bandits and extensions have been applied in many settings including medical trials \cite{berry1995adaptive}, online advertising \cite{li2010contextual}, cognitive radio \cite{anandkumar2011distributed}, and information retrieval \cite{losada2017multi}.}
Optimal algorithms for the \msedit{multi-armed} bandit, including UCB, Thompson sampling, EXP3, and \msedit{various} forms of mirror descent, all make a principled tradeoff between exploration and exploitation.

\msedit{In this work we focus on \emph{pure exploration} bandit problems, a setting motivated by situations where the learning procedure consists of an \emph{initial exploration} phase followed by a \emph{choice} of policy to deploy. This is the case in hyperparameter optimization \cite{li2017hyperband,grossman2016amplification} as well as reinforcement learning from simulated environments. As there is no longer a competing need to exploit, optimal algorithms for pure exploration are different from those minimizing regret.
}

Pure exploration problems were introduced in \cite{even2002pac,mannor2004sample,even2006action} in the probably-approximately-correct (PAC) model. Here given $K$ arms, one adaptively obtains samples until choosing one of the arms to output -- the goal is to ensure that with probability $1-\delta$, this arm has average reward within $\eps$ of the best arm, with minimum possible sample complexity depending on $\eps$ and $\delta$. The early works above focused on the \emph{fixed confidence} setting in which one aims to minimize the expected sample complexity. 
Many subsequent works have also considered the \emph{fixed budget} problem where the sample complexity is uniformly bounded.

While sharp results are known for pure exploration and other bandit problems with $K$ arms, for many applications such as advertising there are far too many arms to explore. This motivated the study of infinite-armed bandit problems in \eg \cite{berry1997bandit,wang2008algorithms}; the pure exploration version was first studied in \cite{aziz2018pure}.
As we review below, several works have studied pure exploration with infinitely many arms, but sharp results were known only in special cases with \eg structural assumptions on the distribution of arms.
The main results of our work give nearly minimax optimal algorithms and lower bounds for infinite-armed pure exploration problems, in both the fixed confidence and fixed budget settings.

\subsection{Problem Formulation}

We now precisely formulate the infinite-armed pure exploration problem of study.
Let $\mathcal{S}=\{a_1,a_2,\dots\}$ be a countably infinite set of stochastic bandit arms indexed by $i=1, 2, \dots$. When arm $a_i$ is sampled, it returns a Bernoulli reward with mean $p_i$.
The values $p_i$ are drawn \iid from \msedit{an arbitrary} reservoir distribution $\mu$ \msedit{supported in $[0,1]$} (which is unknown to the player). We define the cumulative distribution function
\[
    G_{\mu}(\tau) = \bbP^{p\sim \mu}[p\leq \tau]
\]
of $\mu$, and its (left-continuous) inverse
\[
    G^{-1}_{\mu}(p)=\inf\{\tau: G(\tau)\geq p\}.
\]
Finally let $\mu^*=G^{-1}_{\mu}(1)$ denote the essential supremum of $\mu$, \ie the maximum of its support.

An algorithm $\cA$ interacts with $\mathcal S$ in the following way. At each time step $t\in \{1, 2, \dots, T\}$ the algorithm chooses and samples an arm $a_{i_t}\in\mathcal S$, and then observes a Bernoulli reward $r_t\sim \Ber(p_{i_t})$. The reward $r_t$ is independent of previous actions and feedback. Eventually at some time $T$, $\cA$ chooses an arm $a_{i^*}$ to output. If the time-horizon $T=N$ is fixed, we say $\cA$ has a \emph{fixed budget} constraint. If $\bbE[T]\leq N$ is bounded only in expectation, we say $\cA$ has a \emph{fixed confidence} constraint.

For $\eta,\eps,\delta>0$, we say the algorithm $\cA$ is $(\eta,\eps,\delta)$-PAC if 
\begin{equation}
\label{eq:PAC}
    \bbP\big[p_{i^*}
    \geq
    G^{-1}(1-\eta)-\eps\big]\geq 1-\delta
\end{equation}
and set
\[
    \alpha \equiv G^{-1}(1-\eta)
\]
to be the target quantile value.
We emphasize that while $\eta$ is known, $\alpha$ may not be as it depends on the unknown reservoir distribution $\mu$. 
The definition \eqref{eq:PAC} stems from \cite{aziz2018pure}. As brief justification for the parameter $\eta$, note that in an infinite-armed setting the reservoir $\mu$ could give $\eps$-optimal arms with arbitrarily small probability. Thus it is impossible to give a non-asymptotic classical $(\eps,\delta)$-PAC guarantee in our setting without assumptions on $\mu$.
Taking $\eta>0$ as above enables such guarantees by ensuring that a
positive fraction of arms are good enough.

The purpose of this paper is to give $(\eta,\eps,\delta)$-PAC algorithms whose sample complexity $N$ is minimal. We now state our main results, deferring a thorough discussion to Subsection~\ref{subsec:overview}. Let us emphasize that unless explicitly stated, no assumptions on the reservoir distribution $\mu$ are made, nor does the algorithm have any prior knowledge about $\mu$.
Our main result in the fixed confidence case is as follows.

\begin{theorem}
\label{thm:fixedconfidence-intro}
    For any $(\eta,\eps,\delta)$, there exists a $(\eta,\eps,\delta)$-PAC algorithm 
    % (given explicitly by combining Alg.~\ref{alg:alpha} and Alg.~\ref{alg:fixedconfidence}) 
    with expected sample complexity $O\lt(\frac{\log (1/\delta)\log(1/\eta)}{\eta\eps^2}\rt)$.
\end{theorem}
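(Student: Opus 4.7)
The plan is to reduce to a finite-armed problem via a random pool, then solve it with a sequential-halving scheme whose parameters are tuned so the total work is exactly $O\bigl(\log(1/\delta)\log(1/\eta)/(\eta\eps^2)\bigr)$.

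In the first stage, sample $m = \lceil C\log(1/\delta)/\eta\rceil$ arms from the reservoir for an absolute constant $C$. Since $\P[p_i \geq \alpha] \geq \eta$ by definition of $\alpha = G^{-1}(1-\eta)$, a Chernoff bound on the independent events $\{p_i \geq \alpha\}$ gives that, with probability at least $1 - \delta/4$, the pool contains $G_0 = \Omega(\log(1/\delta))$ ``top'' arms of true mean at least $\alpha$.

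In the second stage, run $R = \lceil \log_2(1/\eta)\rceil$ elimination phases on this pool. In phase $r = 0,1,\dots,R-1$, play each surviving arm until its cumulative sample count reaches $n_r = \lceil c\cdot 2^r/\eps^2\rceil$, then eliminate the half of surviving arms with the lowest empirical means. At the end, output any surviving arm. Phase $r$ uses at most $(m/2^r)\cdot n_r = cm/\eps^2$ samples, so summing over phases gives total budget $O(mR/\eps^2) = O\bigl(\log(1/\delta)\log(1/\eta)/(\eta\eps^2)\bigr)$, matching the target. The algorithm has fixed budget, so the bound on expected sample complexity is immediate.

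Correctness reduces to showing that, conditioned on the pool event, at least one survivor has true mean $\geq \alpha - \eps$ with probability $\geq 1 - 3\delta/4$. The key is that in phase $r$ every arm's empirical mean lies within $O(\eps/2^{r/2})$ of its true value with high probability, so any top arm beats any ``bad'' arm (mean $\leq \alpha - \eps$) in an empirical comparison with probability $1 - \exp(-\Omega(2^r))$. An induction on $r$ tracking the count $G_r$ of surviving top arms should show $G_r \geq 1$ throughout, within the allotted failure budget.

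The main obstacle is the early phases, where empirical errors are on the order of $\eps$, comparable to the gap $\eps$ between top and bad arms; an individual top arm may then be wrongly eliminated with constant probability. The argument must therefore exploit the \emph{multiplicity} of top arms, not just their existence: with $G_0 = \Omega(\log(1/\delta))$ top arms initially, we can afford to lose most of them in early phases and still retain at least one at phase $R$. This is exactly why the bound achieves $\log(1/\delta)\log(1/\eta)$ rather than the $\log^2(1/\delta)$ that would come from plugging $m = O(\log(1/\delta)/\eta)$ arms into median elimination's $O(m\log(1/\delta)/\eps^2)$ complexity.
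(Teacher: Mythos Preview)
Your approach is genuinely different from the paper's, but it has a real gap that prevents it from achieving the stated bound.

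The paper does \emph{not} use sequential halving. Instead it proceeds in two phases: first it estimates the target quantile value $\alpha=G^{-1}(1-\eta)$ to accuracy $\eps/3$ by sampling $K=\Theta(\log(1/\delta)/\eta)$ arms only $\Theta(\log(1/\eta)/\eps^2)$ times each and returning an order statistic of the empirical means (Proposition~\ref{prop:alpha}); this costs $O\big(\log(1/\eta)\log(1/\delta)/(\eta\eps^2)\big)$ and is the dominant term. Second, with the estimate $\hat\alpha$ in hand, it tests fresh arms one at a time with $\Theta(\log(1/\eta\delta)/\eps^2)$ samples, accepting the first whose empirical mean exceeds $\hat\alpha-\eps/3$; this terminates in $O(1/\eta)$ arms on average. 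The crucial point is that estimating a quantile is robust, so the per-arm sample count in phase one scales with $\log(1/\eta)$ rather than $\log(1/\delta)$.

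Your scheme has two issues. First, a logical slip: after $R=\lceil\log_2(1/\eta)\rceil$ rounds you have $m/2^R=\Theta(\log(1/\delta))$ survivors, and you say to ``output any surviving arm.'' Your correctness criterion, however, is stated as ``at least one survivor has true mean $\geq\alpha-\eps$,'' which is not sufficient: the \emph{output} arm must be acceptable, and nothing rules out bad arms among the survivors. Second, and more seriously, the natural repairs do not stay within budget. If you continue halving down to one arm, the number of rounds becomes $\log_2 m=\log_2(1/\eta)+\log_2\log(1/\delta)+O(1)$, inflating the cost by a $\log\log(1/\delta)$ factor that is \emph{not} absorbed when $\eta$ is a constant and $\delta$ is small. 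If instead you add a final identification step on the $\Theta(\log(1/\delta))$ survivors, each must be sampled $\Theta(\log(1/\delta)/\eps^2)$ times to get confidence $1-\delta$, costing $\Theta(\log^2(1/\delta)/\eps^2)$, which again dominates the target when $\log(1/\delta)\gg\log(1/\eta)/\eta$. Your observation that multiplicity of top arms should help is correct in spirit, but the survival events are correlated through the shared empirical median, and you have not supplied the concentration argument needed to turn ``expected $\Theta(\log(1/\delta))$ top survivors'' into ``at least one top survivor with probability $1-\delta$''---let alone into a guarantee on the output arm.
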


In the fixed budget setting, our interest is especially in the high-confidence regime $\delta\to 0$, where we obtain the following. The following statement is a slightly informal combination of Theorems~\ref{thm:main} and \ref{thm:mainLB}.
We note that while the statement below requires $\alpha$ to be given, this is removable in many cases (e.g. if $\alpha>\frac{1+\eps}{2}$) as discussed extensively in Subsection~\ref{subsec:overview}. 

\begin{theorem}[{Informal}]
\label{thm:fixedbudget-intro}
    For any $(\eta,\eps,\delta)$, let $\alpha\geq \eta$ be given and set $\beta=\alpha-\eps$. Then the optimal $(\eta,\eps,\delta)$-PAC algorithm under fixed budget has sample complexity
    \begin{align}
    \nonumber
    N&= 
    \big(c^{-1}\pm o(1)\big)
    \log(1/\delta)\big(\log\log(1/\delta)\big)^2;
    \\
    \label{eq:calpha}
        c=c_{\alpha,\beta}
        &\equiv
        \frac{\big(\arccos(1-2\alpha)-\arccos(1-2\beta)\big)^2}{2}
        \\
    \label{eq:cintegral}
        &=
        \frac{\left(\int_{\beta}^{\alpha} \frac{dx}{\sqrt{x(1-x)}}\right)^2}{2}
        \,.
    \end{align}
\end{theorem}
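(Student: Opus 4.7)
My plan is to prove matching upper and lower bounds, which combine to give $N = c^{-1}(1\pm o(1))\log(1/\delta)(\log\log(1/\delta))^2$ in the limit $\delta\to 0$.

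For the upper bound, I would design a multi-round elimination algorithm. The exploration phase samples a pool of $M = \Theta(\eta^{-1}\log(1/\delta))$ arms so that at least one has mean $\geq \alpha$ with probability $\geq 1-\delta/3$. The elimination phase then runs $R = \Theta(\log\log(1/\delta))$ rounds: in round $r$, each surviving arm receives $n_r$ additional samples and is eliminated if its cumulative empirical mean falls below a Chernoff-optimal threshold $\tau_r$. The schedule $(n_r,\tau_r)$ is calibrated so that (i) a good arm (mean $\geq \alpha$) survives all rounds with probability $\geq 1/2$, and (ii) each bad arm (mean $\leq \beta$) is eliminated with total probability $\geq 1-\delta/(3M)$. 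The leading constant $c$ arises from the sharp Chernoff tail rate $P_{\mathrm{Ber}(\beta)}(\overline X_n \geq \tau^*)\leq \exp(-nc(1+o(1)))$ at the optimal threshold, which is cleanest to see via the $\theta_p = \arccos(1-2p)$ parametrization embedding $\mathrm{Ber}(p)$ as the unit vector $(\sqrt p,\sqrt{1-p})$, so that $c_{\alpha,\beta}/2$ is the squared geodesic distance on the circle.

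For the lower bound, I would consider the adversarial two-atom reservoir $\mu^* = \eta\delta_\alpha+(1-\eta)\delta_\beta$ and analyze any $(\eta,\eps,\delta)$-PAC fixed-budget algorithm run on $\mu^*$. Letting $n_i$ denote the samples allocated to arm $i$ and $i^*$ the output arm, a Chernoff--Stein likelihood-ratio change-of-measure argument implies that conditional on $p_{i^*}=\alpha$, the sample count $n_{i^*}$ must satisfy $\E[\exp(-c\,n_{i^*})\mid p_{i^*}=\alpha]\leq O(\delta)$. Combining this per-arm bound with the total-budget constraint $\sum_i n_i\leq N$ and a pigeonhole argument (the number of arms sampled with $n_i\geq c^{-1}\log(1/\delta)$ is at most $cN/\log(1/\delta)$, while the algorithm must have effectively explored $\Omega(\eta^{-1}\log(1/\delta))$ arms to locate a good one) should yield the lower bound $\delta\geq\exp(-cN/(\log\log(1/\delta))^2(1+o(1)))$.

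\textbf{Main obstacle.} The $(\log\log(1/\delta))^2$ factor is the central difficulty. A naive single-round scheme requires $\log^2(1/\delta)/(\eta c)$ samples, overshooting by a factor of roughly $\log(1/\delta)/(\log\log(1/\delta))^2$; closing the gap requires the multi-round elimination above with carefully matched Chernoff exponents (not their KL or Hellinger approximations) so that the sharp rate $c_{\alpha,\beta}$ is attained. For the lower bound, producing the matching $(\log\log)^2$ overhead — rather than only a $\log\log$ or constant — requires controlling the joint distribution of the algorithm's sample allocation, likely via a two-point Le Cam argument on pairs of reservoirs differing at a single arm together with a careful entropy/concentration bound on the allocation vector $(n_1,\dots,n_M)$. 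I expect calibrating these asymptotics to produce exactly the constant $c_{\alpha,\beta}$, as opposed to a looser proxy like $D(\alpha\|\beta)$, $D(\beta\|\alpha)$, or the squared Hellinger distance, to be the most technically subtle step.
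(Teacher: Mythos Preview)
Your lower bound has a concrete and fatal gap: the two-atom reservoir $\mu^*=\eta\delta_\alpha+(1-\eta)\delta_\beta$ cannot witness the $(\log\log(1/\delta))^2$ factor. Such a reservoir has a $\Delta$-gap ($\Delta=\alpha-\beta$) between the top atom and the rest of the support, and the paper explicitly notes (citing de Heide--Carpentier--Kaufmann) that under a gap assumption one can achieve fixed-budget sample complexity $O(\log(1/\delta))$ with \emph{no} $\log\log$ factor at all. So any lower bound you prove for $\mu^*$ will be strictly weaker than the claimed rate. The paper instead uses an \emph{admissible} reservoir with density bounded above and below on an interval $[\underline\gamma,\overline\gamma]\supset[\beta,\alpha]$. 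The continuous support is essential: it allows each arm's posterior to retain nontrivial mass below its current empirical mean at every scale, so that an arm can ``slowly degrade'' over $\Theta(\log N)$ geometric batches. The paper's technique is not a change-of-measure or Le Cam argument but an \emph{adaptive randomness-distorting adversary} with a cost budget (Lemma~\ref{lem:adversary-abstract}): the adversary declares at each batch that $\theta(\hat p_i)$ drops by $\Theta(\varrho/\log N)$, and moderate deviations show each such declaration costs $\Theta(1/\log^2 N)$ per sample. This is where the $\log^2 N$ in the exponent comes from, and your pigeonhole/Le Cam sketch does not produce it.

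Your upper bound is also structurally different from the paper's and your account of where $c_{\alpha,\beta}$ comes from is not quite right. The paper does not sample a pool of $M$ arms and run parallel elimination; it processes arms \emph{sequentially}, one at a time, with geometrically growing sample counts $b_k\approx(1+\varrho)^k$ and a rejection threshold that decreases \emph{linearly in $\theta$-space} (i.e.\ along the Fisher-information geodesic). The analysis is via Doob's maximal inequality applied to the supermartingale $\prod_i Y_i^c$ with $Y_i=e^{X_i}I_i$, where $X_i$ is the number of samples spent on arm $i$ before rejection; the crux is verifying $\mathbb{E}[e^{cX_i/\log^2 N}I_i]\le 1$. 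Crucially, the constant $c_{\alpha,\beta}$ does \emph{not} arise as a single Chernoff rate $P_{\mathrm{Ber}(\beta)}(\overline X_n\ge\tau^*)$ --- that would give a KL divergence. It arises from summing \emph{moderate} deviation costs $\frac{\Delta_k^2}{2p_k(1-p_k)}\cdot(b_{k+1}-b_k)$ along the path of thresholds, which after the $\theta$-reparametrization becomes a Riemann sum for $\frac{1}{2}d_F(\alpha,\beta)^2$. This ``integrated moderate deviations along the Fisher geodesic'' mechanism (matching exactly the adversary's cost in the lower bound) is the key idea, and it is absent from your sketch.
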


An equivalent statement is that at time $N$, the optimal failure probability $\delta$ to have $p_{i^*}\geq G^{-1}(1-\eta)-\eps$ decays as $\exp\lt(-\frac{N(c\pm o(1))}{\log^2(N)}\rt)$. Interestingly the value $\eta$ makes no appearance here, so it is asymptotically irrelevant for the $\delta\to 0$ regime of fixed budget pure exploration.
Interestingly the formula \eqref{eq:cintegral} expresses $c_{\alpha,\beta}$ as the Fisher-information distance between $\alpha$ and $\beta$ in the exponential family of Bernoulli random variables.

\begin{remark}
In our problem formulation above we assumed rewards are Bernoulli, \ie lie in $\{0,1\}$.
More generally one could let each arm have a reward distribution $\nu_i$ over $[0,1]$. One must then consider generalized reservoirs $\mu$, namely probability distributions over such $\nu$.
However as long as the quality of an arm is measured by its mean reward, restricting to Bernoulli rewards loses no generality at all and is just a technical convenience.
This is because any arm with $[0,1]$-valued rewards can be transformed into a Bernoulli arm with $\{0,1\}$-valued rewards and the same mean: simply interpret reward $r\in [0,1]$ as reward $1$ with probability $r$, and as $0$ with probability $1-r$.
See \cite[Section 1.2]{agrawal2012analysis} for further explanation of this point.
\end{remark}

\subsection{Further Notation}

We use the convention that algorithms collect $1$ sample per unit time until terminating, so the time $t$ equivalently denotes the number of total samples collected so far. Denote by $n_{i,t}$ the number of samples of arm $a_i$ collected by time $t$. The $n$-th time $a_i$ is sampled, its reward is $r_{i,n}\in \{0,1\}$. The total reward of arm $i$ up to time $t$ is
\[
    R_{i,t} = \sum_{n=1}^{n_{i,t}} r_{i,n}.
\]
The corresponding average reward is $\hat p_{i,t}=\hat p_i(n_{i,t})=\frac{R_{i,t}}{n_{i,t}}$. We use $o_n(1)$ and $o_N(1)$ to denote quantities tending to $0$ as $n\to\infty$ or $N\to\infty$, with other parameters implicitly held constant. However in Section~\ref{sec:budget} we use \eg $\Omega_{\alpha,\varrho}$ to indicate an asymptotic lower bound with implicit constant factor depending on the values of $\alpha,\varrho$, which are treated as fixed. In all our uses of these notations it is $n$ or $N$ which is tending to infinity while other parameters are always treated as fixed.

\subsection{Related Work}

As discussed above, this work belongs to the area of \emph{pure exploration} for multi-armed bandit problems. Unlike ordinary bandit problems where one aims to minimize the regret compared to the best arm \cite{bubeck2012regret,slivkins2019introduction}, in pure exploration all that matters is the final arm selected by the algorithm. We survey several existing results below, with an emphasis on the high-probability regime of small $\delta$. See \eg Chapter 33 of \cite{lattimore2020bandit} for a more detailed survey.

Pure exploration was first studied in \cite{even2002pac,mannor2004sample,even2006action} in the probably-approximately-correct model. Here given $K$ arms, one adaptively obtains samples until choosing one of the arms to output -- the goal is to ensure that with probability $1-\delta$, this arm has average reward within $\eps$ of the best arm. These works showed that the optimal fixed confidence sample complexity is $\Theta\left(\frac{K}{\eps^{2}} \log \frac{1}{\delta}\right)$.

Later, \cite{bubeck2009pure} considered the \emph{simple regret} of pure exploration problems, namely the regret incurred at the final timestep.
\cite{audibert2010best} studied the closely related problem of identifying the best arm, obtaining nearly tight sample complexity bounds in terms of the the sum of the squared inverse suboptimality-gaps $H=\sum_{i\neq i^*}\Delta_i^{-2}$. 
Further upper and lower sample complexity bounds have been obtained in several works. For example \cite{chen2015optimal,kaufmann2016complexity} show that for fixed confidence, the sample complexity scales as $\Theta\lt(H\log(1/\delta)\rt)$ as $\delta\to 0$. The fixed budget setting, in which the number of adaptive samples is upper-bounded \emph{almost surely} rather than in expectation, turns out to be more difficult. \cite{carpentier2016tight} proved that the optimal fixed budget sample complexity can be $\Theta\lt(H \log(K)\log(1/\delta)\rt)$ as $\delta\to 0$, \ie the fixed budget constraint may lead to an additional $\log(K)$ factor. However it reverts to $\Theta\lt(H\log(1/\delta)\rt)$ when the value of $H$ is known beforehand. Many recent works have studied other aspects of pure exploration, for example by incorporating structured feedback; see \cite{jamieson2014lil,chen2017nearly,katz2020true,kocak2021epsilon,thaker2021pure,alieva2021robust,zhu2022near,aziz2022identifying}.

Infinite-armed bandits have also much received previous study, \eg \cite{berry1997bandit,wang2008algorithms}. Since near-optimal arms may be arbitrarily rare, it is natural to instead compare with a \textbf{quantile} of the arm distribution. For example \cite{chaudhuri2018quantile} aims to minimize regret relative to such a quantile.

The $(\eta,\eps,\delta)$-PAC guarantees we address in this paper were first studied in \cite{aziz2018pure}, for infinitely many arms in the fixed confidence setting. Their approach was to sample $K\asymp \frac{\log(1/\delta)}{\eta}$ arms and then apply a PAC algorithm for $K$-armed pure exploration. As discussed at the beginning of Section~\ref{sec:confidence}, the resulting algorithm ``pays twice'' for the high confidence level $1-\delta$ which leads to a suboptimal $O(\log^2(1/\delta))$ sample complexity upper bound. Top-$k$ extensions were also studied in \cite{ren2019exploring,chaudhuri2019pac}; the $\log^2(1/\delta)$ scaling is still present in their results.

Of particular note is the work \cite{de2021bandits}
which considers also both fixed budget and confidence settings and obtains somewhat similar looking results. However they restrict attention to a special class of reservoir distributions with supremum achieved by an atom of weight $p^*$, which must be $\Delta$-larger than the rest of the support.
This structural assumption of a $\Delta$-gap intrinsically reduces fixed budget sample complexity: their result (see Theorem 4 therein) is actually better than the lower bound we show in Theorem~\ref{thm:mainLB} as there is no $\log\log(1/\delta)$ term (\ie $\log(T)$ in their notation).

From their fixed budget estimate, \cite{de2021bandits} deduce (at the end of Section 1 therein) the same bound as Theorem~\ref{thm:fixedconfidence-intro} in their setting for the special case $(\eta,\eps)=(p^*,\Delta)$. 
Our result hence recovers their while allowing the PAC parameters to vary independently of the reservoir.
Moreover our Theorem~\ref{thm:mainLB} shows that for the general reservoirs we consider, passing from fixed budget to fixed confidence as they do is inherently suboptimal: the factors of $\log\log(1/\delta)$ would remain, but are extraneous for fixed confidence. 
% This underscores that Theorem~\ref{thm:fixedconfidence-intro} is genuinely new despite the superficial similarity with \cite{de2021bandits}, since their proof cannot work in our setting.

Finally \cite{grossman2016amplification} studied the infinite-arm pure exploration problem where $\alpha$ is given, also focusing on the $\delta\to 0$ asymptotics. They proposed an algorithm with fixed budget sample complexity $\mathcal O\big(\log(1/\delta)\big(\log\log(1/\delta)\big)^2\big)$, and asked whether the $\log\log(1/\delta)$ factors are necessary. Theorem~\ref{thm:mainLB} shows their bound is optimal up to constant factors in terms of $\delta$ and in fact obtains the tight constant. \cite{grossman2016amplification} were motivated by complexity theoretic applications to \emph{amplification} and \emph{derandomization}, where bandit arms correspond to random seeds.

We remark that the analysis in \cite{grossman2016amplification} seems to be technically incomplete. In particular in Lemma 4.5 of (the journal version of) their paper, they neglect to take a union bound over sequences $(T_1,\dots,T_k)$ summing to $T$ but only estimate the probability of each fixed sequence $(T_1,\dots,T_k)$. This is a genuine gap since the number of such sequences is exponentially large in $T$. However their idea to use a moving sequence of rejection thresholds was fundamentally correct and is similar to the main phase of our Algorithm~\ref{alg:main}. We give a fully rigorous, supermartingale-based analysis for our algorithm.

\subsection{Detailed Overview of Main Results}
\label{subsec:overview}

\subsubsection*{Near-Optimal Algorithm with Fixed Confidence}

Our first main result is Theorem~\ref{thm:fixedconfidence-intro} for the fixed confidence problem, where we aim to minimize \textbf{expected} sample complexity. Our algorithm begins by estimating $G^{-1}(1-\eta)$. This is achieved in the following proposition, which is a less explicit statement of Proposition~\ref{prop:alpha}.

\begin{proposition}
\label{prop:alpha-intro}
Given $0\leq \eta_1,\eta_2,\eps,\delta\leq 1/2$ with $\eta_2\leq \eta_1$, there exists an algorithm using $O\lt(\frac{\eta_1\log(1/\eta_2)\log(1/\delta)}{\eta_2^2\eps^2}\rt)$ samples outputting $\hat\alpha\in [0,1]$ such that with probability at least $1-\frac{\delta}{2}$, 
    \[
        \hat \alpha \in \lt[G^{-1}(1-\eta_1)-\frac{\eps}{3},~G^{-1}\lt(1-\eta_1+\eta_2\rt)+\frac{\eps}{3}\rt].
    \]
\end{proposition}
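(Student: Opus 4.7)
The plan is a two-stage sampling scheme. First, sample $K = \Theta\big(\eta_1\log(1/\delta)/\eta_2^2\big)$ arms from the reservoir $\mu$, obtaining iid means $p_1,\dots,p_K \sim \mu$. Second, sample each drawn arm exactly $m = \Theta\big(\log(1/\eta_2)/\eps^2\big)$ times, yielding empirical means $\hat p_1,\dots,\hat p_K$. The output is $\hat\alpha := \hat p_{(r)}$, the $r$-th largest $\hat p_i$, where $r := \lceil(\eta_1-\eta_2/2)K\rceil$. The total budget is $Km = O\big(\eta_1\log(1/\eta_2)\log(1/\delta)/(\eta_2^2\eps^2)\big)$, matching the target. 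The shifted rank $r$ in place of the natural choice $\lceil\eta_1 K\rceil$ is calibrated to the asymmetric target interval $[\alpha_1-\eps/3,\alpha_2+\eps/3]$, where $\alpha_1 := G^{-1}(1-\eta_1)$ and $\alpha_2 := G^{-1}(1-\eta_1+\eta_2)$.

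The correctness argument rests on three high-probability events, each with failure probability at most $\delta/6$ by Chernoff-type bounds on Binomials. Let $A := \#\{i : p_i\geq\alpha_1\}$, $B := \#\{i : p_i>\alpha_2\}$, and $N_{\mathrm{bad}} := \#\{i : |\hat p_i-p_i|>\eps/3\}$. From the definition of $G^{-1}$, $\P_\mu(p\geq\alpha_1)\geq\eta_1$ and $\P_\mu(p>\alpha_2)\leq\eta_1-\eta_2$; a KL-divergence Chernoff bound for $A$ and $B$ then yields $A\geq(\eta_1-\eta_2/10)K$ and $B\leq(\eta_1-9\eta_2/10)K$, each with tail probability $\exp(-\Omega(\log(1/\delta)))$ given the choice of $K$. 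For $N_{\mathrm{bad}}$, Hoeffding with $m = \Theta(\log(1/\eta_2)/\eps^2)$ samples makes each per-arm estimation fail with probability at most $\eta_2/100$; a further Chernoff bound then gives $N_{\mathrm{bad}}\leq\eta_2 K/10$, using $\eta_2 K = \Omega(\log(1/\delta))$, which holds because $\eta_2 \leq \eta_1$.

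On the intersection of these three events, I verify the two-sided inclusion by a direct counting argument. For the lower bound $\hat\alpha\geq\alpha_1-\eps/3$: every good arm with $p_i\geq\alpha_1$ satisfies $\hat p_i\geq\alpha_1-\eps/3$, so the number of indices with $\hat p_i\geq\alpha_1-\eps/3$ is at least $A-N_{\mathrm{bad}}\geq(\eta_1-\eta_2/5)K>r$, hence the $r$-th largest $\hat p_i$ exceeds $\alpha_1-\eps/3$. For the upper bound $\hat\alpha\leq\alpha_2+\eps/3$: any index with $\hat p_i>\alpha_2+\eps/3$ is either bad or has true mean $p_i>\alpha_2$, so the number of such indices is at most $B+N_{\mathrm{bad}}\leq(\eta_1-4\eta_2/5)K<r$, hence the $r$-th largest $\hat p_i$ is at most $\alpha_2+\eps/3$.

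The main obstacle is avoiding the naive $\log^2(1/\delta)$ dependence. Requiring each arm's empirical mean to be $\eps/3$-accurate with per-arm confidence $1-\delta/(2K)$ would force $m = \Theta(\log(K/\delta)/\eps^2) = \Theta(\log(1/\delta)/\eps^2)$ samples per arm, yielding a spurious $\log^2(1/\delta)$ factor overall. The key trick is to set the per-arm failure probability to $\Theta(\eta_2)$, costing only $\log(1/\eta_2)$ samples per arm, and to control the total number of mis-estimated arms via a single Chernoff inequality. The shifted rank $r$ then absorbs the resulting $O(\eta_2 K)$ rank perturbation without sacrificing the inclusion. Balancing these two sources of slack against each other is the key quantitative insight of the proof.
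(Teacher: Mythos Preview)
Your proposal is correct and follows essentially the same approach as the paper's proof of Proposition~\ref{prop:alpha}: the same algorithm (sample $K=\Theta(\eta_1\log(1/\delta)/\eta_2^2)$ arms each $m=\Theta(\log(1/\eta_2)/\eps^2)$ times and return the $\lceil(\eta_1-\eta_2/2)K\rceil$-th largest empirical mean), the same three Chernoff-controlled events (the count $N_{\mathrm{bad}}$ of mis-estimated arms, and the counts of arms with $p_i\ge\alpha_1$ and $p_i>\alpha_2$), and the same counting argument to combine them. Your discussion of why per-arm confidence $\Theta(\eta_2)$ rather than $\Theta(\delta/K)$ avoids the $\log^2(1/\delta)$ blowup is exactly the point the paper emphasizes.
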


Note that there is a small error tolerance of $\eta_2$ in the quantile value in the result above. This is unavoidable since $\mu$ might put an arbitrarily small amount of mass near $G^{-1}(1-\eta_1)$ in which case the sample complexity to estimate $G^{-1}(1-\eta_1)$ could be arbitrarily large. It is possible to make additional assumptions on the reservoir distribution which rule out such behavior; see e.g. \cite[Assumption $2$]{wang2022beyond} where pure exploration for quantile estimation is studied.

When solving the fixed confidence problem we take $(\eta_1,\eta_2)=(\eta,\eta/2)$. By allowing for error in $\eta$, Proposition~\ref{prop:alpha-intro} works for all probability measures $\mu$ on $[0,1]$. Importantly, the estimate $\hat\alpha$ guaranteed in Proposition~\ref{prop:alpha-intro} turns out to be a good enough proxy for $G^{-1}(1-\eta)$ in the second stage of our algorithm. In this second stage we repeatedly choose a new arm, sample it $\wt O(\eps^{-2})$ times, and accept if the average reward is well above $\hat\alpha$. This leads to the result of Theorem~\ref{thm:fixedconfidence-intro} bounding the fixed confidence sample complexity by $O\lt(\frac{\log (1/\delta)\log(1/\eta)}{\eta\eps^2}\rt)$.

To contextualize this result, we recall the important work \cite{mannor2004sample} which showed $\Theta\lt(\frac{K\log(1/\delta)}{\eps^2}\rt)$ samples are necessary and sufficient for $(\eps,\delta)$-PAC pure exploration in the $K$-armed bandit problem with fixed confidence. Intuitively, one expects these problems to be related via $\eta\approx 1/K$. In fact the following infinite-arm analog was later shown.\footnote{\cite{aziz2018pure} states the result more generally. Specializing to $\bbP[p_i=\alpha]=\eta$ and $\bbP[p_i=\alpha-\eps]=1-\eta$ for any $1/4\leq\alpha\leq 3/4$ recovers the concrete statement of Proposition~\ref{prop:fixedconfidenceLB}.}
It follows that the guarantee of Theorem~\ref{thm:fixedconfidence-intro} is optimal up to the $\log(1/\eta)$ factor.
\begin{proposition}
{\cite[Theorem 1 and Remark 2]{aziz2018pure}}
\label{prop:fixedconfidenceLB}
    There exists an absolute constant $c>0$ such that the following holds. For any $1/4\leq \alpha\leq 3/4$ and $\eta,\delta\leq 1/10$ and for any pure exploration algorithm $\cA$ with expected sample complexity $N\leq \frac{c\log(1/\delta)}{\eta\eps^2}$, there exists a reservoir distribution such that $\cA$ fails to be $(\eta,\eps,\delta)$-PAC.
\end{proposition}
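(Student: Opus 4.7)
The plan is to use the explicit two-point reservoir from the footnote and reduce to a standard $K$-armed best-arm-identification lower bound. Fix $\alpha\in[1/4,3/4]$ and consider the reservoir $\mu^*$ putting mass $\eta$ on value $\alpha$ and mass $1-\eta$ on value $\alpha-2\eps$ (the factor of $2$ is absorbed in the constant $c$). Then $G_{\mu^*}^{-1}(1-\eta)=\alpha$ and the $(\eta,\eps,\delta)$-PAC condition forces $\cA$ to return a value-$\alpha$ (``good'') arm with probability $\geq 1-\delta$; the range $\alpha\in[1/4,3/4]$ ensures $\mathrm{KL}(\Ber(\alpha-2\eps)\,\|\,\Ber(\alpha))=\Theta(\eps^2)$.

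I would next set $K=\lceil c'/\eta\rceil$ for a small absolute constant $c'$, and condition on the event $E$ that exactly one of the first $K$ arms is good: $\bbP[E]=K\eta(1-\eta)^{K-1}=\Theta(1)$ by a Poisson-style calculation, and on $E$ the identity $I\in\{1,\ldots,K\}$ of the good arm is uniform. By the unconditional PAC guarantee together with $\bbP[E]=\Theta(1)$, on $E$ the algorithm must still output a good arm (either $a_I$ or some good arm in the tail $\{a_{K+1},a_{K+2},\ldots\}$) with conditional probability $\geq 1-O(\delta)$.

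To rule out $\cA$ ``bypassing'' the first $K$ arms, I would use the exchangeability of $\mu^*$ in arm labels together with a coupling: since fresh arms from the tail are statistically indistinguishable from unsampled arms within the first $K$, one may assume without loss of generality that $\cA$ processes arms strictly in the order $a_1,a_2,\ldots$, rejecting or outputting each before moving on. Under this sequential restriction, outputting a good arm on $E$ reduces exactly to identifying $I$ among the first $K$ arms -- the $K$-armed best-arm-identification problem of \cite{mannor2004sample} with gap $2\eps$. That work's lower bound then gives $\Omega(K\log(1/\delta)/\eps^2)=\Omega(\log(1/\delta)/(\eta\eps^2))$ on the conditional expected sample complexity, which transfers unconditionally (up to constants) since $\bbP[E]=\Theta(1)$. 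Choosing $c$ sufficiently small then contradicts $N\leq c\log(1/\delta)/(\eta\eps^2)$.

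The main obstacle will be the exchangeability-based sequential reduction: formally justifying that an adaptive interleaving of samples across arms cannot beat a one-arm-at-a-time policy. This should follow from the i.i.d.\ structure of $\mu^*$ -- any interleaved policy can be ``serialized'' by reordering its queries without altering the joint distribution of its view -- but some care is required to handle random stopping times and data-dependent decisions. A secondary technical point is invoking Mannor--Tsitsiklis conditional on $E$: the conditional failure probability is at most $\delta/\bbP[E]=O(\delta)$, which is enough to apply the bound with only a constant-factor loss.
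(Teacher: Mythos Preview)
The paper does not prove this proposition; it is quoted directly from \cite{aziz2018pure}, with the footnote indicating the specific two-point reservoir to which their general result is specialized. So there is no ``paper's own proof'' to compare against here, and I will simply assess your attempt.

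Your reduction has a genuine gap. After conditioning on $E$ (exactly one of the first $K$ arms is good), the tail arms $a_{K+1},a_{K+2},\ldots$ remain \iid draws from $\mu^*$ and hence almost surely contain infinitely many good arms. An algorithm can therefore satisfy the PAC guarantee on $E$ by ignoring the first $K$ arms and finding a good arm in the tail; ``outputting a good arm on $E$'' does \emph{not} reduce to identifying $I$ among $a_1,\ldots,a_K$, so the Mannor--Tsitsiklis bound cannot be invoked on that block alone. Your serialization argument does not repair this. First, the claim that ``fresh arms from the tail are statistically indistinguishable from unsampled arms within the first $K$'' holds only \emph{unconditionally}; once you condition on $E$, the first $K$ arms and the tail have different laws (exactly one good versus \iid from $\mu^*$), so the symmetry you need is broken precisely where you invoke it. Second, even unconditionally, exchangeability of \iid arms lets you assume arms are \emph{first touched} in index order, but not that each arm is fully resolved (rejected or output) before the next is opened --- an adaptive algorithm may revisit earlier arms, and there is no general simulation argument that makes one-at-a-time processing without loss.

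A working proof would proceed by a direct change-of-measure on the infinite-armed instance (in the style of Kaufmann--Capp\'e--Garivier, or as \cite{aziz2018pure} presumably does): lower-bound the expected number of samples on each arm the algorithm touches via the KL constraint needed to distinguish $\Ber(\alpha)$ from $\Ber(\alpha-\eps)$, and combine with the fact that $\Omega(1/\eta)$ arms must be touched to see a good one with probability $1-\delta$. This avoids any block-reduction and the attendant issues with the tail.
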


Because the lower bound above is at least $1/\eta$, Theorem~\ref{thm:fixedconfidence-intro} is nearly optimal for any choice of parameters $(\eta,\eps,\delta)$. Namely, the expected sample complexity of our algorithm is at most $O(L\log L)$ where $L=\frac{\log(1/\delta)}{\eta\eps^2}$ is the lower bound in Proposition~\ref{prop:fixedconfidenceLB}. Prior to our work there was a quadratic gap as the best upper bound \cite[Theorem 6]{aziz2018pure} was proportional to $\log^2(1/\delta)$.

\subsubsection*{Optimal Algorithm with Fixed Budget for Known $\alpha$}

Our second main result addresses the fixed budget problem. Throughout, we say that $\cA$ is an $N$-sample algorithm if it acquires exactly $N$ samples almost surely, thereby adhering to the fixed budget constraint. As we show, in this case the sample complexity is strictly superlinear in $\log(1/\delta)$, even if the target value $\alpha$ is known beforehand, and moreover the results hold uniformly for sequences $\alpha_N$ in any compact subset of $(\eps,1)$. This implies that the quantile $\eta$ does not affect the asymptotic sample complexity, since the dependence of Proposition~\ref{prop:alpha-intro} on $\eta$ is dwarfed by the main term.

We first state our results in the easier case that $\alpha$ is given. Theorem~\ref{thm:main} and \ref{thm:mainLB} imply that given a target value $\beta=\alpha-\eps$, and the knowledge that $\bbP[p_i\geq \alpha]$ is bounded below, the best possible asymptotic success probability is 
\[
    1-\exp\left(-\frac{c_{\alpha,\beta}N\cdot(1+o(1))}{\log^2(N)}\right).
\]
We have chosen to write $\beta$ for $\alpha-\eps$ when $\alpha$ is given, since then the value $\eps$ plays no role. Equivalently, the optimal fixed budget sample complexity for an $(\eta,\eps,\delta)$-PAC guarantee is 
\[
    (c_{\alpha,\beta}^{-1}\pm o(1))\log(1/\delta)\log\log(1/\delta)^2.
\]
This recovers the statement in the abstract. The appearance of the factor $\log^2 N$ comes as a surprise and seems to be a new behavior for pure exploration bounds.

\begin{theorem}
\label{thm:main}
For any fixed $0<\beta<\alpha<1$, there is a sequence $(\alg_N)_{N\ge 1}$ of $N$-sample algorithms given explicitly by Algorithm~\ref{alg:main} such that for any $\eta\in (0,1)$ and any sequence of reservoir distributions $\mu_N$ with $G_{\mu_N}^{-1}(1-\eta)\geq \alpha$, 
\begin{align}
    \label{eq:main-algorithm-guarantee}
        \limsup_{N\to\infty} &\frac{(-\log \bbP[p_{i^*}< \beta])\cdot \log^2 N}{N}
        \geq
        c_{\alpha,\beta}\,;
        \\
    \label{eq:calpha}
        c_{\alpha,\beta}
        &\equiv
        \frac{\left(\int_{\beta}^{\alpha} \frac{dx}{\sqrt{x(1-x)}}\right)^2}{2}
        =
        \frac{\big(\arccos(1-2\alpha)-\arccos(1-2\beta)\big)^2}{2}
        \,.
\end{align}
\end{theorem}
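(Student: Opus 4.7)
The plan is to analyze an algorithm that tests arms sequentially with a moving acceptance band. Fix a per-arm budget $T_{\max}=T_{\max}(N)$ with $T_{\max}=\Theta(\log^2 N)$, specifically $T_{\max} \sim \log^2 N / c_{\alpha,\beta}$ up to lower-order corrections. For each new arm, sample it one pull at a time; at sample time $t\le T_{\max}$ compute $\hat p_{i,t}$ and compare to a threshold $\tau(t)$ which interpolates from $\beta$ at $t=1$ to $\alpha$ at $t=T_{\max}$, chosen to be a straight line in the arc-sine coordinate $\phi(p)=\arcsin(\sqrt{p})$ (the variance-stabilizing / Fisher-natural parameter). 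If $\hat p_{i,t}<\tau(t)$, reject the arm and move on; if the arm survives all $T_{\max}$ checks, output it. If the overall budget $N$ is exhausted first, output an arbitrary arm (this is a negligible-probability failure mode). The role of $\phi$ is suggested directly by the identity $c_{\alpha,\beta}=2(\phi(\alpha)-\phi(\beta))^2$ in \eqref{eq:calpha}.

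\paragraph{Bad-arm rejection (main technical content).} The key estimate is that any arm with true mean $p\le \beta$ survives all $T_{\max}$ tests with probability at most $\exp(-c_{\alpha,\beta}T_{\max}(1-o(1)))$. This is proved by an exponential supermartingale argument: choose a time-dependent tilt $\lambda_t$ together with a deterministic normalizer $\Psi_t$ so that
\[
    M_t \;=\; \exp\!\big(\lambda_t(R_{i,t}-\tau(t)\cdot t) - \Psi_t\big)
\]
is a nonnegative supermartingale whenever the true mean is $\leq \beta$. The tilt $\lambda_t$ is selected (via the Bernoulli MGF) so that the per-step log-MGF of the increment $r_{i,t+1}-\tau(t+1)+\tau(t)$ exactly cancels $\Psi_{t+1}-\Psi_t$ under the worst-case mean $\beta$; the arc-sine-linear choice of $\tau$ makes the total Esscher correction $\Psi_{T_{\max}}$ equal to $c_{\alpha,\beta}T_{\max}(1+o(1))$. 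Ville's maximal inequality applied to $M_t$ then yields the desired \emph{uniform-in-$t$} survival bound. A plain Chernoff bound at the single time $T_{\max}$ would give the right rate only pointwise; one has to fold in the full trajectory, which is exactly what the supermartingale structure achieves.

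\paragraph{Good-arm survival and budget accounting.} For an arm with mean $p\geq \alpha$, the centered process $R_{i,t}-pt$ is a martingale with $O(\sqrt{t})$ fluctuations, and the gap $p\cdot t-\tau(t)\cdot t$ is of the same order on the scale chosen above. A standard Gaussian/Donsker approximation, together with a ballot-type excursion lower bound, gives that a good arm survives the entire test with probability at least $(\log N)^{-a}$ for some constant $a$ (or even a positive constant if $\tau$ is pushed slightly below the martingale mean). Now fix the overall budget: because bad arms are rejected in $O(1)$ expected samples (the first-step drift pushes them below $\tau(1)$ with constant probability), the number $K$ of arms tested before exhausting the $N$-sample budget satisfies $K \gtrsim N/T_{\max}$ with overwhelming probability. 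By the reservoir assumption $G_{\mu_N}^{-1}(1-\eta)\geq\alpha$, a $\eta$-fraction of arms are good, so the algorithm encounters $\gtrsim \eta K$ good arms; conditional on this, the probability none survives is exponentially small in $\eta K (\log N)^{-a}$, which is $o(\exp(-c_{\alpha,\beta}N/\log^2 N))$. The probability that \emph{any} of the at-most-$K$ bad arms encountered gets accepted is, by a union bound and the previous step, at most $K\exp(-c_{\alpha,\beta}T_{\max}(1-o(1)))\leq \exp(-c_{\alpha,\beta}N/\log^2 N \cdot (1-o(1)))$ once $T_{\max}$ is calibrated to absorb the $\log K$ loss. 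Summing these two contributions gives \eqref{eq:main-algorithm-guarantee}.

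\paragraph{Main obstacle.} The hardest and most delicate step is the supermartingale construction: one must jointly tune $\tau(t)$, $\lambda_t$, and $\Psi_t$ so that (i) $M_t$ really is a supermartingale for \emph{every} $p\le\beta$ (not just $p=\beta$), (ii) the resulting uniform bound has the \emph{sharp} rate $c_{\alpha,\beta}$ rather than a weaker KL- or Hellinger-based constant, and (iii) the same $\tau$ leaves enough room for good arms to survive with at least polylogarithmic probability, so that the good-arm count is not the bottleneck. This is where the arc-sine/Fisher-information geometry becomes essential: the integral representation $c_{\alpha,\beta}=\tfrac12\bigl(\int_\beta^\alpha dx/\sqrt{x(1-x)}\bigr)^2$ in \eqref{eq:cintegral} is precisely the action along the geodesic, and only by letting $\tau$ follow this geodesic in the $\phi$-coordinate does one obtain the matching exponent. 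The earlier fixed-confidence tools and Proposition~\ref{prop:alpha-intro} are not needed in this phase, since $\alpha$ is assumed given in Theorem~\ref{thm:main}.
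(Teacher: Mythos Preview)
There is a basic accounting error that breaks the argument. You set $T_{\max}\sim \log^2 N/c_{\alpha,\beta}$ and then assert
\[
K\,\exp\bigl(-c_{\alpha,\beta}T_{\max}(1-o(1))\bigr)\;\le\;\exp\bigl(-c_{\alpha,\beta}N/\log^2 N\cdot(1-o(1))\bigr).
\]
But with your calibration the left side has exponent of order $-\log^2 N$, while the right side has exponent of order $-N/\log^2 N$; since $N/\log^2 N\gg\log^2 N$, the inequality goes the wrong way by an enormous factor. A per-arm cap of $\Theta(\log^2 N)$ samples can only certify a single bad arm to accuracy $\exp(-\Theta(\log^2 N))$, which is nowhere near the target. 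If instead you push $T_{\max}$ up to $\Theta(N/\log^2 N)$ to repair the bad-arm side, then within budget $N$ you can fully test at most $O(\log^2 N)$ good arms, each surviving with at best constant probability, so the ``no good arm accepted'' failure mode is again only $\exp(-O(\log^2 N))$. No fixed per-arm budget can balance these two failure modes at the rate $\exp(-cN/\log^2 N)$; this is the structural obstruction your proposal misses.

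The paper's algorithm is therefore organized quite differently. There is \emph{no} per-arm cap: each arm is sampled at geometrically growing checkpoints $b_k\approx b_0(1+\varrho)^k$, so a single arm may consume up to $\Omega(N)$ samples, and the threshold \emph{decreases} from near $\alpha$ toward $\beta$ (in the Fisher/arcsine coordinate) as $k$ grows---the opposite direction from yours. The supermartingale lives not inside one arm but \emph{across} arms: with $X_i$ the number of samples arm $i$ consumes before rejection and $I_i$ the indicator that it is eventually rejected, the paper shows
\[
\mathbb{E}\Bigl[\exp\bigl(X_i\,(c_{\alpha,\beta}-o(1))/\log^2 N\bigr)\cdot I_i\Bigr]\;\le\;1,
\]
and then Doob's maximal inequality applied to the products $\prod_{i\le j}e^{cX_i/\log^2 N}I_i$ bounds the probability that the \emph{total} time on eventually-rejected arms exceeds $N(1-\varrho)$ by $\exp(-(c_{\alpha,\beta}-o(1))N/\log^2 N)$. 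That is where the $N/\log^2 N$ exponent genuinely comes from. The moment bound itself is established not via an intra-arm Ville inequality but via \emph{exchangeability}: conditional on $\hat p_{i,b_{k_0+j}}$, the earlier empirical means $\hat p_{i,b_{k_0+j-\ell}}$ are hypergeometric (independently of $\mu$ and of the unknown $p_i$), and moderate-deviation bounds for those control, for each $j$, the probability of rejection at exactly batch $j$. The Fisher geometry enters in calibrating how fast the threshold is allowed to drift so that these per-batch costs sum to the sharp constant.
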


\begin{remark}
\label{rem:main-unif}
In fact uniformity in $(\alpha,\beta)$ holds in the following sense. For any sequence $(\alpha_N,\beta_N)_{N\geq 1}$ of pairs with $\min\big(\beta_N,\alpha_N-\beta_N,1-\alpha_N\big)$ uniformly bounded below, there is a sequence $(\alg_N)_{N\ge 1}$ of $N$-sample algorithms such that for any $\eta\in (0,1)$ and any sequence of reservoir distributions $\mu_N$ with $G_{\mu_N}^{-1}(1-\eta)\geq \alpha_N$, 
    \begin{equation}
    \label{eq:main-algorithm-guarantee-uniform}
        \limsup_{N\to\infty} 
        \frac{(-\log \bbP[p_{i^*}< \beta_N])\cdot \log^2 N}
        { c_{\alpha_N,\beta_N}\, N}
        \geq
        1.
    \end{equation}
This can be shown identically to Theorem~\ref{thm:main}, though we don't give the proof in this generality. It is useful for the reduction arguments in Theorems~\ref{thm:alg-1}, \ref{thm:alg-2}, and \ref{thm:alg-3}.
\end{remark}

Conversely, the following lower bound applies for any quantile $\eta\in (0,1)$, and holds even when $\alpha$ is known. It implies that $\eta$ is asymptotically irrelevant for fixed budget sample complexity, i.e. the sample complexity of approximating the $\eta=0.01$-quantile and $\eta=0.99$ quantile in fixed budget pure exploration depends only on the quantile values themselves as $\delta\to 0$.

\begin{theorem}
\label{thm:mainLB}
For any $0<\lambda,\eta<1$ and $0<\beta<\alpha<1$ there exists a \textbf{fixed} reservoir distribution $\mu$ satisfying $\alpha=G_{\mu}^{-1}(1-\eta)$ such that for any sequence of $N$-sample algorithms $\cA_N$,
    \begin{equation}
    \label{eq:main-LB-guarantee}
        \liminf_{N\to\infty} \frac{(-\log\bbP[p_{i^*}< \beta])\cdot \log^2 N}{N}
        \leq 
        c_{\alpha,\beta}+\lambda
        .
    \end{equation}
\end{theorem}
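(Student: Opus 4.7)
The plan is to construct a specific hard-instance reservoir with two atoms and establish the lower bound via a careful information-theoretic argument.

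By continuity of $c_{\alpha,\beta}$ in $\beta$ via \eqref{eq:cintegral}, first pick $\beta_0 \in (0,\beta)$ sufficiently close to $\beta$ so that $c_{\alpha,\beta_0} \le c_{\alpha,\beta}+\lambda$, and pick $q \in (\eta,1)$ slightly above $\eta$. Set $\mu = (1-q)\delta_{\beta_0} + q\delta_\alpha$. Since $q > \eta$, one has $G_\mu^{-1}(1-\eta) = \alpha$, so $\mu$ meets the theorem's hypothesis; under $\mu$ the failure event $\{p_{i^*}<\beta\}$ is exactly $\{p_{i^*} = \beta_0\}$, which has nontrivial probability.

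For each sampled arm $i$, let
\[
L_i(t) = \sum_{s=1}^{t}\log\frac{P_\alpha(r_{i,s})}{P_{\beta_0}(r_{i,s})}
\]
be the log-likelihood ratio. Under a bad arm, $\exp L_i(t)$ is a non-negative martingale, so Doob's maximal inequality gives $\P[\sup_{t\le T_i}L_i(t) \ge \tau] \le e^{-\tau}$. For any algorithm with failure probability $\le \delta$, a Bayesian posterior argument forces the output arm to have $L_{i^*}(T_{i^*}) \gtrsim \log(1/\delta)$ except on a set of measure $O(\delta)$. Combined with the requirement that at least $\Omega(\log(1/\delta)/q)$ distinct arms be sampled so that at least one is good with probability $\ge 1-\delta$, this yields a tradeoff between \emph{breadth} (number of arms tested) and \emph{depth} (samples per arm).

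A naive multiplicative combination of these two constraints gives only $N \gtrsim \log^2(1/\delta)$, equivalently a $\sqrt{N}$-rate lower bound on the exponent. To extract the tight $(\log\log(1/\delta))^2$ overhead, I would dualize the moving-threshold supermartingale structure used in the upper-bound Algorithm~\ref{alg:main}: arms are partitioned into ``evidence levels'' by how much log-LR they accumulate, and at each level the surviving population and amortized per-arm cost jointly satisfy a sharp Doob-type inequality. Carrying out this multi-phase bookkeeping yields
\[
N \ge \bigl(c_{\alpha,\beta_0}^{-1} - o(1)\bigr)\log(1/\delta)\bigl(\log\log(1/\delta)\bigr)^2,
\]
and rearranging plus taking $\liminf$ as $N \to \infty$ gives $\liminf_N \tfrac{(-\log\P[\text{fail}])\cdot \log^2 N}{N} \le c_{\alpha,\beta_0} \le c_{\alpha,\beta}+\lambda$.

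The main technical obstacle is exactly this $(\log\log(1/\delta))^2$ factor: generic change-of-measure techniques (Fano, Pinsker, Bretagnolle--Huber) only yield $\exp(-\Theta(N))$ lower bounds, which are quantitatively much weaker. Obtaining the sharp rate requires a delicate adaptive martingale argument that tracks the algorithm's sequential structure across $\Theta(\log\log(1/\delta))$ phases, dual to the upper-bound construction; this is where I would expect the bulk of the work to lie.
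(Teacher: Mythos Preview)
Your choice of reservoir is fatal. A two-atom reservoir $\mu=(1-q)\delta_{\beta_0}+q\delta_\alpha$ has a gap $\alpha-\beta_0>0$ between the top atom and the rest of the support, and the paper explicitly notes (see the discussion of \cite{de2021bandits} in the related-work section) that such $\Delta$-gap reservoirs admit fixed-budget algorithms with \emph{no} $\log\log(1/\delta)$ overhead: one can run an SPRT on each arm in turn, rejecting in $O(1)$ samples when the likelihood ratio favors $\beta_0$ and accepting after $\Theta(\log(1/\delta))$ samples when it favors $\alpha$. For your $\mu$ there thus exist algorithms with $-\log\bbP[p_{i^*}<\beta]=\omega(N/\log^2 N)$, so the quantity in \eqref{eq:main-LB-guarantee} tends to $+\infty$ along such a sequence and the conclusion of Theorem~\ref{thm:mainLB} is simply false for this reservoir. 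The $(\log\log)^2$ factor the theorem captures arises precisely because arms can have quality arbitrarily close to but below $\alpha$, so that an arm's empirical mean can degrade slowly over many scales without ever being decisively bad; a discrete reservoir with a gap eliminates this phenomenon. Beyond this, your ``multi-phase bookkeeping dual to Algorithm~\ref{alg:main}'' is not carried out at all, as you acknowledge; that is the entire content of the proof, not a detail.

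The paper's argument is quite different in both ingredients. It uses an \emph{admissible} reservoir with density bounded above and below on an interval $[\ugamma,\ogamma]\supset[\beta,\alpha]$, and it does not proceed by change of measure on likelihood ratios. Instead it introduces a ``randomness-distorting adversary'' who, at each batch of samples, \emph{declares} that a certain event holds, paying cost $-\log(\text{posterior probability of the event})$; a short supermartingale argument (Lemma~\ref{lem:adversary-abstract}) shows that an adversary of total cost $\Cost$ who forces failure almost surely certifies $\bbP[\text{fail}]\ge e^{-\Cost}$. The adversary's strategy is to declare that $\theta(\hat p_i(n))$ decreases by $\Theta(\varrho/\log N)$ each time $n$ grows by a factor $1+\varrho$, where $\theta$ is the Fisher reparametrization \eqref{eq:theta}. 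Because the reservoir has continuous density, the posterior on $p_i$ tracks the current $\hat p_i$, so each declaration is only a moderate deviation of size $\Theta(1/\log N)$ from the \emph{current} posterior mean; Lemma~\ref{lem:moderate} then gives cost $\Theta(1/\log^2 N)$ per sample, and the schedule along the $\theta$-scale makes the total exactly $(c_{\alpha,\beta}+O(\varrho))N/\log^2 N$. The continuity of the reservoir is what allows the posterior to follow $\hat p_i$ through all intermediate values in $[\beta,\alpha]$, and this path-integral is what produces both the Fisher-distance constant and the $\log^2 N$ denominator.
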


\subsubsection*{Fixed Budget with Unknown $\alpha$}

We consider Theorem~\ref{thm:mainLB} to be a definitive lower bound, since e.g. being given the value of $\alpha$ only makes the result stronger. When $\alpha$ is unknown, it is possible to give an essentially matching algorithm, but more care is required when stating the result. This is inherent and stems from the fact that the value $\alpha=G_{\mu}^{-1}(1-\eta)$ can be difficult or even impossible to estimate, yet determines the constant $c_{\alpha,\beta}$ in the desired rate.

Let us illustrate the issue by a counterexample. Consider $\mu_N$ defined by:
\begin{equation}
\label{eq:psimmuN}
\begin{aligned}
    \bbP^{p\sim \mu_N}[p=0.4] &= \frac{1}{2}+e^{-10N},
    \\
    \bbP^{p\sim \mu_N}[p=0.2] &= \frac{1}{2}-e^{-10N}.
\end{aligned}
\end{equation}
Similarly define $\tmu_N$ by:
\begin{equation}
\label{eq:psimtmuN}
\begin{aligned}
    \bbP^{p\sim \tmu_N}[p=0.4] &= \frac{1}{2}-e^{-10N},
    \\
    \bbP^{p\sim \tmu_N}[p=0.3] &= 2e^{-10N},
    \\
    \bbP^{p\sim \tmu_N}[p=0.2] &= \frac{1}{2}-e^{-10N}.
\end{aligned}
\end{equation}
Then $\mu_N$ and $\tmu_N$ are not distinguishable using $N$ samples, yet $G_{\mu}^{-1}(1/2)=0.4$ while $G_{\tmu}^{-1}(1/2)=0.3$. Using non-distinguishability it follows that the lower bound of Theorem~\ref{thm:mainLB} applies to $\tmu_N$ with threshold $\alpha=G_{\mu_N}^{-1}(1/2)=0.4$, as opposed to the direct application using $G_{\tmu_N}^{-1}(1/2)=0.3$. It is not hard to show using monotonicity of $\frac{1}{\sqrt{x(1-x)}}$ that 
\[
    c_{0.4,0.4-\eps} < c_{0.3,0.3-\eps}
\]
for all $\eps\leq 0.3$. As a result, it is information-theoretically \textbf{impossible} to achieve the rate \eqref{eq:main-algorithm-guarantee} for $\tmu_N$ if the target quantile value $\alpha$ is not given. The core reason is that the value $G_{\tmu}^{-1}(1/2)=0.3$ is too sensitive to the choice $\eta=1/2$ of quantile.

Fortunately, this issue is more of an annoyance than a real difficulty. It can be fixed in several ways. In Theorems~\ref{thm:alg-1}, \ref{thm:alg-2}, and \ref{thm:alg-3} below we give three concrete formulations under which the guarantee \eqref{eq:main-algorithm-guarantee} can be achieved. Informal descriptions are as follows:
\begin{enumerate}
    \item $\alpha$ is taken to be the average of $G_{\mu}^{-1}(\eta)$ for $\eta$ ranging over an interval.
    \item $\mu_N$ is required to satisfy $G_{\mu_N}^{-1}(1-\eta)\geq \frac{1+\eps}{2}$.
    \item $\mu_N=\mu$ is independent of $N$, and the targeted value is
    \[
        \beta=\alpha-\eps=\mu^*-\eps_1
    \]
    for fixed $\eps_1>\eps$. Here $\mu^*$ is the essential supremum of $\mu$, i.e. the maximum value in its support. 
\end{enumerate}

We emphasize that the rate \eqref{eq:main-algorithm-guarantee} is optimal in all cases since the lower bound of Theorem~\ref{thm:mainLB} is for an easier problem. The first formulation above may be the most principled choice. The idea is that an averaged quantile depends continuously on $\mu$, and can in fact be estimated by applying Proposition~\ref{prop:alpha} for several pairs $(\eta_1,\eta_2)$ and computing a Riemann sum. The second formulation requires only the mild condition that $\alpha\geq \frac{1+\eps}{2}$ and uses monotonicity of $c_{\alpha,\alpha-\eps}$ on this set. (In other words, if the average reward values $p$ appearing in \eqref{eq:psimmuN}, \eqref{eq:psimtmuN} were larger than $0.5$, there would be no counterexample.)
The third formulation allows us to almost send $\eta$ all the way down to $0$. It uses the fact that 
\[
    \mu^*-(\eps_1-\eps)\leq G_{\mu}^{-1}(1-\eta')
\] 
for some $\eta'=\eta'(\mu,\eps_1,\eps)>0$. These results show that \eqref{eq:main-algorithm-guarantee} is achievable even without knowledge of $\alpha$, up to a choice of technical modification to sidestep the counterexample discussed above.

\begin{theorem}
\label{thm:alg-1}
    For fixed $\eta_1,\eta_2,\eps$, there is a sequence $(\alg_N)_{N\geq 1}$ of $N$-sample algorithms outputting $a_{i^*}$ such that the following holds for any sequence $(\mu_N)_{N\geq 1}$ of reservoir distributions. Letting
    \[
        \alpha_N=\frac{1}{\eta_1-\eta_2}\cdot \int_{1-\eta_1}^{1-\eta_2}
        G_{\mu_N}^{-1}(x) dx
    \]
    be a quantile average of $\mu_N$, we have
    \begin{equation}
    \label{eq:main-algorithm-guarantee-v1}
        \limsup_{N\to\infty} \frac{(-\log \bbP[p_{i^*}< \alpha_N-\eps])\cdot \log^2 N}{c_{\alpha_N,\alpha_N-\eps}\,N}
        \geq
        1
        .
    \end{equation}
\end{theorem}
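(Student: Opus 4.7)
The plan is to reduce to the uniform ``known-$\alpha$'' algorithm of Remark~\ref{rem:main-unif} via a cheap preliminary phase that consistently estimates the averaged quantile $\alpha_N$. Averaging $G_{\mu_N}^{-1}$ over a fixed interval $[1-\eta_1,1-\eta_2]$ smooths out the atom-level discontinuities exploited by the counterexample \eqref{eq:psimmuN}--\eqref{eq:psimtmuN}, so $\alpha_N$ is a genuinely estimable functional of $\mu_N$ in a way that a single quantile $G_{\mu_N}^{-1}(1-\eta)$ is not.

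Concretely, pick $\zeta_N=1/\log\log N$ and $k_N=\lceil 1/\zeta_N\rceil$, partition the quantile window into nodes $\eta_1=\eta^{(0)}>\cdots>\eta^{(k_N)}=\eta_2$ of equal spacing, and for each $j$ invoke Proposition~\ref{prop:alpha-intro} with parameter tuple $(\eta^{(j)},\,\eta^{(j)}-\eta^{(j+1)},\,\zeta_N,\,\delta_N)$ where $\log(1/\delta_N)\asymp N/\log^2 N$. By a union bound, with probability $1-o(\exp(-c_{\alpha_N,\alpha_N-\eps}N/\log^2 N))$ every estimate $\hat\alpha_N^{(j)}$ lies in $[G_{\mu_N}^{-1}(1-\eta^{(j)})-\zeta_N/3,\,G_{\mu_N}^{-1}(1-\eta^{(j+1)})+\zeta_N/3]$. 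Setting $\hat\alpha_N=\frac{1}{k_N}\sum_j\hat\alpha_N^{(j)}$ then yields $|\hat\alpha_N-\alpha_N|=O(\zeta_N)$ on this good event, since the Riemann sum of a monotone $[0,1]$-valued function incurs error at most $1/k_N$. A calculation shows the total samples used in this phase are $\mathrm{polylog}(N)\cdot N/\log^2 N=o(N)$.

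In the second phase, allocate the remaining $N_2=N-o(N)$ samples to the uniform algorithm of Remark~\ref{rem:main-unif} with targets $\alpha'_N=\hat\alpha_N-\xi_N$ and $\beta'_N=\hat\alpha_N-\eps+\xi_N$, where $\xi_N$ majorizes the estimation error on the good event. Then $\alpha'_N\le\alpha_N\le G_{\mu_N}^{-1}(1-\eta_2)$, so the hypothesis of Remark~\ref{rem:main-unif} holds with quantile $\eta_2$, and on the algorithmic success event $p_{i^*}\ge\beta'_N\ge\alpha_N-\eps$. Combining both failure sources,
\[
\P[p_{i^*}<\alpha_N-\eps]\le \exp\!\Big(-c_{\alpha'_N,\beta'_N}\,\tfrac{N_2}{\log^2 N_2}(1-o(1))\Big)+o\!\Big(\exp\!\big(-c_{\alpha_N,\alpha_N-\eps}\,N/\log^2 N\big)\Big).
\]
Continuity of $c_{\alpha,\beta}$ on the open triangle $\{0<\beta<\alpha<1\}$ gives $c_{\alpha'_N,\beta'_N}/c_{\alpha_N,\alpha_N-\eps}\to 1$, which after taking logarithms and multiplying by $\log^2 N / (c_{\alpha_N,\alpha_N-\eps}N)$ yields exactly \eqref{eq:main-algorithm-guarantee-v1}.

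The main obstacle is the tight balance in the first phase: because the target rate is already stretched-exponential in $N/\log^2 N$, the estimation failure probability must itself be $o(\exp(-cN/\log^2 N))$, not merely $o(1)$. Fortunately Proposition~\ref{prop:alpha-intro} has only logarithmic dependence on $\delta$, so we can afford $\log(1/\delta)\asymp N/\log^2 N$ while keeping the total estimation budget at $\mathrm{polylog}(N)\cdot N/\log^2 N=o(N)$. A secondary technical point is that the $\limsup$ in \eqref{eq:main-algorithm-guarantee-v1} involves a moving constant $c_{\alpha_N,\alpha_N-\eps}$; local Lipschitzness of $c$ handles this uniformly whenever $\alpha_N$ stays in a fixed compact subset of $(\eps,1)$, which is the regime where the claim is non-trivial.
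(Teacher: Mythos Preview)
Your approach is essentially the same as the paper's: both run Proposition~\ref{prop:alpha} at a grid of quantile levels inside $[\eta_2,\eta_1]$ to form a Riemann-sum estimate $\hat\alpha_N$ of the averaged quantile, with confidence parameter $\delta$ tuned so that $\log(1/\delta)\asymp N/\log^2 N$ (so the estimation failure probability is negligible against the main rate and the sample cost stays $o(N)$), and then feed $\hat\alpha_N$ into the known-$\alpha$ algorithm of Theorem~\ref{thm:main}/Remark~\ref{rem:main-unif} on the remaining budget.

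The one noteworthy difference is the grid size. The paper fixes the number of nodes $J=\lceil 6/(\eps(\eta_1-\eta_2))\rceil$ independently of $N$, so its Riemann-sum error is $\Theta(1/J)=\Theta(\eps)$, not $o(1)$; the reduction line ``applying Theorem~\ref{thm:main} with $\alpha=\hat\alpha_A$'' then leaves implicit how to recover the exact constant $c_{\alpha_N,\alpha_N-\eps}$ rather than a slightly degraded one. Your choice $k_N\to\infty$ (so $\xi_N\to 0$) together with the explicit shifts $\alpha'_N=\hat\alpha_N-\xi_N$, $\beta'_N=\hat\alpha_N-\eps+\xi_N$ makes this step clean: you simultaneously guarantee the quantile hypothesis $\alpha'_N\le G_{\mu_N}^{-1}(1-\eta_2)$, the inclusion $\beta'_N\ge\alpha_N-\eps$, and via continuity $c_{\alpha'_N,\beta'_N}/c_{\alpha_N,\alpha_N-\eps}\to 1$. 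In this respect your write-up is slightly more careful than the paper's sketch, though the underlying idea is identical.
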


\begin{theorem}
\label{thm:alg-2}
    For fixed $\eta,\eps$, there is a sequence $(\alg_N)_{N\geq 1}$ of $N$-sample algorithms outputting $a_{i^*}$ such that for any sequence of reservoir distributions $\mu_N$ satisfying 
    \[
        \alpha_N\equiv G_{\mu_N}^{-1}(1-\eta)\geq \frac{1+\eps}{2},
    \]
    we have
    \begin{equation}
    \label{eq:main-algorithm-guarantee-v2}
        \limsup_{N\to\infty} \frac{(-\log \bbP[p_{i^*}< G_{\mu_N}^{-1}(1-\eta)-\eps])\cdot \log^2 N}{c_{\alpha_N,\alpha_N-\eps}\,N}
        \geq
        1
        .
    \end{equation}
\end{theorem}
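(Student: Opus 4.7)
The plan is to reduce Theorem~\ref{thm:alg-2} to the known-$\alpha$ setting of Theorem~\ref{thm:main} (via the uniform variant in Remark~\ref{rem:main-unif}) using an estimate-then-run algorithm. The hypothesis $\alpha_N \geq (1+\eps)/2$ is what makes the reduction work: it rules out counterexamples like \eqref{eq:psimmuN}--\eqref{eq:psimtmuN} through a monotonicity property of the rate constant.

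First I would verify that $\alpha \mapsto c_{\alpha,\alpha-\eps}$ is continuous and non-decreasing on $[(1+\eps)/2, 1)$. Writing $f(x)=1/\sqrt{x(1-x)}$, one computes $\frac{d}{d\alpha}\int_{\alpha-\eps}^{\alpha} f(x)\,dx = f(\alpha) - f(\alpha-\eps)$, and since $f(x)=f(1-x)$ and $f$ is strictly increasing on $[1/2, 1)$, this derivative is $\geq 0$ iff $\max(\alpha, 1-\alpha) \geq \max(\alpha-\eps, 1-\alpha+\eps)$, which reduces to $\alpha \geq (1+\eps)/2$. In particular, any underestimate $\alpha_N' \leq \alpha_N$ in this range with $\alpha_N - \alpha_N' = o(1)$ yields $c_{\alpha_N',\alpha_N'-\eps} = (1-o(1))\,c_{\alpha_N,\alpha_N-\eps}$ by continuity.

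Phase~$1$ will spend $N_1 = \log^{O(1)} N = o(N)$ samples invoking Proposition~\ref{prop:alpha} with $\eta_1 = \eta + \eta_2$, $\eta_2 = \eps_1 = 1/\log\log N$, and confidence $\delta_1 = N^{-2}$; for these parameter choices the quoted sample complexity is $o(N)$. The output $\hat\alpha_N$ satisfies $\hat\alpha_N \leq \alpha_N + \eps_1/3$ on a good event of probability $1-N^{-2}$, so setting $\alpha_N' := \hat\alpha_N - \eps_1/3$ guarantees $\alpha_N' \leq \alpha_N$ and validates the assumption $G_{\mu_N}^{-1}(1-\eta) \geq \alpha_N'$ of Theorem~\ref{thm:main}. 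Phase~$2$ then uses the remaining $N - N_1 \sim N$ samples to run the uniform algorithm of Remark~\ref{rem:main-unif} with parameters $(\alpha_N', \alpha_N'-\eps)$, yielding
\[
    \bbP\!\left[\, p_{i^*} < \alpha_N' - \eps \,\right] \,\leq\, \exp\!\left(-\frac{c_{\alpha_N', \alpha_N'-\eps}\, N\, (1-o(1))}{\log^2 N}\right).
\]
On any subsequence along which $\alpha_N' \to \alpha_N$, the monotonicity lemma above yields \eqref{eq:main-algorithm-guarantee-v2}.

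The main obstacle is that $\alpha_N - \alpha_N'$ need not go to $0$ uniformly in $(\mu_N)$: if $\mu_N$ has an atom or plateau just at $\alpha_N$, the lower endpoint of Proposition~\ref{prop:alpha}'s guaranteed range, $G_{\mu_N}^{-1}(1-\eta-\eta_2) - \eps_1/3$, can be bounded away from $\alpha_N$. To handle this degenerate regime I would split into two cases. When $\alpha_N - \alpha_N' \leq \eps$ the estimate is essentially sharp and the argument above applies. When $\alpha_N - \alpha_N' > \eps$ the support of $\mu_N$ necessarily contains a gap of size exceeding $\eps$ just below $\alpha_N$, so $\{p_{i^*} < \alpha_N - \eps\} = \{p_{i^*} < \alpha_N\}$; a careful reanalysis of Phase~$2$ (using that all arms with mean below $\alpha_N$ actually have mean $\leq \alpha_N'$) achieves the upgraded effective rate $c_{\alpha_N, \alpha_N'} \geq c_{\alpha_N, \alpha_N-\eps}$, where the last inequality uses monotonicity of $c_{\alpha_N,\cdot}$ in its second argument.
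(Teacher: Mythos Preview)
Your high-level plan matches the paper's, and the monotonicity calculation for $\alpha\mapsto c_{\alpha,\alpha-\eps}$ on $[(1+\eps)/2,1)$ is correct and is exactly the structural fact behind this theorem. However two concrete choices break the argument. First, $\delta_1=N^{-2}$ is far too weak: your overall bound on $\bbP[p_{i^*}<\alpha_N-\eps]$ can be no better than $N^{-2}+\exp(-cN/\log^2 N)$, hence $-\log\bbP[\cdot]\le O(\log N)$ and the ratio in \eqref{eq:main-algorithm-guarantee-v2} is $O(\log^3 N/N)\to 0$. The paper takes $\delta'=e^{-10N/\log^2 N}$; with $\eta_2=\eps'=\log^{-1/3}N$ the sample cost of Proposition~\ref{prop:alpha} is still $o(N)$. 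Second, and more structurally, your Phase~1 yields an \emph{underestimate} $\alpha_N'\le\alpha_N$, so Phase~2 only controls $\bbP[p_{i^*}<\alpha_N'-\eps]$; since $\alpha_N'-\eps\le\alpha_N-\eps$ this bounds a \emph{subset} of the target failure event, not a superset. Your repair for the degenerate case does not go through either: Proposition~\ref{prop:alpha} only says $\mu_N$ puts mass $\le\eta_2$ (not zero) in $(G^{-1}(1-\eta-\eta_2),\alpha_N)$, so there need be no gap in the support, the claimed equality $\{p_{i^*}<\alpha_N-\eps\}=\{p_{i^*}<\alpha_N\}$ can fail, and in any case the algorithm cannot detect which regime it is in.

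The paper's fix is to reverse the direction of the estimate. It applies Proposition~\ref{prop:alpha} with $\eta_1=\eta$ (not $\eta+\eta_2$) so that the \emph{lower} endpoint of the confidence interval is $\alpha_N-\eps'/3$, shifts to obtain $\hat\alpha_B\ge\alpha_N-2\eps'$, and runs Theorem~\ref{thm:main} at $(\hat\alpha_B,\hat\alpha_B-\eps'')$ with $\eps''=\eps-2\eps'$. Now $\hat\alpha_B-\eps''\ge\alpha_N-\eps$, so Phase~2 bounds the correct event; Theorem~\ref{thm:main}'s quantile hypothesis is verified at level $\eta-\eta_2$ via the upper endpoint $\hat\alpha_B\le G^{-1}(1-\eta+\eta_2)$. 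The only remaining danger is that $\hat\alpha_B$ may \emph{overshoot} $\alpha_N$ by a lot (the same instability you worried about, but on the other side), and this is precisely where the monotonicity of $\alpha\mapsto c_{\alpha,\alpha-\eps}$ on $[(1+\eps)/2,1)$ saves the day: it gives $c_{\hat\alpha_B,\hat\alpha_B-\eps}\ge c_{\alpha_N,\alpha_N-\eps}$, while the further passage from $\eps$ to $\eps''$ costs only $o(1)$ because $\hat\alpha_B-\eps$ remains in a compact subset of $(0,1)$.
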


\begin{theorem}
\label{thm:alg-3}
    For any fixed $\eps_1>\eps$, there is a sequence $(\alg_N)_{N\geq 1}$ of $N$-sample algorithms outputting $a_{i^*}$ such that for any fixed reservoir distribution $\mu$ with $\mu^*>\eps$, 
    \begin{equation}
    \label{eq:main-algorithm-guarantee-v3}
        \limsup_{N\to\infty}\frac{(-\log \bbP[p_{i^*}< \mu^*-\eps_1)\cdot \log^2 N}{N}
        \geq 
        c_{\mu^*,\mu^*-\eps}
        .
    \end{equation}
\end{theorem}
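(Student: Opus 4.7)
We may assume $\mu^* > \eps_1$, as otherwise $\P[p_{i^*} < \mu^* - \eps_1] = 0$ and the claim is vacuous. The plan is to spend $o(N)$ samples estimating a high quantile of $\mu$ close to $\mu^*$, then run the uniform algorithm from Remark~\ref{rem:main-unif} with the estimated target on the remaining $N - o(N)$ samples. Fix an auxiliary $\eps_2 \in (\eps, \eps_1)$; by continuity of $c_{\cdot,\cdot}$ and the strict inequality $c_{\mu^*, \mu^* - \eps_2} > c_{\mu^*, \mu^* - \eps}$, choose $\delta^* > 0$ small enough that (i) $c_{\alpha, \alpha - \eps_2} \geq c_{\mu^*, \mu^* - \eps}$ for all $\alpha \in [\mu^* - \delta^*, \mu^*]$, and (ii) $4\delta^*/9 \leq \eps_1 - \eps_2$.

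For the estimation phase we invoke Proposition~\ref{prop:alpha-intro} with $\eta_1 = 1/\log N$, $\eta_2 = \eta_1/2$, tolerance $\delta^*/3$, and confidence $\delta_{\mathrm{est}} = \exp(-10 N/\log^2 N)$, chosen to dominate the final failure probability (since $c_{\alpha,\beta} \leq \pi^2/2$). A direct calculation gives sample cost $O\!\left(N\log\log N/(\log N \cdot (\delta^*)^2)\right) = o(N)$. Since $\mu$ is fixed with $\mu^* > \eps$, for all large $N$ we have $G_\mu^{-1}(1 - \eta_1) \geq \mu^* - \delta^*/9$; together with the trivial bound $G_\mu^{-1}(1 - \eta_1/2) \leq \mu^*$, Proposition~\ref{prop:alpha-intro} places $\hat\alpha \in [\mu^* - 2\delta^*/9,\, \mu^* + \delta^*/9]$ with probability $\geq 1 - \delta_{\mathrm{est}}/2$. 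We then define safety-shifted targets $\tilde\alpha := \hat\alpha - 2\delta^*/9$ and $\tilde\beta := \tilde\alpha - \eps_2$; on the good event $\tilde\alpha \in [\mu^* - 4\delta^*/9,\, \mu^* - \delta^*/9]$, yielding (a) $\mu([\tilde\alpha,1]) \geq \mu([\mu^* - \delta^*/9, 1]) > 0$, a fixed positive constant depending only on $\mu$; (b) $\tilde\beta \geq \mu^* - \eps_1$ by (ii); and (c) $c_{\tilde\alpha, \tilde\beta} \geq c_{\mu^*, \mu^* - \eps}$ by (i).

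For the main phase we apply the algorithm of Remark~\ref{rem:main-unif} with input pair $(\tilde\alpha, \tilde\beta)$ on the remaining $N - o(N)$ samples; the required uniformly-bounded-below condition holds because $\min(\tilde\beta,\, \tilde\alpha - \tilde\beta,\, 1 - \tilde\alpha) \geq \min(\mu^* - \eps_1,\, \eps_2,\, \delta^*/9) > 0$. Conditional on $(\tilde\alpha, \tilde\beta)$, this gives main-phase failure $\exp\!\left(-(c_{\tilde\alpha,\tilde\beta} - o(1))\, N/\log^2 N\right) \leq \exp\!\left(-(c_{\mu^*, \mu^* - \eps} - o(1))\, N/\log^2 N\right)$, and since $c_{\mu^*,\mu^*-\eps} \leq \pi^2/2 < 10$ a union bound with the estimation failure yields the claimed $\limsup$. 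The main subtlety is that the target pair $(\tilde\alpha, \tilde\beta)$ is a random function of the estimation samples, while Remark~\ref{rem:main-unif} as stated applies to deterministic sequences; this is resolved by the fact that its algorithm can be selected as a measurable function of its input pair, so after conditioning on the estimation output we apply the deterministic algorithm indexed by the realized $(\tilde\alpha, \tilde\beta)$ to the fresh, independent samples of the main phase.
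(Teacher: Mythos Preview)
Your approach follows the paper's template: estimate a high quantile near $\mu^*$ via Proposition~\ref{prop:alpha-intro} with $\eta_1\to 0$, then feed the estimate into the fixed-budget algorithm of Theorem~\ref{thm:main}/Remark~\ref{rem:main-unif}. The paper uses a fixed estimation tolerance $\eps'=\eps_1-\eps$ and then runs the main phase with gap $\eps$; you instead introduce an intermediate $\eps_2\in(\eps,\eps_1)$, use a small tolerance $\delta^*$ in the estimation, and run the main phase with the larger gap $\eps_2$. This refinement is a plus: it makes the rate comparison $c_{\tilde\alpha,\tilde\beta}\ge c_{\mu^*,\mu^*-\eps}$ explicit via your condition (i), whereas the paper's reduction leaves that inequality implicit.

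There is, however, a quantifier slip. The theorem asks for a \emph{single} sequence $(\cA_N)$ that works for every $\mu$, so nothing in the algorithm may depend on $\mu$. But your $\delta^*$ is chosen ``by continuity of $c_{\cdot,\cdot}$ and the strict inequality $c_{\mu^*,\mu^*-\eps_2}>c_{\mu^*,\mu^*-\eps}$'' so that (i) holds in a neighborhood of the specific point $\mu^*$; as written $\delta^*$ depends on $\mu^*$, and since $\delta^*/3$ is the tolerance you pass to Proposition~\ref{prop:alpha-intro}, the estimation phase---hence $\cA_N$ itself---depends on $\mu$. The fix is short: on the non-vacuous range $\mu^*\in[\eps_1,1]$ the strict gap $c_{x,x-\eps_2}-c_{x,x-\eps}$ is positive and continuous on a compact set, hence bounded below; combined with uniform continuity of $x\mapsto c_{x,x-\eps_2}$ there, a single $\delta^*=\delta^*(\eps,\eps_1,\eps_2)$ satisfies (i)--(ii) simultaneously for all such $\mu^*$. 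With that uniform choice your argument goes through.
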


Finally in Section~\ref{subsec:many-good-fixed-budget} we observe that our fixed budget algorithm can actually output $\log N$ good arms with the same success probability as for outputting a single good arm.

\section{The Fixed Confidence Setting}
\label{sec:confidence}

Recall that we focus on the $(\eta,\eps,\delta)$ notion of optimality, aiming to output an arm which is within $\eps$ of the top $\eta$-quantile with probability $1-\delta$. The first challenge in this problem is that the arm distribution and in particular the value of the desired $\eta$-quantile is unknown. The first phase of our fixed confidence algorithm aims to estimate this quantile value. The second phase then aims to find a single arm which is almost as good as this estimate with high probability.

Focusing on the $\delta$-dependence, a challenge with infinitely many arms is that to succeed with probability $1-\delta$, it is necessary both to sample $\log(1/\delta)$ arms to ensure a good arm is ever observed, and to sample an arm $\log(1/\delta)$ times to make it safe to output. The approach of \cite{aziz2018pure} thus requires $O(\log^2(1/\delta))$ samples because it obtains $O(\log(1/\delta))$ samples each of $O(\log(1/\delta))$ arms. However in our algorithm, the first phase samples $O(\log(1/\delta))$ arms $O(1)$ times each, while the second phase samples $O(1)$ arms $O(\log(1/\delta))$ times each. This allows us to satisfy both necessary conditions above without paying twice for the $(1-\delta)$ level of confidence.

\subsection{Estimating $\alpha$}

\begin{figure}[!h]
\SetKwFor{Loop}{loop}{}{end}
\begin{algorithm2e}[H]
\label{alg:alpha}
\caption{Output $\hat \alpha \in \big[G^{-1}(1-\eta_1)-\frac{\eps}{3},G^{-1}\lt(1-\eta_1+\eta_2\rt)+\frac{\eps}{3}\big]$ with probability $1-\frac{\delta}{2}$ }

\SetAlgoLined\DontPrintSemicolon

\textbf{input: }arm set $\mathcal S=(a_1,a_2,\dots)$ and parameters $(\eta_1,\eta_2,\eps,\delta)\in (0,1)$ with $\eta_2<\eta_1$.
\\
\text{initialize: }$K=\frac{C\eta_1\log (1/\delta)}{\eta_2^2}$.
\\
\For{$i=1,2,\dots,K$}{
    Collect $n=\frac{C\log(1/\eta_2)}{\eps^2}$ samples of arm $i$.
    Set $\hat p_i=\hat p_i(n)$ the empirical average reward.
}
Let $\hat\alpha$ be the $k$-th largest value in $\{\hat p_1,\dots,\hat p_K\}$ for $k= \lceil K\big(\eta_1-\frac{\eta_2}{2}\big)\rceil$.
\\
Return $\hat\alpha$
\end{algorithm2e}
\end{figure}

We first give in Alg.~\ref{alg:alpha} a simple procedure to estimate the top $\eta_1$ quantile, allowing an $\eps/3$ error as well as an $\eta_2\leq \eta_1$ error in the quantile itself. Alg.~\ref{alg:alpha} obtains $O\lt(\frac{\log(1/\eta_2)}{\eps^2}\rt)$ samples from each of the first $K=O\lt(\frac{\eta_1\log (1/\delta)}{\eta_2^2}\rt)$ arms $a_1,\dots,a_K$. The resulting estimator $\hat\alpha$ is the $1-\eta_1+\frac{\eta_2}{2}$ quantile of the empirical average rewards $\hat p_1,\dots,\hat p_k$. Its main guarantee is below. We note that taking $\eta_2=\eta_1/2$ suffices for fixed confidence, but the greater generality is helpful later in Subsection~\ref{subsec:reduction}.

\begin{proposition}
\label{prop:alpha}
Fix $0\leq \eta_1,\eta_2,\eps,\delta\leq 1$ with $\eta_2\leq \eta_1$. With probability at least $1-\frac{\delta}{2}$, the output $\hat\alpha$ of Alg.~\ref{alg:alpha} satisfies
\[
        \hat \alpha \in \lt[G^{-1}(1-\eta_1)-\frac{\eps}{3},G^{-1}\lt(1-\eta_1+\eta_2\rt)+\frac{\eps}{3}\rt].
    \]
    Moreover, Alg.~\ref{alg:alpha} has sample complexity 
    \[
    O\lt(\frac{\eta_1\log(1/\eta_2)\log(1/\delta)}{\eta_2^2\eps^2}\rt).
    \]
\end{proposition}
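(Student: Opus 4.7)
The sample complexity claim is immediate from $K\cdot n$, so the content is the correctness statement. My plan is to define three high-probability events from which the two-sided bound on $\hat\alpha$ follows deterministically, then verify each event holds with probability $\geq 1-\delta/6$ using Chernoff/Hoeffding.

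The three events are (for appropriate absolute constant $C$):
\begin{enumerate}
\item[(A)] $\#\{i\leq K : p_i\geq G^{-1}(1-\eta_1)\}\geq K(\eta_1-\eta_2/4)$.
\item[(B)] $\#\{i\leq K : p_i> G^{-1}(1-\eta_1+\eta_2)\}\leq K(\eta_1-3\eta_2/4)$.
\item[(C)] $\#\{i\leq K : |\hat p_i-p_i|> \eps/3\}\leq K\eta_2/8$.
\end{enumerate}
Events (A) and (B) follow from the definition of $G^{-1}$ (which gives $\P[p\geq G^{-1}(1-\eta_1)]\geq \eta_1$ and $\P[p> G^{-1}(1-\eta_1+\eta_2)]\leq \eta_1-\eta_2$) together with a binomial Chernoff bound; each failure probability is at most $\exp(-c K\eta_2^2/\eta_1) = \delta^{\Omega(C)}\ll \delta/6$ by the choice $K=C\eta_1\log(1/\delta)/\eta_2^2$.

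Event (C) is where the design of $n=C\log(1/\eta_2)/\eps^2$ matters: a naive union bound over all $K$ arms via Hoeffding would force $n$ to depend on $\log(K/\delta)$. Instead, Hoeffding gives per-arm failure probability $2\exp(-2n\eps^2/9)=2\eta_2^{2C/9}$; for $C$ sufficiently large the count $X$ of bad arms is Binomial with mean $\leq K\eta_2/16$, and a second Chernoff bound yields $\P[X\geq K\eta_2/8]\leq \exp(-cK\eta_2)\leq \delta^{\Omega(C)}$. This step—arranging for per-arm failure rate to be $o(\eta_2)$ rather than $o(1/K)$—is the one subtle point of the argument.

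On $A\cap B\cap C$, which has probability $\geq 1-\delta/2$, the two sides of the claim are immediate. For the lower bound, the number of indices $i$ with $\hat p_i\geq G^{-1}(1-\eta_1)-\eps/3$ is at least $K(\eta_1-\eta_2/4)-K\eta_2/8 = K(\eta_1-3\eta_2/8)\geq k$, so the $k$-th largest is at least $G^{-1}(1-\eta_1)-\eps/3$. Symmetrically, the number of $i$ with $\hat p_i>G^{-1}(1-\eta_1+\eta_2)+\eps/3$ is at most $K(\eta_1-3\eta_2/4)+K\eta_2/8=K(\eta_1-5\eta_2/8)<k$, giving the matching upper bound. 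Finally, the total sample count is $Kn=O(\eta_1\log(1/\eta_2)\log(1/\delta)/(\eta_2^2\eps^2))$ as claimed.
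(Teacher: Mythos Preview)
Your proposal is correct and follows essentially the same approach as the paper: define the same three events (a lower bound on the count of arms with $p_i\geq G^{-1}(1-\eta_1)$, an upper bound on the count with $p_i>G^{-1}(1-\eta_1+\eta_2)$, and a bound on the number of arms whose empirical mean deviates by more than $\eps/3$), establish each via a Chernoff bound, and then read off the two-sided conclusion deterministically. You also correctly identify the one substantive point---that for event (C) one only needs per-arm failure probability $O(\eta_2)$ so that a \emph{second} Chernoff bound on the count of bad arms succeeds, rather than a union bound over all $K$ arms---which is exactly how the paper handles it (their \eqref{eq:pi-near-hatpi} and \eqref{eq:N-eps-bound}). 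The numerical thresholds differ slightly from the paper's ($\eta_2/8$ versus $\eta_2/6$, etc.), but the argument is the same.
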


\begin{proof}
The sample complexity is clear so we focus on the first statement. First observe that by a Chernoff estimate, for each $i\in [K]$, 
\begin{equation}
\label{eq:pi-near-hatpi}
    \bbP\lt[|p_i-\hat p_i|\geq \frac{\eps}{3}\rt]\leq \frac{\eta_2}{8}.
\end{equation}
Let $N(\eps)$ be the number of $i\in [K]$ such that $|p_i-\hat p_i|\geq \frac{\eps}{3}$. Applying a second Chernoff estimate (of multiplicative form, see e.g. \cite[Theorem 4.5]{mitzenmacher2017probability}) on these events as $i$ varies and noting that $K\eta_2\geq C\log(1/\delta)$, \eqref{eq:pi-near-hatpi} implies
\begin{equation}
\label{eq:N-eps-bound}
    \bbP\lt[N(\eps)\leq \frac{K\eta_2}{6}\rt]\geq 1-\frac{\delta}{8}.
\end{equation}
We next show that with probability at least $1-\frac{\delta}{4}$, 
\begin{equation}
\label{eq:oalpha-bound}
    \hat\alpha \leq \oalpha+\frac{\eps}{3}\equiv G^{-1}\lt(1-\eta_1+\eta_2\rt)+\frac{\eps}{3}.
\end{equation}
With $p_i$ the (true) mean reward from arm $a_i$, let
\[
    N_{\oalpha}\equiv
    \lt|
    \{i\in [K]~:~p_i > \oalpha\}
    \rt|
\] 
denote the number of the $K$ tested arms which satisfy $p_i> \oalpha$. By definition, $N_{\oalpha}$ is stochastically dominated by a $\Bin\lt(K,\eta_1-\frac{9\eta_2}{10}\rt)$ random variable, and $\eta_1-\frac{3\eta_2}{4}=\Theta(\eta_1)$ since $\eta_2\leq\eta_1$. Note that
\begin{align*}
    \eta_1-\frac{9\eta_2}{10}&\asymp \eta_1-\frac{3\eta_2}{4}\asymp \eta_1,
    \\
    \frac{\eta_1-\frac{9\eta_2}{10}}{\eta_1-\frac{3\eta_2}{4}}
    &\geq 1+\frac{\eta_2}{20\eta_1}.
\end{align*}
Therefore another multiplicative Chernoff estimate implies 
\[
    \bbP\lt[N_{\oalpha}\leq K\lt(\eta_1-\frac{3\eta_2}{4}\rt)\rt]
    \geq
    e^{-\Omega(K\eta_2^2/\eta_1)}
    \geq
    1-\frac{\delta}{8}.
\]
When both $N(\eps)\leq \frac{K\eta_2}{6}$ and $N_{\oalpha}\leq K\lt(\eta_1-\frac{3\eta_2}{4}\rt)$ hold, it follows by definition that $\hat\alpha \leq \oalpha+\frac{\eps}{3}$. Hence recalling \eqref{eq:N-eps-bound} above, we conclude that
\[
    \bbP\lt[\hat\alpha \leq \oalpha+\frac{\eps}{3}\rt]\geq 1-\frac{\delta}{4},
\]
establishing \eqref{eq:oalpha-bound}. The other direction is similar. With $\alpha =G^{-1}(1-\eta_1)$ as usual, we set
\begin{equation}
\label{eq:ualpha-bound}
    N_{\alpha}\equiv
    \lt|
    \{i\in [K]:p_i\geq \alpha\}
    \rt|
    .
\end{equation}
This time, $N_{\alpha}$ stochastically dominates a $\Bin(K,\eta_1)$ random variable. Yet another Chernoff estimate yields
\[
    \bbP\lt[N_{\alpha}\geq K\lt(\eta_1-\frac{\eta_2}{4}\rt)\rt]
    \geq
    1-\frac{\delta}{8}.
\]
Using \eqref{eq:N-eps-bound} in the same way as above, we find
\[
    \bbP\lt[\hat\alpha \geq \alpha-\frac{\eps}{3}\rt]\geq 1-\frac{\delta}{4}.
\]
This concludes the proof.
\end{proof}

\subsection{The Algorithm for Fixed Confidence}

\begin{figure}[!h]
\SetKwFor{Loop}{loop}{}{end}
\begin{algorithm2e}[H]
\label{alg:fixedconfidence}
\caption{Output $a_{i^*}$ such that $p_{i^*}\geq \hat\alpha-\eps$ with probability $1-\delta$. }

\SetAlgoLined\DontPrintSemicolon

\textbf{input: }arm set $\mathcal S=(a_1,a_2,\dots)$ and parameters $(\eta,\eps,\delta,\hat \alpha)$
\\
\For{$i=K+1,K+2,\dots, K+\frac{C\log(1/\delta)}{\eta}$}{
    Collect $\frac{C\log (1/\eta\delta)}{\eps^2}$ samples of arm $i$.
    Set $\hat p_i=\hat p_i(n)$ the empirical average reward.
    \\
    \If{$\hat p_i \geq \hat\alpha - \frac{\eps}{3}$}{
    Return $a_i$
    }
}
\end{algorithm2e}
\end{figure}

Alg.~\ref{alg:fixedconfidence} repeatedly chooses a new arm $a_i$ and obtains $O\lt(\frac{\log (1/\eta\delta)}{\eps^2}\rt)$ samples. It accepts if the sample mean was at least $\hat\alpha - \frac{\eps}{3}$, and otherwise moves on to the next arm. If $\frac{C\log(1/\delta)}{\eta}$ arms have been tried without success, then Alg.~\ref{alg:fixedconfidence} outputs no arm, thus declaring failure. This termination condition is necessary to avoid incurring huge sample complexity when Alg.~\ref{alg:alpha}'s estimate $\hat\alpha$ of $\alpha$ is inaccurate. We now prove Theorem~\ref{thm:fixedconfidence-intro}, restated more precisely as follows.

\begin{theorem}
\label{thm:fixedconfidence-body}
    Apply Alg.~\ref{alg:alpha} with parameters $(\eta_1,\eta_2,\eps,\delta)=(\eta,\eta/2,\eps,\delta)$, and then apply
    Alg.~\ref{alg:fixedconfidence} using the resulting value $\hat\alpha$. This combined algorithm has expected sample complexity $O\lt(\frac{\log (1/\eta)\log(1/\delta)}{\eta\eps^2}\rt)$. Moreover its output $a_{i^*}$ satisfies 
    \[
        \bbP[p_{i^*}\geq G^{-1}(1-\eta)-\eps]\geq 1-\delta.
    \]
\end{theorem}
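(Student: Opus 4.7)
The plan is to analyze the two stages separately, conditioning on the good event for $\hat\alpha$ produced by Alg.~\ref{alg:alpha}. Set $\alpha=G^{-1}(1-\eta)$, $\bar\alpha=G^{-1}(1-\eta/2)$, and let $E_1$ denote the event $\hat\alpha\in [\alpha-\eps/3,\,\bar\alpha+\eps/3]$. By Proposition~\ref{prop:alpha} applied with $(\eta_1,\eta_2)=(\eta,\eta/2)$, $\P[E_1]\geq 1-\delta/2$ and Alg.~\ref{alg:alpha} uses $O(\log(1/\eta)\log(1/\delta)/(\eta\eps^2))$ samples. The arms tested in Alg.~\ref{alg:fixedconfidence} are disjoint from those used to form $\hat\alpha$, so their empirical means are independent of $\hat\alpha$ conditional on the latter.

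For the sample complexity of Alg.~\ref{alg:fixedconfidence}, I will show that on $E_1$ each trial accepts with probability at least $\Omega(\eta)$. The expected number of trials is then $O(1/\eta)$ (capped at $K'=C\log(1/\delta)/\eta$), and each trial uses $n=C\log(1/(\eta\delta))/\eps^2$ samples. The contribution from $E_1^c$ is at most $\P[E_1^c]\cdot K'n = O(\delta\log(1/\delta)\log(1/(\eta\delta))/(\eta\eps^2)) = O(\log(1/(\eta\delta))/(\eta\eps^2))$, using $\delta\log(1/\delta)=O(1)$. Summing the two stages and bounding $\log(1/\eta)+\log(1/\delta)\leq 2\log(1/\eta)\log(1/\delta)$ for $\eta,\delta$ bounded away from $1$ yields the claimed $O(\log(1/\eta)\log(1/\delta)/(\eta\eps^2))$.

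For correctness, decompose failure on $E_1$ into two modes: (i) no arm is accepted in $K'$ trials, and (ii) some accepted arm $a_{i^*}$ satisfies $p_{i^*}<\alpha-\eps$. For mode (ii), acceptance of a ``bad'' arm with $p_i<\alpha-\eps$ requires $\hat p_i\geq \hat\alpha-\eps/3\geq \alpha-2\eps/3>p_i+\eps/3$; by Hoeffding this has probability at most $e^{-2n\eps^2/9}$, which for $C$ large enough is $\leq \delta/(4K')$, and a union bound over the $K'$ trials gives $\P[\mathrm{(ii)}\mid E_1]\leq \delta/4$. For mode (i), call $a_i$ a ``target'' if $p_i\geq \bar\alpha$; by the definition of $\bar\alpha$ and right-continuity of $G$, $\P_{p\sim\mu}[p\geq\bar\alpha]\geq \eta/2$. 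On $E_1$ the acceptance threshold satisfies $\hat\alpha-\eps/3\leq \bar\alpha\leq p_i$ for any target arm, so acceptance is implied simply by $\hat p_i\geq p_i$. Using the one-sided near-median property of the binomial (formalized via Berry--Esseen), $\P[\hat p_n\geq p\mid p]\geq 1/3$ for $n$ large and $p$ in the relevant range, so each trial accepts with probability at least $\eta/6$. Then $\P[\mathrm{(i)}\mid E_1]\leq (1-\eta/6)^{K'}\leq e^{-\eta K'/6}\leq \delta/4$ for $C$ large, and combining yields $\P[\mathrm{fail}]\leq \delta/2+\delta/4+\delta/4=\delta$.

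The main obstacle is the lower bound on per-trial acceptance probability in mode (i). In the worst case $\hat\alpha$ can lie as high as $\bar\alpha+\eps/3$, so the threshold $\hat\alpha-\eps/3$ coincides with the value $p_i=\bar\alpha$ achieved by target arms (e.g.\ when $\mu$ has an atom at $\bar\alpha$). The usual two-sided Chernoff/Hoeffding bound $\hat p\approx p\pm\eps/3$ is then insufficient, and one must appeal to the one-sided fact $\P[\hat p_n\geq p]\geq 1/2-o_n(1)$, which reflects the near-symmetry of the binomial around its mean rather than its tail concentration. A minor additional wrinkle is that in corner regimes where the relevant $p$ is extremely close to $0$ (so that Berry--Esseen is weak), one instead observes that $\bar\alpha$ must then be small enough for the threshold to fall below $0$ with high probability, making acceptance automatic.
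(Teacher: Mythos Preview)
Your proposal is correct and follows essentially the same two-stage analysis as the paper: condition on the good event for $\hat\alpha$, bound the per-trial acceptance probability by $\Omega(\eta)$ via the event $\{p_i\geq\bar\alpha\}\cap\{\hat p_i\geq p_i\}$, and use a Chernoff/Hoeffding bound for the bad-arm acceptance. Your two-failure-mode decomposition is equivalent to the paper's Bayes'-rule phrasing.

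The one substantive difference is how you justify the one-sided bound $\bbP[\hat p_n\geq p]\geq c$. The paper invokes \cite[Theorem~1]{greenberg2014tight}, which gives $\bbP[\hat p_n\geq p]\geq 1/4$ uniformly (for $p\geq 1/n$), avoiding any asymptotics. Your Berry--Esseen route requires $np(1-p)$ to be large, and your edge-case patch (``the threshold falls below~$0$'') does not quite work as stated: on $E_1$ the threshold is $\hat\alpha-\eps/3\in[\alpha-2\eps/3,\bar\alpha]$, which is nonnegative whenever $\alpha\geq 2\eps/3$, so acceptance is not automatic even when $\bar\alpha$ is tiny. Citing the Greenberg--Mohri bound (or the classical fact that the binomial median lies within one of the mean) closes this cleanly and removes the need for your last paragraph.
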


\begin{proof}
    First we analyze the expected sample complexity. On the event that 
    \begin{equation}
    \label{eq:hatalphagood}
        \hat \alpha \in \lt[G^{-1}(1-\eta)-\frac{\eps}{3},G^{-1}\lt(1-\frac{\eta}{2}\rt)+\frac{\eps}{3}\rt]
    \end{equation}
    we claim that Alg.~\ref{alg:fixedconfidence} terminates with probability $\eta/4$ for each $a_i$. Indeed, if 
    \[
        \hat p_i\geq G^{-1}\lt(1-\frac{\eta}{2}\rt)
    \]
    then termination always happens by definition. This has probability at least $1/4$ if $p_i\geq G^{-1}\lt(1-\frac{\eta}{2}\rt)$ by \cite[Theorem 1]{greenberg2014tight}, and the latter condition has probability at least $\eta/2$ by definition. It follows that when \eqref{eq:hatalphagood} holds, the expected sample complexity of Alg.~\ref{alg:fixedconfidence} is $O\lt(\frac{\log (1/\eta\delta)}{\eta\eps^2}\rt)$.  On the other hand, \eqref{eq:hatalphagood} fails to hold with probability less than $\delta$. Because of the explicit termination condition in Alg.~\ref{alg:fixedconfidence}, this yields a additional sample complexity contribution of smaller order $O\lt(\delta \log(1/\delta)\frac{\log (1/\eta\delta)}{\eta\eps^2}\rt)$. Finally Alg.~\ref{alg:alpha} has sample complexity 
    \[
        O\lt(\frac{\log (1/\eta)\log(1/\delta)}{\eta \eps^2}\rt)
    \]
    which clearly forms the dominant contribution.
    This completes the proof of the sample complexity bound and we now turn to proving correctness with probability $1-\delta$. First, it is easy to see that Alg.~\ref{alg:alpha} outputs some arm $a_i$ with probability at least $1-\frac{\delta}{2}$. It therefore suffices to show that for any fixed $\hat\alpha$ satisfying \eqref{eq:hatalphagood}, conditioned on the event $\hat p_i\geq \hat\alpha-\frac{\eps}{3}$, the conditional probability that $p_i\geq \alpha-\eps$ is at least $1-\frac{\delta}{2}$. 

    We do this using Bayes' rule. If $p_i\geq G^{-1}(1-\frac{\eta}{2})$, then as above \cite[Theorem 1]{greenberg2014tight} implies
    \[
        \bbP\lt[\hat p_i\geq \hat\alpha-\frac{\eps}{3}\rt]
        \geq 
        \bbP[\hat p_i \geq p_i]
        \geq
        1/4.
    \]
    This event hence contributes probability at least $\eta/4$ to the event $p_i\geq G^{-1}(1-\eta)$. On the other hand, if $p_i\leq G^{-1}(1-\eta)-\eps \leq \hat \alpha -\frac{2\eps}{3}$, then 
    \[
        \bbP\lt[\hat p_i\geq \hat\alpha-\frac{\eps}{3}\rt]
        \leq 
        \bbP\lt[\hat p_i \geq p_i+\frac{\eps}{3}\rt]
        \leq
        \eta\delta/8
    \]
    for an absolute constant $C$. Combining these via Bayes' rule implies the desired result.
\end{proof}

\begin{remark}
Our fixed confidence algorithm, given by combining Alg.~\ref{alg:alpha} with Alg.~\ref{alg:fixedconfidence} as above, requires only $O(1)$ \emph{batches} on average. Here in each \emph{batch} we choose $b$ arms to sample exactly $s$ times each. Minimizing the number of required batches is often desirable, see e.g. \cite{perchet2016batched,gao2019batched} for results on regret minimization. In particular Alg.~\ref{alg:alpha} uses a single batch with $s_1=\frac{C\log(1/\eta_2)}{\eps^2}$ samples of $b_1=\frac{C\log(1/\delta)}{\eta}$ arms. Then Alg.~\ref{alg:fixedconfidence} can be implemented in a batched way with $s_2=\frac{C\log(1/\eta\delta)}{\eps^2}$ and a sequence of batch sizes $b_{2,i}=\frac{2^i}{\eta}$ for $1\leq i\leq \log_2\lt(\frac{C\log(1/\delta)}{\eta}\rt)$. (In the latter phase, one stops after finding an arm to accept.) It is unclear how limiting the number of batches would require one to modify Algorithm~\ref{alg:main} for the fixed budget problem.
\end{remark}

\section{Lower Bound for Fixed Budget}

Here we prove Theorem~\ref{thm:mainLB}. For any $\alpha,\beta,\eta,\varrho>0$ we construct a reservoir $\mu=\mu_{\alpha,\beta,\eta,\varrho}$ such that
\begin{equation}
\label{eq:LB-varrho}
    \liminf_{N\to\infty} \frac{(-\log \bbP^{\mu}[p_{i^*}< \beta])\cdot \log^2 N}{N}
        \leq 
        c_{\alpha,\beta}+\lambda(\varrho)
\end{equation}
holds for any sequence of $N$-sample algorithms $\cA_N$, and where $\lim_{\varrho\to 0}\lambda(\varrho)=0$ for fixed $\alpha,\beta,\eta$.

% A standard diagonalization argument shows that \eqref{eq:LB-varrho} suffices to prove Theorem~\ref{thm:mainLB}. For instance taking $\varrho_k=2^{-k}$ in \eqref{eq:LB-varrho} implies that for $N\geq N_k$ large enough and any $N$-sample algorithm $\cA_N$,
% \[
%     \frac{(-\log(\bbP^{\mu_N(\varrho_k)}[p_{i^*}< \beta]))\cdot \log^2 N}{N}
%         \geq 
%         c_{\alpha,\beta}-c(\varrho_k).
% \]
% Then the sequence of reservoirs defined by
% \[
%     \mu_N = \mu_N(\varrho_k),\quad N\in [N_k,N_{k+1}-1]
% \]
% implies \eqref{eq:main-LB-guarantee}. Therefore it suffices to construct a sequence $(\mu_N(\varrho))_{N\geq 1}$ obeying \eqref{eq:LB-varrho} for any $\varrho>0$.

\subsection{Admissible Reservoirs and Bayesian Perspective}

In proving Theorem~\ref{thm:mainLB}, we will use reservoir distributions $\mu$ of a specific form. Namely, we require each $\mu$ to be supported on an interval $[\ugamma,\ogamma]$, where
\[
    0<\beta-\varrho<\ugamma<\beta<\alpha<\ogamma<\alpha+\varrho<1.
\]
In fact we define $\ugamma,\ogamma$ explicitly (recall that $\varrho>0$ is a small constant we will send to $0$) by
\begin{equation}
\label{eq:ogamma-def}
\begin{aligned}
    \theta(\ugamma)&=\theta(\beta)-\varrho^2;\\
    \theta(\ogamma)&=\theta(\alpha)+\varrho^2.
\end{aligned}
\end{equation}

We say $\mu$ is $(\ugamma,\ogamma,\uf,\of)$ \emph{admissible} if $\mu$ has density $\mu(dx)=f(x)dx$ for a Borel measurable function $f$ and satisfies for constants $0<\uf<\of<\infty$,  
\[
    f(x)\in [\uf,\of],\quad\forall x\in [\ugamma,\ogamma].
\]
Towards proving Theorem~\ref{thm:mainLB}, we fix throughout this section some $(\ugamma,\ogamma,\uf,\of)$ admissible $\mu$ such that $G^{-1}_{\mu}(\alpha)=\eta$ holds, for appropriate constants $(\uf,\of)$ depending only on $(\eta,\eps,\alpha,\beta,\ugamma,\ogamma)$. It is easy to see that this is always possible.

An admissible $\mu$ is roughly comparable to the uniform distribution on an interval. Using admissible reservoirs gives each $a_i$ the potential to slowly degrade in observed quality over time. We remark that while it is more convenient to work with reservoirs supported away from the boundaries, i.e. in $[\ugamma,\ogamma]\subseteq (0,1)$, we do not expect this to be essential.

It will be helpful throughout this section to take a Bayesian point of view. We treat $\mu_N$ as known to $\cA_N$, since $\cA_N$ is in fact allowed to depend on $\mu_N$. Thus at each time $t$, each $p_i$ has a posterior probability distribution which we denote by $\mu_{i,t}$. Note that each $\mu_{i,t}$ depends only on $(n_{i,t},\hat p_{i,t})$ and is initialized at $\mu_{i,0}=\mu$. We denote by 
\begin{equation}
\label{eq:bmu}
    \bmu^t=(\mu_{1,t},\mu_{2,t},\dots)
\end{equation}
the sequence of posterior distributions $\mu_{i,t}$. Since arms are independent, $\bmu^t$ is the full time-$t$ posterior of the algorithm.

\subsection{Batched Algorithms and Adversaries}

In pure exploration problems, it is possible to significantly simplify the structure of any algorithm at the cost of a small multiplicative increase in the sample complexity. We carry this out using the notion of a batch-compressed algorithm.

\begin{definition}
\label{defn:batch}
Given an increasing sequence $B=(b_1,b_2,\dots)$ of positive integers, an algorithm $\alg$ is \textbf{$B$-batch-compressed} if $\alg$ can only act by increasing the number of times $n_i$ that $a_i$ has been sampled from $b_k$ to $b_{k+1}$, so that $n_i\in B$ holds at all times. $B$ is \textbf{$\varrho$-slowly increasing} if
\[
    \frac{b_{k+1}}{b_k + 1}\leq 1+\varrho,\quad\forall k\geq 1.
\]
Finally if $\cA$ is $B$-batch-compressed and $B$ is $\varrho$-slowly increasing, we say that $\cA$ is $\varrho$-batch-compressed.
\end{definition}

Unlike the batched algorithms studied in  \cite{perchet2016batched,gao2019batched}, batch-compression is only important for us as an analysis technique. Indeed the following proposition shows that it does not fundamentally affect pure exploration algorithms.

\begin{proposition}
\label{prop:slowly-increasing}
If $B$ is $\varrho$-slowly increasing, then for any $N$-sample algorithm $\alg$, there exists an $B$-batch-compressed $\lfloor N(1+\varrho)\rfloor$ algorithm $\alg'$ with the same output.
\end{proposition}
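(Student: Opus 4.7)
The plan is to realize $\alg'$ as a \emph{simulator} of $\alg$ which ``rounds up'' sample counts to the next element of $B$. For each arm $a_i$, $\alg'$ maintains a buffer of pre-sampled rewards. Whenever the simulated copy of $\alg$ requests the $m$-th sample of $a_i$ and $\alg'$'s actual count of samples from $a_i$ is $b_k < m$, $\alg'$ collects fresh Bernoulli$(p_i)$ rewards from $a_i$ until its actual count reaches $b_{k+1}$, stores them in the buffer, and then reveals the $m$-th buffered reward to the simulated $\alg$. By construction the actual count of $a_i$ always lies in $B$, so $\alg'$ is $B$-batch-compressed.

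The correctness of the simulation is immediate: conditional on $p_i$, the Bernoulli rewards revealed to the simulated copy of $\alg$ are i.i.d. $\Ber(p_i)$ in the order requested, exactly as in a direct execution. Hence $\alg$ and $\alg'$ produce identically distributed outputs (and indeed can be coupled to produce the same output sample-path-wise).

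For the sample complexity, let $m_i$ be the (random) number of times $\alg$ samples arm $a_i$, so $\sum_i m_i = N$ almost surely. For each $i$ with $m_i \ge 1$ let $k_i$ be the unique index with $b_{k_i - 1} < m_i \le b_{k_i}$ (using the convention $b_0 = 0$); then $\alg'$ uses exactly $b_{k_i}$ samples of $a_i$. The $\varrho$-slowly-increasing hypothesis gives
\[
    b_{k_i} \;\le\; (1+\varrho)(b_{k_i - 1} + 1) \;\le\; (1+\varrho)\,m_i,
\]
so summing over $i$ yields $\sum_i b_{k_i} \le (1+\varrho) N$. Taking the floor gives the desired bound.

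The only minor obstacle is enforcing \emph{exactly} $\lfloor N(1+\varrho)\rfloor$ samples, as required by the definition of an $N$-sample algorithm. This is handled by having $\alg'$, after the simulated copy of $\alg$ terminates, continue to sample a designated fresh arm (never consulted by the simulation) in successive batches from $B$ until its total count reaches $\lfloor N(1+\varrho)\rfloor$; these dummy samples do not affect the output. A small amount of care is required to ensure this padding does not itself overshoot, but since the dummy arm can be sampled in increments $b_1, b_2 - b_1, b_3 - b_2,\ldots$, we may pad up to any target at the cost of an $O(b_1)$ additive slack which is harmless and can be absorbed by the floor (or by redefining ``$N$-sample'' as ``uses at most $N$ samples,'' which suffices for every application to the lower bound).
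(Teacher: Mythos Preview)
Your proposal is correct and follows essentially the same simulation-by-rounding-up approach as the paper's own proof. You are in fact more careful than the paper: you explicitly address padding to hit exactly $\lfloor N(1+\varrho)\rfloor$ samples, whereas the paper leaves this implicit.
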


\begin{proof} 
We show how to simulate $\alg$ using the $B$-batch-compressed $\alg'$, assuming that the sequence of rewards for each $a_i$ is fixed. Each time $\alg$ samples arm $i$ for the $n_i=(a_k+1)$-st time for $a_k\in A$, $\alg'$ samples arm $i$ until $n_i=a_{k+1}$. Then $\alg'$ has all the information of $\alg$ at all times, hence can simulate the behavior and output of $\alg$. Moreover by the definition of $\varrho$-slowly increasing, the sample complexity of $\alg'$ is larger than that of $\alg$ by at most a factor $(1+\varrho)$.
\end{proof}

We will use the above with $\varrho\to 0$ slowly as $N\to \infty$. Then the sample complexity increase $1+\varrho$ is absorbed into the $1+o(1)$ factor in Theorem~\ref{thm:mainLB}. As a result it suffices to establish \eqref{eq:LB-varrho} under the additional assumption that $\cA_N$ is $\varrho$-batch-compressed.

An unusual feature of our setting is that we aim to prove a very \emph{small} lower bound on the failure probability of an algorithm. By contrast, in most algorithmic impossibility results, one aims to prove that algorithms fail with high probability. An important contribution of our lower bound is to introduce a framework to prove such lower bounds beyond one-step change of measure arguments. The key is to consider adversaries with bounded ``strength'' to distort the random feedback.

\begin{definition}
An \textbf{adaptive randomness distorting adversary} $\adv$ interacts with a $B$-batch-compressed algorithm $\cA$ in the following way.
Suppose $\cA$ chooses to increase the number of samples of arm $a_i$ from $b_k$ to $b_{k+1}$. Then $\adv$ may restrict the set of possible outcomes of these $b_{k+1}-b_k$ samples. Additionally, when $\cA$ outputs an arm $a_{i^*}$, the adversary can restrict the possible values of $p_{i^*}$.
\end{definition}

We will refer to adversarial actions as \emph{declarations}. Thus when $\cA$ chooses a batch of samples, $\adv$ may declare that some property holds for the observed rewards.

As defined above, an adversary $\adv$ can do anything. We will limit the power of $\adv$ to make \textbf{low-probability} declarations. To formalize this, we charge $\adv$ per ``bit" of probabilistic distortion, and give $\adv$ a deterministic ``budget" for doing so. We enforce this from a Bayesian point of view: the reservoir distribution $\mu_N$ is known to both $\cA$ and $\adv$, but neither has any information on the true reward probabilities $p_i$ beyond the observed rewards. Thus $\cA$ and $\adv$ share at any time $t$ the posterior distribution $\bmu^t$. In particular recalling~\eqref{eq:bmu}, $\bmu^t$ determines the distribution for the outcome of the next batch of $b_{k+1}-b_k$ samples.

\begin{definition}
Suppose that at time $t$, the declaration of $\adv$ has probability $P_t$ to hold according to $\bmu^t$. Let the sum
\begin{equation}
\label{eq:cost}
    \Cost_t=\sum_{s\leq t}\log(1/P_s)
\end{equation}
be the total cost of $\adv$ up to time $t$, and $\Cost_N$ the total cost of $\dv$. We say $\strength(\adv)\leq \Cost$ holds for some $\Cost\in\mathbb R$ if the bound $\Cost_N\leq \Cost$ almost surely. 
\end{definition}

The next key lemma shows that to obtain a lower bound for the failure probability of an algorithm, it suffices to prevent success using a low strength $\adv$.

\begin{lemma}
\label{lem:adversary-abstract}
Suppose there exists a randomness distorting adversary $\adv$ of strength $\Cost$ whose declarations ensure that any exploration algorithm $\alg$ outputs $i^*$ satisfying $p_{i^*}\leq \beta$ almost surely. Then the true failure probability of $\alg$ is
\[
    \bbP^{\mu_N,\alg}[p_{i^*}\leq \beta]\geq 
    e^{-\Cost}.
\]
\end{lemma}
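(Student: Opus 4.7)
The plan is to prove Lemma~\ref{lem:adversary-abstract} by a martingale argument that tracks how much the adversary's adaptive declarations ``cost'' in log-probability. The intuition is that the strength bound $\Cost_N \leq \Cost$ forces the product of the posterior probabilities of the declarations to satisfy $\prod_s P_s \geq e^{-\Cost}$, so under the natural (non-adversarial) Bayesian dynamics the algorithm's random trajectory has probability at least $e^{-\Cost}$ of staying inside the adversary's declared region of the history tree. On that event, the adversarial guarantee $p_{i^*} \leq \beta$ transfers directly to the natural run.

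Concretely, I would first formalize the adversary as producing declarations $D_1, D_2, \dots, D_M$ on the observable batches of samples, together with a final declaration $D_{M+1} = \{p_{i^*} \leq \beta\}$ on the latent variable once $\alg$ has chosen $i^*$; this last declaration is permitted by the definition and is needed because $\{p_{i^*} \leq \beta\}$ is generally not $H_N$-measurable. Let $P_s = \bbP[D_s \mid H_{s-1}]$ be the posterior probability at the moment of declaration, and $G_s = D_1 \cap \cdots \cap D_s$. I then define
\[
    Y_s = \one\{G_s\} \cdot \prod_{s'=1}^{s} P_{s'}^{-1}, \qquad Y_0 = 1,
\]
and verify via the tower law that $Y_s$ is a nonnegative $\bbP$-martingale: on $\neg G_{s-1}$ both $Y_{s-1}$ and $Y_s$ vanish, while on $G_{s-1}$ the identity $\E[\one\{D_s\} \mid H_{s-1}] = P_s$ gives $\E[Y_s \mid H_{s-1}] = Y_{s-1}$. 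In particular $\E[Y_{M+1}] = 1$.

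To conclude, note that on $F := G_{M+1}$ the strength bound yields $\prod_s P_s = e^{-\Cost_N} \geq e^{-\Cost}$, so $\one\{F\} = Y_{M+1}\cdot \prod_s P_s \geq e^{-\Cost} Y_{M+1}$ and therefore
\[
    \bbP[F] \geq e^{-\Cost}\,\E[Y_{M+1}] = e^{-\Cost}.
\]
Since $F \subseteq \{p_{i^*} \leq \beta\}$ by construction, the lemma follows. The main technical obstacle is the careful setup of the filtration and of the adversary's declarations: one must handle the fact that the number $M$ of declarations depends on $\alg$'s schedule (padding with dummy no-op declarations $P_s = 1$ to make $M+1$ deterministic is a clean fix), verify measurability of each $P_s$ as a function of $H_{s-1}$, and extend the filtration by a trivial step at time $M+1$ to accommodate the latent declaration on $p_{i^*}$. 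Once this bookkeeping is in place, the martingale calculation itself is routine.
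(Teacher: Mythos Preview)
Your proof is correct and is essentially the same change-of-measure (super)martingale argument as the paper's. The paper phrases it dually, working under the adversarial conditioning and showing that $M_t \equiv \bbP^{\bmu^t}[p_{i^*}\le\beta]\cdot\prod_{s\le t}P_s$ is a $[0,1]$-valued supermartingale (since conditioning on an event of probability $P_t$ inflates any posterior probability by at most $1/P_t$), then comparing $M_0$ with $M_N\ge e^{-\Cost}$; your likelihood-ratio martingale $Y_s$ under the natural measure is a cleaner formalization of the same idea.
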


\begin{proof} 
In the Bayesian viewpoint and conditioning on $\adv$'s declarations holding, the sequence
\[
    M_t\equiv
    \mathbb P^{\bmu^t}[p_{i^*}\leq \beta] 
    \cdot \prod_{s\leq t}P_s
\]
forms a $[0,1]$-valued supermartingale, i.e. $\bbE[M_{t+1}~|~\cF_t]\leq M_t$. This is because conditioning on an event with probability $P_t$ can increase the probability of an event by at most a factor $1/P_t$. Comparing $M_0$ and $M_N$ yields the result. 
\end{proof}

Our approach to the lower bound is to construct an adversary who declares that in each batch of samples, the empirical average reward $\hat p_i(n_{i,t})$ of arm $i$ drops by $\Omega\left(\frac{\varrho}{\log(N)}\right)$, at least for say $n_{i,t}\geq N^{\varrho}$. This forces the average reward of any arm to become small once it has been sampled $\Omega(N^{1-\varrho})$ times. Moreover it follows from the moderate deviation rates for the binomial distribution that this adversary pays $O\left(\frac{1}{\log^2(N)}\right)$ cost per sample.

\subsection{Fisher Information Distance}
\label{subsec:fisher}

Determining the tight constant $c_{\alpha,\beta}$ requires significant care. In particular the adversary must decrease the empirical average rewards $\hat p_{i,t}$ at a precise rate depending on $n_{i,t}$. This rate turns out to involve the \emph{Fisher information distance}. For $a,b\in [0,1]$ we define the Fisher information distance $d_F(a,b)$ between $a$ and $b$ to be 
\[
    d_F(a,b)=\left|\int_a^b \frac{dx}{\sqrt{x(1-x)}}\right|
    .
\] 
This agrees with the more general Fisher information metric when each $a\in [0,1]$ is identified with the corresponding Bernoulli distribution.
We refer the reader to \cite{nielsen2020elementary} for a survey on information geometry. In short, the Fisher information yields a natural Riemannian metric on families of probability distributions which are parametrized by smooth manifolds. However we will use only elementary properties of $d_F$.

We parametrize $[0,1]$ using the function $\theta:[0,1]\to [0,\pi]$ defined by 
\begin{equation}
\label{eq:theta}
    \theta(a)=d_F(0,a)
    =\int_0^{a}\frac{dx}{\sqrt{x(1-x)}}
    =\arccos(1-2a).
\end{equation}
In particular, 
\[
    d_F(a,b)=|\arccos(1-2a)-\arccos(1-2b)|\geq
    2|a-b|
\] 
and so $d_F(0,1)=\pi$. The main property of $\theta$ that we will use is the resulting differential equation
\begin{equation}
\label{eq:theta-ODE}
    \theta'(a)=\frac{1}{\sqrt{\theta(a)(1-\theta(a))}}.
\end{equation}
In our case, $\theta^{-1}$ parametrizes a ``constant speed" path through the space of Bernoulli variables, viewing the Fisher information. Correspondingly, our adversary will ensure that $\theta(\hat p_i(n_{i,t}))$ decreases linearly in $\log(n_{i,t})$. 

% \begin{figure}[t]
% \begin{center}
% \includegraphics[height=2in]{plot.png}
% \caption{\mscomment{Caption and label axes}} \label{frontier}
% \end{center}
% \end{figure}

\subsection{Preliminary Lemmas from Moderate Deviations}

Recall that for positive integers $a$ and $b$, the $\Beta(a,b)$ distribution has probability density function
\[
\frac{(a+b-1)!}{(a-1)! (b-1)!} x^{a-1}(1-x)^{b-1}
\]
for $x\in [0,1]$. We now recall a moderate deviations principle for the binomial distribution and a central limit theorem for the beta distribution.

\begin{lemma}[{\cite[Theorem 2.2]{de1992moderate}}]
\label{lem:moderate}
For any $0<\uq<\oq<1$ and constant $\varrho>0$ there exist $n_0(\uq,\oq,\varrho), \Delta_0(\uq,\oq,\varrho)$ and $M_0(\uq,\oq,\varrho)$ such that the following holds for all $p\in [\uq,\oq]$. For $n\geq n_0$ sufficiently large and any $\frac{1}{\Delta_0 \sqrt{n}}\leq \Delta\leq \Delta_0$ we have
\[
    e^{\left(-\frac{\Delta^2 }{2p(1-p)}-\varrho\right)n}
    \leq
    \mathbb P\left[\frac{Bin(n,p)}{n}\leq p- \Delta\right] \leq e^{\left(-\frac{\Delta^2 }{2p(1-p)}+\varrho\right)n}.  
\]
\end{lemma}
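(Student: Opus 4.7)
The plan is to prove this moderate deviations estimate by the classical Cramér--Chernoff approach combined with exponential tilting and a local (or ordinary) central limit theorem. The whole argument hinges on the fact that the Bernoulli rate function $I(q;p)=q\log(q/p)+(1-q)\log((1-q)/(1-p))$ admits the Taylor expansion $I(p-\Delta;p)=\frac{\Delta^2}{2p(1-p)}+O_{\uq,\oq}(\Delta^3)$ for $p\in[\uq,\oq]$, and that in the moderate deviations regime $\frac{1}{\Delta_0\sqrt n}\le \Delta\le \Delta_0$ the cubic remainder term contributes only $O(\Delta^3 n)\le O(\Delta_0\cdot \Delta^2 n)$, which can be absorbed into the $\varrho n$ slack by taking $\Delta_0$ small enough.

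For the upper bound, I would directly apply the Chernoff bound: optimizing over $\lambda\ge 0$ gives
\[
\P\!\left[\tfrac{\mathrm{Bin}(n,p)}{n}\le p-\Delta\right]\le e^{-nI(p-\Delta;p)}.
\]
Plugging in the Taylor expansion above and choosing $\Delta_0=\Delta_0(\uq,\oq,\varrho)$ so small that the cubic error is at most $\varrho/2$ times $\frac{\Delta^2}{2p(1-p)}\cdot\frac{1}{\Delta_0}$ yields the stated upper bound for all $p\in[\uq,\oq]$ uniformly.

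For the lower bound, I would use exponential tilting. Let $p'=p-\Delta$ and let $\widetilde\P$ be the i.i.d.\ Bernoulli$(p')$ measure on $n$ coordinates. A direct computation shows that for any $k\le np'$ the likelihood ratio satisfies
\[
\frac{d\P}{d\widetilde\P}\Big|_{\mathrm{Bin}=k}= \exp\!\bigl(-nI(k/n;p)+nI(k/n;p')\bigr).
\]
I would restrict attention to the event $A_n=\{p'-c/\sqrt n\le \mathrm{Bin}(n,p')/n\le p'\}$ for a suitable constant $c=c(\uq,\oq)$. The ordinary CLT under $\widetilde\P$ gives $\widetilde\P[A_n]\ge 1/4$ for $n\ge n_0$. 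On $A_n$, smoothness of $I(\cdot;p)$ and $I(\cdot;p')$ plus the bound $|k/n-p'|\le c/\sqrt n$ imply $|nI(k/n;p)-nI(p';p)|\le \varrho n/3$ and $nI(k/n;p')=O(c^2)=O_{\uq,\oq}(1)$, so that
\[
\P\!\left[\tfrac{\mathrm{Bin}(n,p)}{n}\le p-\Delta\right]
\ge \widetilde\P[A_n]\cdot \exp\!\bigl(-nI(p';p)-\varrho n/2\bigr)
\ge \exp\!\bigl(-nI(p';p)-\varrho n\bigr).
\]
Applying the Taylor expansion once more converts $nI(p';p)$ into $\frac{\Delta^2 n}{2p(1-p)}$ up to an error absorbed into $\varrho n$.

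The main obstacle is the lower bound at the smallest allowed scale $\Delta\asymp 1/\sqrt n$: here the Gaussian window of width $c/\sqrt n$ around $p'$ is comparable to $\Delta$ itself, so one must check that the CLT applies uniformly in $p\in[\uq,\oq]$ and that the quadratic approximation error in $I$ over a window of width $c/\sqrt n$ contributes an additive error of at most $O(1)=o(\varrho n)$ to the exponent rather than something that could overwhelm the leading $\Delta^2 n/(2p(1-p))$ term. Uniformity over $[\uq,\oq]$ follows because this interval is compact and bounded away from $\{0,1\}$, so all relevant Taylor constants and CLT rates can be taken uniform. Once $\Delta_0$ is chosen small enough to control the cubic remainder and $n_0$ large enough for the uniform CLT window to apply, both bounds hold simultaneously.
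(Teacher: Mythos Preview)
The paper does not prove this lemma; it is quoted verbatim as \cite[Theorem 2.2]{de1992moderate} and used as a black box. So there is no ``paper's own proof'' to compare against.

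Your sketch is the standard Cram\'er--tilting derivation of moderate deviations and is correct. A couple of small remarks. First, in your lower-bound step the bound $|nI(k/n;p)-nI(p';p)|\le \varrho n/3$ comes out cleanly because the derivative $\partial_q I(q;p)$ at $q=p'$ has magnitude $O(\Delta)$; over a window of width $c/\sqrt n$ this gives an $O(\Delta\sqrt n)$ change in $nI$, which is $\le \varrho n$ once $n\ge n_0(\varrho,\Delta_0,c)$. So the dependence of $n_0$ on all of $\uq,\oq,\varrho$ is exactly as the statement allows. Second, note that the statement as written is quite slack at the small end $\Delta\asymp 1/\sqrt n$: there $\frac{\Delta^2 n}{2p(1-p)}=O(1)$ while the slack $\varrho n\to\infty$, so both inequalities hold almost trivially (the lower bound is eventually below any fixed constant, and the CLT gives a constant lower bound on the probability). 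Your discussion of this ``main obstacle'' is thus more cautious than strictly necessary for the lemma \emph{as stated}, though it is the right concern for a sharper moderate-deviations statement with multiplicative rather than additive slack.
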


\begin{lemma}[{\cite[Lemma A.1]{moscovich2016exact}}]
\label{lem:beta-clt}
Let $\{a_n\}_{n\geq n_0}$ be a sequence satisfying
\[
    \ugamma\leq \frac{a_n}{n}\leq \ogamma.
\]
Then the $\Beta(n-a_n+1,a_n+1)$ distribution on $[0,1]$ obeys a central limit theorem with mean $\frac{a_n}{n}$ and standard deviation $\sqrt{\frac{(a_n/n)(1-(a_n/n))}{n}}$ in the sense that for any bounded sequence $(w_n)_{n\geq n_0}$ of real numbers and with $\Phi$ the normal CDF,
\[
    \lim_{n\to\infty}\lt|\Phi(w_n)-\mathbb P^{x\sim \Beta(n-a_n+1,a_n+1)}\left[
    \big(x-(a_n/n)\big)\cdot \sqrt{\frac{n}{(a_n/n)(1-(a_n/n))}}
    \leq w_n
    \right]
    \rt|
    =
    0.
\]
\end{lemma}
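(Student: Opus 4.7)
The plan is to reduce this central limit theorem for the Beta distribution to the classical CLT for the Binomial via the beta--binomial duality
\[
\mathbb P[\Beta(\alpha,\beta)\le x]\;=\;\mathbb P[\Bin(\alpha+\beta-1,\,x)\ge \alpha],
\]
valid for positive integers $\alpha,\beta$. A clean way to see this identity is to interpret both sides as the probability that the $\alpha$-th order statistic of $\alpha+\beta-1$ i.i.d.\ Uniform$(0,1)$ variables is at most $x$. This reduces a CDF statement about the Beta with integer parameters to a tail probability for a Binomial with sample size $n+1$.

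Write $q_n = a_n/n$ and $\sigma_n = \sqrt{q_n(1-q_n)/n}$, and set $x_n = \mu_n + w_n\sigma_n$, where $\mu_n$ is the stated mean of $X_n \sim \Beta(n-a_n+1, a_n+1)$ (equal to $q_n$ or $1-q_n$, depending on how one orders the Beta shape parameters in the convention used). Applying the duality with $\alpha = n - a_n + 1$ and $\beta = a_n + 1$ expresses $\mathbb P[X_n \le x_n]$ as a tail probability for $\Bin(n+1, x_n)$. I would then standardize that Binomial by subtracting its mean $(n+1)x_n$ and dividing by $\sqrt{(n+1)x_n(1-x_n)}$; a first-order Taylor expansion using $x_n = \mu_n + O(\sigma_n)$ and $\sigma_n = \Theta(1/\sqrt n)$ shows that the resulting standardized threshold equals $\pm w_n + O(1/\sqrt n)$, uniformly for bounded $(w_n)$ and $q_n \in [\ugamma,\ogamma]$.

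The final ingredient is the Berry--Esseen theorem applied to $\Bin(n+1, x_n)$, which is a sum of $n+1$ i.i.d.\ Bernoulli$(x_n)$ terms with variance bounded below by $\ugamma(1-\ogamma) > 0$. This gives a uniform approximation of the standardized Binomial CDF by $\Phi$ with error $O(1/\sqrt n)$. Combined with the preceding estimate on the standardized threshold and the Lipschitz continuity of $\Phi$ on bounded intervals, this delivers the claimed uniform convergence in $w_n$.

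The main technical obstacle is the bookkeeping in the reduction step: the $+1$ shifts inside the Beta parameters and the $n$ versus $n+1$ mismatch in the Binomial normalization contribute deterministic corrections of order $1/n$ to the Binomial mean, which on the $\sigma_n$ scale are $O(1/\sqrt n)$ and hence negligible, but one must track them carefully to be sure that no additional amplification occurs and that the expansion of the standardized threshold genuinely converges to $\pm w_n$. The uniform lower bound $\min(q_n, 1-q_n) \ge \min(\ugamma, 1-\ogamma) > 0$ is essential throughout, since it keeps $\sqrt{x_n(1-x_n)}$ bounded below and is what ultimately forces the error terms to be $o(1)$ uniformly in the allowed range.
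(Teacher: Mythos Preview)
The paper does not prove this lemma at all; it is stated as a citation to \cite[Lemma A.1]{moscovich2016exact} and used as a black box. So there is no ``paper's proof'' to compare against.

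Your approach via the beta--binomial duality $\mathbb P[\Beta(\alpha,\beta)\le x]=\mathbb P[\Bin(\alpha+\beta-1,x)\ge\alpha]$ followed by Berry--Esseen for the resulting Binomial is correct and is in fact the standard route to this kind of result. The computation you sketch goes through: with $\alpha=n-a_n+1$, $\beta=a_n+1$ one gets a $\Bin(n+1,x_n)$ tail; the deterministic $+1$ shifts and the $n$ versus $n+1$ mismatch contribute $O(1/\sqrt n)$ to the standardized threshold exactly as you say, and the uniform lower bound on $x_n(1-x_n)$ (from $q_n\in[\ugamma,\ogamma]$ and $w_n$ bounded) makes the Berry--Esseen constant uniform. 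The Lipschitz continuity of $\Phi$ then absorbs the $O(1/\sqrt n)$ threshold perturbation.

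One small point worth flagging, which you already half-noticed: under the paper's stated density convention $\Beta(a,b)\propto x^{a-1}(1-x)^{b-1}$, the mean of $\Beta(n-a_n+1,a_n+1)$ is $(n-a_n+1)/(n+2)\approx 1-a_n/n$, not $a_n/n$ as written in the lemma. This is a parameter-ordering typo in the paper (and it recurs in the application inside Lemma~\ref{lem:might-be-ok}, where the text says the density is Beta$(R_{i,t}+1,n-R_{i,t}+1)$ but then writes $z\sim\Beta(n-R_{i,t}+1,R_{i,t}+1)$). Your proof works for either orientation once $\mu_n$ is fixed consistently, so this does not affect the validity of your argument.
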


In the next two lemmas, we lower bound the probability that $\hat p_{i,t}$ changes significantly when the number $n_{i,t}$ of samples for $a_i$ increases by a factor $(1+\varrho)$.

\begin{lemma}
\label{lem:might-be-ok}

Assume $\mu$ is $(\ugamma,\ogamma,\uf,\of)$-admissible. Suppose that arm $i$'s average reward $\hat p_{i,t}$ after $n=n_{i,t}$ samples satisfies
\begin{equation}
\label{eq:pitbetaogamma}
    \hat p_{i,t}\in [\beta,\ogamma].
\end{equation}
Then for $n\geq C(\ugamma,\ogamma,\uf,\of,\beta)$ sufficiently large,
\begin{equation}
\label{eq:good-chance-to-be-ok}
    \mathbb P^{x\sim \mu_{i,n}}\big[x\leq \hat p_{i,t}\big]\geq \frac{\uf}{3\of}.
\end{equation}
\end{lemma}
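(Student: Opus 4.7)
The plan is to exhibit the posterior density explicitly, reduce $\bbP^{x\sim\mu_{i,n}}[x\leq \hat p_{i,t}]$ to a ratio of $\Beta$-distribution probabilities up to a multiplicative factor $\uf/\of$, and then control that ratio using the Beta CLT (Lemma~\ref{lem:beta-clt}).

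First, letting $R=n\hat p_{i,t}$ be the observed number of successes, Bayes' rule applied to the admissible prior $\mu(dx)=f(x)\,dx$ on $[\ugamma,\ogamma]$ gives
$$\mu_{i,n}(dx) \;=\; \frac{f(x)\,x^R(1-x)^{n-R}}{Z}\,dx,\qquad Z=\int_{\ugamma}^{\ogamma} f(x)\,x^R(1-x)^{n-R}\,dx.$$
Upper-bounding $f$ by $\of$ in the denominator of $\bbP^{x\sim\mu_{i,n}}[x\leq \hat p_{i,t}]$ and lower-bounding $f$ by $\uf$ in the numerator yields
$$\bbP^{x\sim\mu_{i,n}}\big[x\leq \hat p_{i,t}\big]\;\geq\;\frac{\uf}{\of}\cdot\frac{\int_{\ugamma}^{\hat p_{i,t}} x^R(1-x)^{n-R}\,dx}{\int_{\ugamma}^{\ogamma} x^R(1-x)^{n-R}\,dx}.$$
After division by the common Euler factor $B(R+1,n-R+1)$, the two integrals become, respectively, the $\Beta(R+1,n-R+1)$ probabilities of $[\ugamma,\hat p_{i,t}]$ and $[\ugamma,\ogamma]$.

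Next I would invoke Lemma~\ref{lem:beta-clt}. Since $\hat p_{i,t}\in[\beta,\ogamma]$ with $0<\ugamma<\beta$ and $\ogamma<1$, the parameter $R/n=\hat p_{i,t}$ stays in a compact subinterval of $(0,1)$, so the CLT applies: the $\Beta(R+1,n-R+1)$ distribution has mean $\hat p_{i,t}+O(1/n)$ and standard deviation $\Theta(1/\sqrt{n})$, and its CDF converges uniformly to the Gaussian one on compact sets near the mean. Because $\hat p_{i,t}-\ugamma\geq \beta-\ugamma>0$ is bounded away from zero, the mass below $\ugamma$ is $o(1)$, hence
$$\bbP^{y\sim\Beta(R+1,n-R+1)}\big[\ugamma\leq y\leq \hat p_{i,t}\big]\;\to\;\tfrac12$$
as $n\to\infty$. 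The denominator $\bbP[\ugamma\leq y\leq \ogamma]$ is always at most $1$; if $\hat p_{i,t}<\ogamma$ it tends to $1$, while in the boundary case $\hat p_{i,t}=\ogamma$ it equals the numerator, making the ratio exactly $1$. In every case the ratio exceeds $1/3$ for all $n$ larger than some $n_0(\ugamma,\ogamma,\uf,\of,\beta)$, which gives the claimed bound $\uf/(3\of)$.

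The only place requiring care is the uniformity of the CLT over $\hat p_{i,t}\in[\beta,\ogamma]$; this is handled either by applying Lemma~\ref{lem:beta-clt} along a convergent subsequence extracted from a hypothetical counterexample (compactness of $[\beta,\ogamma]$ does the rest) or by noting that the lemma as stated already allows arbitrary sequences. I do not expect any substantive obstacle: once one writes the posterior explicitly, the lemma reduces to the routine fact that a Bernoulli posterior with a prior of bounded density concentrates at the empirical mean at the same rate as the corresponding Beta distribution.
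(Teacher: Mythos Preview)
Your proposal is correct and follows essentially the same approach as the paper: write the posterior explicitly, compare to a Beta distribution via the $\uf/\of$ density bound, and invoke the Beta CLT (Lemma~\ref{lem:beta-clt}) to get the factor $1/3$. The only cosmetic difference is that the paper enlarges the denominator integral to all of $[0,1]$ (yielding the exact Beta normalizing constant) rather than keeping the ratio of two truncated Beta probabilities and bounding the denominator by $1$; the two are equivalent.
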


\begin{proof} 
Let $R_{i,t}=n\hat p_{i,t}$ be the total reward from arm $i$ so far. The posterior distribution $\mu_{i,t}$ for $p_i$ takes the form
\[
    \mu_{i,t}(dx)
    =
    \frac{x^{R_{i,t}} (1-x)^{n-R_{i,t}} f(x)dx}
    {\int_{\ugamma}^{\ogamma}x^{R_{i,t}} (1-x)^{n-R_{i,t}} f(x)dx}.
\]
For $x\in [\ugamma,\ogamma]$ we estimate
\[
    \frac{x^{R_{i,t}} (1-x)^{n-R_{i,t}} f(x)}
    {\int_{\ugamma}^{\ogamma}x^{R_{i,t}} (1-x)^{n-R_{i,t}} f(x)dx}
    \geq
    (\uf/\of)\cdot 
    \frac{x^{R_{i,t}} (1-x)^{n-R_{i,t}}}
    {\int_{0}^{1}x^{R_{i,t}} (1-x)^{n-R_{i,t}} dx}.
\]
The right-hand side is the density of a beta variable with parameters $(R_{i,t}+1,n-R_{i,t}+1)$. We conclude that 
\[
    \mathbb P^{x\sim \mu_{i,t}}\big[x\in [\ugamma,\hat p_{i,t}]\big]
    \geq
    (\uf/\of) \cdot \mathbb P^{z\sim \Beta(n-R_{i,t}+1,R_{i,t}+1)}\big[z\in [\ugamma,\hat p_{i,t}]\big]
\]
For $n$ sufficiently large, it follows from Lemma~\ref{lem:beta-clt} and \eqref{eq:pitbetaogamma} that
\[
    \mathbb P^{z\sim \Beta(n-R_{i,t}+1,R_{i,t}+1)}\big[z\in [\ugamma,\hat p_{i,t}]\big]\geq \frac{1}{3}.
\]
Therefore $\bbP^{\mu_{i,t}}[p_i\leq \hat p_{i,t}]\geq \frac{1}{3}$, proving \eqref{eq:good-chance-to-be-ok}. 
\end{proof}

\begin{lemma}
\label{lem:probmove}
Assume $\mu$ is $(\ugamma,\ogamma,\uf,\of)$-admissible and that \eqref{eq:pitbetaogamma} holds. For $n=n_{i,t}$, let $\tilde n\geq 1$ satisfy $|\tilde n-\varrho n|\leq 2$. Denote by
\[
    \tilde p_i 
    =
    \frac{R_{i,n+\tilde n}-R_{i,n}}{\tilde n}
\] 
the average reward from the $(n+1)$-th through $(n+\tilde n)$-th samples of arm $i$. Then as $n\to\infty$, for any sequence $\Delta_n=\Theta(1/\log n)$, 
\begin{equation}
\label{eq:probmove-lb}
    \mathbb P^t[\tilde p_i\leq \theta^{-1}(\theta(\hat p_{i,t})-\Delta_n)] \geq  \exp\lt(-\frac{n\varrho \Delta_n^2 (1+o_n(1))}{2}\rt).
\end{equation}
\end{lemma}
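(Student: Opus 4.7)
The plan is to lower-bound $\bbP^t[\tilde p_i \leq \theta^{-1}(\theta(\hat p_{i,t})-\Delta_n)]$ by integrating the moderate-deviation estimate of Lemma~\ref{lem:moderate} against the posterior distribution $\mu_{i,t}$ of $p_i$, restricted to a small neighborhood of $\hat p_{i,t}$ on which the posterior has mass bounded below and $q(1-q)\approx \hat p_{i,t}(1-\hat p_{i,t})$.

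First I would use the ODE \eqref{eq:theta-ODE} to Taylor-expand
\[
    \Delta := \hat p_{i,t} - \theta^{-1}(\theta(\hat p_{i,t}) - \Delta_n) = \Delta_n\sqrt{\hat p_{i,t}(1-\hat p_{i,t})}\bigl(1+O(\Delta_n)\bigr),
\]
so that, since $\tilde n = \varrho n + O(1)$,
\[
    \frac{\tilde n\,\Delta^2}{2\hat p_{i,t}(1-\hat p_{i,t})} = \frac{\varrho n\,\Delta_n^2}{2}\bigl(1+o_n(1)\bigr),
\]
which is precisely the target exponent in \eqref{eq:probmove-lb}.

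Next I would restrict attention to the set $\cE := [\hat p_{i,t} - C/\sqrt n,\, \hat p_{i,t}]$ for a constant $C=C(\ugamma,\ogamma)$ large enough that combining the posterior-density comparison used in the proof of Lemma~\ref{lem:might-be-ok} with the beta central limit theorem (Lemma~\ref{lem:beta-clt}) yields $\mu_{i,t}(\cE)\geq c_0$ for some fixed $c_0>0$ and all large $n$. For any $q\in\cE$ we have both $q(1-q) = \hat p_{i,t}(1-\hat p_{i,t})\bigl(1+O(1/\sqrt n)\bigr)$ and $q - \theta^{-1}(\theta(\hat p_{i,t})-\Delta_n) = \Delta - O(1/\sqrt n)$, which lies in the admissible range of Lemma~\ref{lem:moderate} because $\Delta=\Theta(1/\log n)\gg 1/\sqrt{\tilde n}$ and $\Delta\to 0$. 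Applying Lemma~\ref{lem:moderate} conditionally on $p_i=q\in\cE$ yields
\[
    \bbP[\tilde p_i \leq \theta^{-1}(\theta(\hat p_{i,t})-\Delta_n) \mid p_i = q] \geq \exp\!\left(-\frac{\tilde n\,\Delta^2}{2\,q(1-q)}\bigl(1+o_n(1)\bigr)\right) = \exp\!\left(-\frac{\varrho n\,\Delta_n^2}{2}\bigl(1+o_n(1)\bigr)\right),
\]
uniformly in $q\in\cE$.

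Integrating this against $\mu_{i,t}$ restricted to $\cE$ will then give \eqref{eq:probmove-lb}, after absorbing the prefactor $\log(1/c_0)=O(1)$ into the $(1+o_n(1))$ term in the exponent (valid because $n\Delta_n^2\to\infty$). The main obstacle will be the joint bookkeeping of the several $(1+o_n(1))$ corrections --- from the Taylor expansion of $\theta^{-1}$, from replacing $q(1-q)$ with $\hat p_{i,t}(1-\hat p_{i,t})$ on $\cE$, and from the $\varrho$-slack built into Lemma~\ref{lem:moderate} --- all of which are controlled using only $\Delta_n\to 0$, $n\to\infty$, and the fact that $\hat p_{i,t}(1-\hat p_{i,t})$ is bounded away from $0$ and $1$ thanks to \eqref{eq:pitbetaogamma} and admissibility of $\mu$.
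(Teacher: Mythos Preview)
Your proposal is correct and follows the same overall strategy as the paper: lower-bound the posterior mass on values $p_i\leq \hat p_{i,t}$, apply the moderate-deviation estimate of Lemma~\ref{lem:moderate} conditionally, Taylor-expand $\theta^{-1}$ via \eqref{eq:theta-ODE}, and absorb the constant prefactor into the $(1+o_n(1))$ using $n\Delta_n^2\to\infty$.

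The one difference worth noting is that the paper avoids your shrinking window $\cE=[\hat p_{i,t}-C/\sqrt n,\hat p_{i,t}]$ and the associated uniformity bookkeeping by using stochastic monotonicity: since $p\mapsto \bbP\big[\Bin(\tilde n,p)/\tilde n\leq \theta^{-1}(\theta(\hat p_{i,t})-\Delta_n)\big]$ is decreasing, one can directly write
\[
    \bbP^t[\tilde p_i\leq \theta^{-1}(\theta(\hat p_{i,t})-\Delta_n)]
    \geq
    \bbP^{\mu_{i,t}}[p_i\leq \hat p_{i,t}]\cdot
    \bbP\!\lt[\frac{\Bin(\tilde n,\hat p_{i,t})}{\tilde n}\leq \theta^{-1}(\theta(\hat p_{i,t})-\Delta_n)\rt],
\]
invoke Lemma~\ref{lem:might-be-ok} verbatim for the first factor, and apply Lemma~\ref{lem:moderate} at the single point $p=\hat p_{i,t}$ for the second. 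This eliminates the need to control the dependence of the moderate-deviation exponent on $q\in\cE$ and to re-derive a variant of Lemma~\ref{lem:might-be-ok} for the smaller interval. Your approach works too, but the monotonicity shortcut spares exactly the ``joint bookkeeping of the several $(1+o_n(1))$ corrections'' you flagged as the main obstacle.
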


\begin{proof} 
Stochastic monotonicity implies that 
\[
    \mathbb P\left[\frac{\Bin(\tilde n,p)}{\tilde n}\leq \theta^{-1}\big(\theta(\hat p_{i,t})-\Delta_n\big)\right]
\]  
is a decreasing function of $p\in [0,1]$.
Combining with Lemma~\ref{lem:might-be-ok}, it follows that 
\begin{align*}
    \mathbb P^t[E] & =
     \int \mathbb P\left[\frac{\Bin(\tilde n,x)}{\tilde n}\leq \theta^{-1}\big(\theta(\hat p_{i,t})-\Delta_n\big)\right] d \mu_{i,t}(x)
     \\
    & \geq 
    \mathbb P^{\mu_{i,t}}[p_i\leq \hat p_{i,t}]
    \cdot
    \mathbb P\left[\frac{\Bin(\tilde n,\hat p_{i,t})}{\tilde n}\leq \theta^{-1}\big(\theta(\hat p_{i,t})-\Delta_n\big)\right]
    \\
    & \geq 
    \frac{\uf}{3\of}
    \cdot
    \mathbb P\left[\frac{\Bin(\tilde n,\hat p_{i,t})}{\tilde n}\leq \theta^{-1}\big(\theta(\hat p_{i,t})-\Delta_n\big)\right].
\end{align*}
Since $\theta$ is smooth with smooth inverse on $[\ugamma,\ogamma]$ and $\Delta_n\leq o_n(1)$, we have 
\begin{align*}
    \hat p_{i,t}-\theta^{-1}\big(\theta(\hat p_{i,t})-\Delta_n\big)
    &=
    (1\pm o_n(1)) \Delta_n
    \cdot(\theta^{-1})'\big(\theta(\hat p_{i,t})\big)
    \\
    &=
    \frac{(1\pm o_n(1))\cdot\Delta_n}{\theta'(\theta^{-1}(\hat p_{i,t}))}
    \\
    &=
    (1\pm o_n(1))
    \cdot
    \Delta_n
    \sqrt{\hat p_{i,t}(1-\hat p_{i,t})}.
\end{align*}
The result now follows from Lemma~\ref{lem:moderate}, where we absorb the factor $\uf/(3\of)$ into the $o_n(1)$. 
\end{proof}

\subsection{Proof of Theorem~\ref{thm:mainLB}}

Recall the definition~\eqref{eq:ogamma-def} of $\ugamma$ and $\ogamma$. We require $\cA$ to be $B$-batch-compressed for $B=B(N,\varrho)$ containing:
\begin{enumerate}
	\item All positive integers at most $N^{2\varrho}$.
	\item All positive multiples of $\lfloor N^{\varrho}\rfloor$ at most $N^{6\varrho}$.
	 \item Integers of the form $\lfloor N^{6\varrho}(1+\varrho)^j \rfloor$ for $j\geq 0$.
\end{enumerate}

It is easy to see that $B$ thus defined is $\varrho$-slowly increasing for any $\varrho>0$ and $N$ sufficiently large. We denote $b_k=\lfloor N^{6\varrho}(1+\varrho)^k \rfloor$ so that $|b_{k+1}-(1+\varrho)b_k|\leq 2$. (This choice of indexing differs from that of Definition~\ref{defn:batch}, which will not be used in the sequel.)

We next construct our randomness distorting adversary $\adv=\adv(N,\varrho)$. For each arm $i$, the adversary $\adv$ acts as follows depending on the current number of samples $n_{i,t}$.

\begin{enumerate}
    \item 
    \label{it:init-nothing}
    If $n_{i,t}\leq N^{2\varrho}$, then $\adv$ does nothing. 
    \item 
    \label{it:init-mild}
    When $N^{2\varrho}\leq n_{i,t}< N^{6\varrho}$ increases by $N^{\varrho}$, $\adv$ declares that the average reward of this batch of $N^{\varrho}$ samples is at most $\overline{\gamma}-N^{-\varrho}$.
    \item 
    When $n_{i,t}$ increases from $b_k\geq N^{6\varrho}$ to $b_{k+1}$:
    \begin{enumerate}[(a)]
        \item 
        \label{it:main-adversary}
            If $\hat p_i(b_k)> \beta$ holds, then $\adv$ declares that
            \begin{equation}
            \label{eq:main-decrement}
                \theta(\hat p_i(b_{k+1}))\leq \theta(\hat p_i(b_k))-\frac{\varrho(1+10\varrho)d_{F}(\alpha,\beta)}{\log N}.
            \end{equation}
        \item
        \label{it:beta-bad}
            If $\hat p_i(b_k)\leq \beta$ holds, then $\adv$ declares that 
            \[
                \hat p_i(b_{k+1})\leq\beta.
            \]
    \end{enumerate} 
    \item 
    \label{it:finish-bad}
    When the $\cA$ chooses the arm $a_{i^*}$ to output, $\adv$ declares that $p_{i^*}<\beta$.
\end{enumerate}

Due to step \ref{it:finish-bad}, the declarations made by $\adv$ ensure that $p_{i^*}<\beta$. Recalling Lemma~\ref{lem:adversary-abstract} and Proposition~\ref{prop:slowly-increasing}, it remains to show the upper bound
\[
    \strength(\adv)\leq \frac{(c_{\alpha,\beta}+C_*\varrho)N}{\log^2(N)}
\]
for a constant $C_*=C_*(\ugamma,\ogamma,\uf,\of,\beta,\alpha)$ independent of $\varrho$ (and $N$). 
We show this bound in several parts. Recalling \eqref{eq:cost}, we refer to the \emph{cost} of a step above as the contribution to $\Cost$ from the corresponding declarations by $\adv$. The most important parts are Lemmas~\ref{lem:cost-estimate-main} and \ref{lem:p-downward-drift}, which bound the cost of the main step~\ref{it:main-adversary} and form the dominant contribution to $\Cost$. Note that throughout the analysis below, all cost upper bounds hold almost surely and we \textbf{assume that all of $\adv$'s declarations hold true}.

\begin{lemma}
\label{lem:init-mild-cost}
The total cost from step~\ref{it:init-mild} is at most $C_* N^{1-\varrho}$, for $N\geq C(\ugamma,\ogamma,\uf,\of,\beta,\alpha,\varrho)$ sufficiently large.
\end{lemma}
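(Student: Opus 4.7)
My plan rests on two observations. First, I count how many declarations Step~\ref{it:init-mild} can possibly make across the whole run. Each such declaration corresponds to a single batch of $\lfloor N^{\varrho}\rfloor$ new samples from some arm, and the algorithm is allocated only $N$ samples in total, so the number of Step~\ref{it:init-mild} declarations is at most $N/\lfloor N^{\varrho}\rfloor \leq 2 N^{1-\varrho}$ (uniformly over histories). Hence, to prove the lemma, it suffices to show that each such declaration costs $-\log P_s = O(1)$.

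For a fixed declaration at time $s$, let $\overline R$ denote the average of the upcoming batch of $\lfloor N^{\varrho}\rfloor$ samples of the relevant arm $i$. Then
\[
    P_s
    \,=\,
    \int_{\ugamma}^{\ogamma}
    \bbP\!\left[\frac{\Bin(\lfloor N^{\varrho}\rfloor, p)}{\lfloor N^{\varrho}\rfloor} \leq \ogamma - N^{-\varrho}\right] d\mu_{i,t}(p).
\]
By stochastic monotonicity the integrand is a decreasing function of $p$, so I can lower-bound it uniformly by its value at $p=\ogamma$ (which is the worst case because $\mu_{i,t}$ is supported in $[\ugamma,\ogamma]$). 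Since I am asking for a deviation of only $N^{-\varrho}$ from the mean $\ogamma$, and the standard deviation of the batch average at $p=\ogamma$ is $\sqrt{\ogamma(1-\ogamma)}\cdot N^{-\varrho/2}$, the deviation is of order $N^{-\varrho/2}\to 0$ standard deviations. By the central limit theorem (or a direct Berry--Esseen estimate) the worst-case probability tends to $\Phi(0)=1/2$ as $N\to\infty$, hence is $\geq 1/3$ for $N$ large enough. Therefore $P_s\geq 1/3$ and $-\log P_s\leq \log 3$, uniformly over all Step~\ref{it:init-mild} declarations.

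Combining the two bounds gives total cost at most $2\log 3\cdot N^{1-\varrho}\leq C_*\,N^{1-\varrho}$, as claimed. The one mildly delicate point worth being careful about is that the posterior $\mu_{i,t}$ can in principle concentrate arbitrarily close to $\ogamma$ after conditioning on prior observations, but this is precisely handled by stochastic monotonicity: the worst case reduces to $p=\ogamma$ regardless of where the posterior sits within $[\ugamma,\ogamma]$. The rest of the argument is a routine CLT-type calculation that relies only on $\ogamma\in(0,1)$ being fixed and bounded away from $0$ and $1$, which follows from the admissibility assumption on $\mu$.
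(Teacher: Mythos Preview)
Your proposal is correct and follows essentially the same route as the paper's proof: bound the number of Step~\ref{it:init-mild} declarations by $O(N^{1-\varrho})$ via the total sample budget, and bound each declaration's cost by $O(1)$ using stochastic monotonicity (worst case $p=\ogamma$) together with a CLT-type estimate showing the required deviation is $o(1)$ standard deviations. Your handling of the posterior via monotonicity is exactly the right observation, and the paper makes the same reduction (writing simply ``since $p_i\le\ogamma$ almost surely'').
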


\begin{proof} 
The probability for each such declaration by $\adv$ is at least
\begin{equation}
\label{eq:simple-binomial}
    \mathbb P[\Bin(N^{2\varrho},\ogamma)\leq \ogamma N^{2\varrho}-N^{\varrho}]
\end{equation}
since $p_i\leq \ogamma$ almost surely. Recall that a $\Bin(N^{2\varrho},\ogamma)$ random variable obeys a central limit theorem centered at $\ogamma N^{2\varrho}$ with standard deviation at least $C(\ogamma)N^{\varrho}$. Therefore the probability in \eqref{eq:simple-binomial} is at least $\frac{1}{3}$ for $N$ is sufficiently large depending on $\varrho$. Hence each such declaration costs at most $C_*$ for $N$ sufficiently large. Moreover such declarations can occur only $N^{1-\varrho}$ times because each one involves $N^{\varrho}$ samples, and the base algorithm $\cA$ is an $N$-sample algorithm. This completes the proof. 
\end{proof}

\begin{lemma}
\label{lem:beta-bad-cost}
The total cost from step~\ref{it:beta-bad} is at most $C_* N^{1-6\varrho}$ as long as $N\geq C(\ugamma,\ogamma,\uf,\of,\varrho)$.
\end{lemma}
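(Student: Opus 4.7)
The plan is to show that each declaration made in step~\ref{it:beta-bad} has probability bounded below by an absolute constant under the relevant posterior, so that its cost is $O(1)$, and then to multiply this by an upper bound on the total number of such declarations.

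Fix a batch transition $b_k\to b_{k+1}$ with $b_k\geq N^{6\varrho}$ at which step~\ref{it:beta-bad} fires, so $\hat p_i(b_k)\leq \beta$. I would write
\[
    \hat p_i(b_{k+1})=\frac{b_k \hat p_i(b_k)+\tilde n\,\tilde p_i}{b_{k+1}},\qquad \tilde n=b_{k+1}-b_k,
\]
where $\tilde p_i$ is the empirical average over the new batch. Because $\hat p_i(b_k)\leq \beta$, the event $\{\tilde p_i\leq \beta\}$ already implies the declaration $\{\hat p_i(b_{k+1})\leq \beta\}$, so it is enough to lower bound $\mathbb P[\tilde p_i\leq \beta]$ under the current posterior $\mu_{i,b_k}$. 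I would first observe that because every prior adversarial declaration about arm~$i$ is an event in the $\sigma$-algebra generated by its rewards, the posterior $\mu_{i,b_k}$ coincides with the plain Bayesian posterior determined by $(b_k,\hat p_i(b_k))$, with density proportional to $x^{R_{i,t}}(1-x)^{b_k-R_{i,t}}f(x)$ on $[\ugamma,\ogamma]$, exactly as in the proof of Lemma~\ref{lem:might-be-ok}.

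An argument paralleling Lemma~\ref{lem:might-be-ok}, with the roles of $\beta$ and $\hat p_{i,t}$ swapped, will then yield $\mu_{i,b_k}[p_i\leq \beta]\geq c_0(\uf,\of)>0$: when $\hat p_i(b_k)\in[\ugamma,\beta]$ one sandwiches the density against a $\Beta(R_{i,t}+1,b_k-R_{i,t}+1)$ and applies Lemma~\ref{lem:beta-clt} to get posterior mass at least $\uf/(3\of)$ on $(-\infty,\hat p_i(b_k)]\subseteq(-\infty,\beta]$, while the case $\hat p_i(b_k)<\ugamma$ is immediate from the prior support constraint. Stochastic monotonicity of $x\mapsto \mathbb P[\Bin(\tilde n,x)\leq \tilde n\beta]$ combined with the binomial CLT at $p=\beta$ (applicable since $\tilde n\geq \varrho N^{6\varrho}\to\infty$) then gives $\mathbb P[\tilde p_i\leq \beta\mid p_i=x]\geq \tfrac 12-o_N(1)$ for every $x\leq \beta$, and integrating against $\mu_{i,b_k}$ produces $\mathbb P[\tilde p_i\leq \beta]\geq (\tfrac 12-o_N(1))c_0$. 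The per-declaration cost is thus $\log(1/\mathbb P)=O(1)$.

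Finally I would count declarations by noting that each step~\ref{it:beta-bad} batch consumes $\tilde n\geq \varrho b_k-2\geq \tfrac 12 \varrho N^{6\varrho}$ samples and that $\cA$ uses $N$ samples in total, so the number of such batches summed over all arms is at most $CN^{1-6\varrho}/\varrho$; combined with the $O(1)$ per-declaration cost this yields total cost $C_* N^{1-6\varrho}$. The main obstacle I anticipate is the Bayesian bookkeeping: adversarial declarations from earlier batches (either branch of step~3 for arm~$i$, as well as step~\ref{it:init-mild}) formally condition the probability space, and one must verify that they do not distort the posterior beyond what the plain Bayesian formula captures. This is resolved by the observation that every such declaration is measurable with respect to the reward history of arm~$i$, so that conditioning on those events together with the observed rewards is equivalent to conditioning on the rewards alone, leaving the explicit posterior density intact.
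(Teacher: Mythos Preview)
Your proposal is correct and takes essentially the same approach as the paper's two-sentence proof: bound each declaration's cost by a constant via \eqref{eq:good-chance-to-be-ok} plus stochastic monotonicity, then count declarations (each batch uses at least $\Omega(\varrho N^{6\varrho})$ samples, so there are at most $O(N^{1-6\varrho}/\varrho)$ of them).

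One refinement worth noting: your case split has a rough edge. In the case $\hat p_i(b_k)\in[\ugamma,\beta]$, bounding the posterior mass on $(-\infty,\hat p_i(b_k)]$ via the Beta CLT fails when $\hat p_i(b_k)$ is within $O(b_k^{-1/2})$ of $\ugamma$, since then $\mathbb P[\Beta\in[\ugamma,\hat p_i(b_k)]]\to 0$; and the case $\hat p_i(b_k)<\ugamma$ is not literally ``immediate from the prior support constraint,'' since that only yields $p_i\ge\ugamma$, not $p_i\le\beta$. The cleaner route---and almost certainly what the paper means by ``stochastic monotonicity''---is to observe that the posterior $\mu_{i,b_k}$ has monotone likelihood ratio in $\hat p_i(b_k)$, so $\mu_{i,b_k}((-\infty,\beta])$ is nonincreasing in $\hat p_i(b_k)$. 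It therefore suffices to check the boundary $\hat p_i(b_k)=\beta$, where \eqref{eq:good-chance-to-be-ok} applies verbatim, and this handles all $\hat p_i(b_k)\le\beta$ at once with no boundary cases.
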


\begin{proof} 
It suffices to show that the cost per step~\ref{it:beta-bad} declaration is at most $C_*$. This follows from \eqref{eq:good-chance-to-be-ok} and stochastic monotonicity. 
\end{proof}

\begin{lemma}
\label{lem:cost-estimate-main}
The total cost from step~\ref{it:main-adversary} is at most
\[
    \frac{N}{\log^2(N)}\cdot(c_{\alpha,\beta}+C_* \varrho+o_N(1)).
\]
\end{lemma}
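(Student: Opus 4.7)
The plan is to apply Lemma~\ref{lem:probmove} to each individual declaration made during step~(\ref{it:main-adversary}) and then sum up the resulting cost bounds, invoking the fact that step~(\ref{it:main-adversary}) batches consume at most $N$ total samples. First I would translate the declaration, which constrains $\theta(\hat p_i(b_{k+1}))$, into a sufficient condition on the intra-batch average $\tilde p_i$ of the $\tilde n = b_{k+1}-b_k$ fresh samples, to which Lemma~\ref{lem:probmove} directly applies. Writing the convex combination $\hat p_i(b_{k+1}) = \fr{b_k \hat p_i(b_k) + \tilde n \tilde p_i}{b_{k+1}}$ with $\tilde n / b_{k+1} = \fr{\varrho}{1+\varrho}(1+o_N(1))$ (valid since $b_k \geq N^{6\varrho}$ and $|\tilde n - \varrho b_k| \leq 2$), together with a Taylor expansion of $\theta$ on $[\ugamma,\ogamma]$, gives
\begin{equation*}
    \theta(\hat p_i(b_{k+1})) - \theta(\hat p_i(b_k))
    = \fr{\varrho}{1+\varrho}\big(\theta(\tilde p_i) - \theta(\hat p_i(b_k))\big)(1+o_N(1)).
\end{equation*}
The Taylor expansion is legitimate because $\hat p_i(b_k) \in [\beta,\ogamma]$ throughout step~(\ref{it:main-adversary}): the lower bound is the precondition for this case, while the upper bound holds inductively from the base case $\hat p_i(N^{6\varrho}) \leq \ogamma - N^{-\varrho} + N^{-4\varrho}$ coming from step~(\ref{it:init-mild}), combined with the monotone decrease enforced by the declarations. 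Consequently, the adversary's declaration is implied by $\theta(\tilde p_i) \leq \theta(\hat p_i(b_k)) - \Delta^*$ with
\begin{equation*}
    \Delta^* = \fr{(1+\varrho)(1+10\varrho)(1+o_N(1))\,d_F(\alpha,\beta)}{\log N}.
\end{equation*}

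Next I would invoke Lemma~\ref{lem:probmove} with $n = b_k$ and this $\Delta^*$. Since $b_k \in [N^{6\varrho}, N]$, we have $\log b_k \in [6\varrho\log N, \log N]$, so $\Delta^* = \Theta(1/\log b_k)$ with constants depending only on $\varrho,\alpha,\beta$, as required. The lemma produces a lower bound $\exp\big(-\fr{b_k \varrho (\Delta^*)^2 (1+o_N(1))}{2}\big)$ on the posterior probability of $\adv$'s declaration, and thus an upper bound $\fr{b_k \varrho (\Delta^*)^2}{2}(1+o_N(1))$ on the cost of each step~(\ref{it:main-adversary}) event. Replacing $b_k \varrho = \tilde n + O(1) = \tilde n(1+o_N(1))$ (the $O(1)$ is absorbed since $\tilde n \geq \varrho N^{6\varrho}$), the per-batch cost bound becomes $\fr{\tilde n (\Delta^*)^2 (1+o_N(1))}{2}$.

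Finally I would sum over all batches and arms in step~(\ref{it:main-adversary}). Because each such batch consumes $\tilde n$ samples from $\cA$'s total budget, $\sum_{\text{batches}} \tilde n \leq N$, and the number of batches is at most $N^{1-6\varrho}/\varrho$, so every additive $O(1)$ correction sums to $o_N(N)$ and is negligible. This yields
\begin{equation*}
    \sum \text{cost} \leq \fr{N (\Delta^*)^2 (1+o_N(1))}{2}
    = \fr{N\,(1+\varrho)^2(1+10\varrho)^2\,d_F(\alpha,\beta)^2 (1+o_N(1))}{2\log^2 N}.
\end{equation*}
Using $d_F(\alpha,\beta)^2/2 = c_{\alpha,\beta}$ and expanding $(1+\varrho)^2(1+10\varrho)^2 = 1 + O(\varrho)$ with absolute implicit constant completes the bound $\fr{N(c_{\alpha,\beta} + C_*\varrho + o_N(1))}{\log^2 N}$. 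The main obstacle will be making the $o_N(1)$ factors quantitative and uniform: the second-order error in linearizing $\theta$ between $\hat p_i(b_k)$ and $\tilde p_i$ (both of which drift by $O(1/\log N)$ in $p$-space), the discretization error $|\tilde n - \varrho b_k| \leq 2$, and the $o_n(1)$ in Lemma~\ref{lem:probmove} itself must all be absorbed into the slack provided by the $(1+10\varrho)$ factor built into $\Delta^*$, which works for $N$ large depending on $\varrho,\ugamma,\ogamma,\uf,\of,\alpha,\beta$.
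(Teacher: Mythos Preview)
Your proposal is correct and follows essentially the same route as the paper: bound the cost of each individual step~(\ref{it:main-adversary}) declaration via Lemma~\ref{lem:probmove}, then sum using $\sum \tilde n \leq N$. The paper's proof is terser---it plugs $\Delta = (1+10\varrho)d_F(\alpha,\beta)/\log N$ directly into Lemma~\ref{lem:probmove} and asserts the declared event has at least that probability, skipping the translation between $\theta(\tilde p_i)$ and $\theta(\hat p_i(b_{k+1}))$ that you carry out explicitly. Your inclusion of the extra $(1+\varrho)$ factor in $\Delta^*$ from the convex-combination relation $\hat p_i(b_{k+1}) - \hat p_i(b_k) = \tfrac{\varrho}{1+\varrho}(\tilde p_i - \hat p_i(b_k))(1+o_N(1))$ is in fact more careful than the paper's writeup, but both versions land on the same bound since $(1+\varrho)^2$ is absorbed into $C_*\varrho$.
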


\begin{proof} 
We claim that the cost from a single instance of step~\ref{it:main-adversary} when increasing from $b_k$ to $b_{k+1}$ samples is at most
\[
    \left(\frac{(b_{k+1}-b_k)}{\log^2(N)}\right)(c_{\alpha,\beta}+C_*\varrho+o_N(1)).
\]
This implies the desired result since $\cA_N$ is an $N$-sample algorithm. Taking $\Delta=(1+10\varrho)d_F(\alpha,\beta)/\log(N)$ in Lemma~\ref{lem:probmove}, we find that the declared event has probability at least
\[
    \exp\lt(-\frac{(b_{k+1}-b_k)(1+10\varrho)^2d_F(\alpha,\beta)^2(1+o_N(1))}{2 \log^2(N)}\rt)
    \geq 
    \exp\lt(-\frac{(b_{k+1}-b_k)}{ \log^2(N)}\big(c_{\alpha,\beta}+C_* \varrho+o_N(1)\big)\rt).
\]
This implies the desired claim and completes the proof. 
\end{proof}

\begin{lemma}
\label{lem:p-downward-drift}
For any $a_i$ sampled $b_0=\lfloor N^{6\varrho}\rfloor$ times, $\hat p_i(b_0)\leq \ogamma$.
\end{lemma}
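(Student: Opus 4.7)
The plan is to exploit the fact that, by the time the arm has been sampled $b_0 = \lfloor N^{6\varrho}\rfloor$ times, the adversary has already imposed step~\ref{it:init-mild} on every batch of size $\lfloor N^{\varrho}\rfloor$ between the $\lfloor N^{2\varrho}\rfloor$-th and the $b_0$-th samples. I would decompose the $b_0$ samples into two blocks. In the ``free'' block of the first $\lfloor N^{2\varrho}\rfloor$ samples, the adversary does nothing (step~\ref{it:init-nothing}), so I only use the trivial bound that the total reward is at most $\lfloor N^{2\varrho}\rfloor$. In the ``controlled'' block of the next $b_0 - \lfloor N^{2\varrho}\rfloor$ samples, partitioned into at most $N^{5\varrho}$ batches of size $\lfloor N^{\varrho}\rfloor$, the declared bound of step~\ref{it:init-mild} says each batch has empirical mean at most $\ogamma - N^{-\varrho}$, so its total reward is at most $(\ogamma - N^{-\varrho})\lfloor N^{\varrho}\rfloor$.

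Summing the two contributions and dividing by $b_0$, the empirical mean is bounded by
\[
    \hat p_i(b_0) \;\leq\; \frac{\lfloor N^{2\varrho}\rfloor}{b_0} \;+\; \frac{(b_0 - \lfloor N^{2\varrho}\rfloor)}{b_0}\cdot\big(\ogamma - N^{-\varrho}\big) \;\leq\; \ogamma \;+\; O\big(N^{-4\varrho}\big) \;-\; N^{-\varrho}\big(1 - o_N(1)\big),
\]
where the $O(N^{-4\varrho})$ term comes from giving away the full reward mass of the free block, and the $-N^{-\varrho}$ term is the per-sample savings from the controlled block. Since $4\varrho > \varrho$, for $N$ sufficiently large the right-hand side is strictly less than $\ogamma$, which gives the claim.

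I don't expect a real obstacle here: the argument is a deterministic telescoping of the adversary's declarations (which are assumed to hold throughout this section) together with the elementary inequality $N^{-4\varrho} \ll N^{-\varrho}$. The only care required is tracking the floor corrections in the definitions of $b_0$ and the batch size $\lfloor N^{\varrho}\rfloor$, and noting that the number of controlled batches is indeed $\lfloor (b_0 - \lfloor N^{2\varrho}\rfloor)/\lfloor N^{\varrho}\rfloor\rfloor$, which is at most $N^{5\varrho}$ for all $N$ sufficiently large depending on $\varrho$.
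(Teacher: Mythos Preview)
Your proposal is correct and follows essentially the same approach as the paper: both bound the total reward by $\lfloor N^{2\varrho}\rfloor$ on the free block and by $(\ogamma - N^{-\varrho})$ times the batch size on each controlled batch, then observe that the resulting expression $\ogamma - N^{-\varrho} + O(N^{-4\varrho})$ is at most $\ogamma$ for large $N$. The paper's proof is the same computation written out with explicit powers of $N$ (ignoring floors), while you keep the floors and phrase the comparison as $N^{-4\varrho} \ll N^{-\varrho}$.
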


\begin{proof} 
By definition of $\adv$, 
\begin{align*}
    \hat p_i(b_0)
    &\leq
    \frac{N^{2\varrho}+(N^{6\varrho}-N^{2\varrho})(\ogamma-N^{-\varrho})}{N^{6\varrho}}
    \\
    &=
    \ogamma 
    -\frac{1}{N^{\varrho}}
    +
    \frac{(1-\ogamma)}{N^{4\varrho}}
    +\frac{1}{N^{5\varrho}}
    \\
    &\leq \ogamma.
\end{align*}
In the last step we used the fact that 
\[
    \frac{1}{N^{\varrho}}
    \geq
    \frac{(1-\ogamma)}{N^{4\varrho}}
    +\frac{1}{N^{5\varrho}}
    % \varrho\geq \frac{(1-\ogamma+\varrho)}{N^{4\varrho}}
\]
for any $\varrho>0$ if $N$ is sufficiently large. 
\end{proof}

\begin{lemma}
\label{lem:beta-bad-analysis}
For $\varrho\in (0,1/100)$, if $n_{i,t}\geq N^{1-\varrho}$ and the declarations of $\adv$ hold, then $\hat p_{i,t}\leq \beta$.
\end{lemma}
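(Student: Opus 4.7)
The plan is to telescope the per-batch decrements declared in step~\ref{it:main-adversary} of the adversary, using Lemma~\ref{lem:p-downward-drift} as the initial condition. Since $n_{i,t}\geq N^{1-\varrho}\gg N^{6\varrho}$ and $\alg$ is $B$-batch-compressed, I have $n_{i,t}=b_{k^*}$ for some $k^*\geq 0$ in the third category of $B$. My first observation is that it suffices to show $\hat p_i(b_{k^*})\leq \beta$ under the extra hypothesis that $\hat p_i(b_k)>\beta$ for every $k<k^*$: if this ever fails at some smaller index, step~\ref{it:beta-bad} takes over there and the adversary's declarations alone force $\hat p_i(b_j)\leq \beta$ for all subsequent $j$.

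Under that hypothesis I would sum \eqref{eq:main-decrement} over $k=0,\ldots,k^*-1$ and invoke Lemma~\ref{lem:p-downward-drift} together with \eqref{eq:ogamma-def}, obtaining
\[
    \theta(\hat p_i(b_{k^*}))\leq \theta(\alpha)+\varrho^2 - \frac{k^*\,\varrho(1+10\varrho)\,d_F(\alpha,\beta)}{\log N}.
\]
Since $\theta$ is strictly increasing, it remains to verify that this upper bound is at most $\theta(\beta)=\theta(\alpha)-d_F(\alpha,\beta)$, i.e., that the decrement term exceeds $d_F(\alpha,\beta)+\varrho^2$.

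This reduces to a lower bound on $k^*$. From $N^{1-\varrho}\leq b_{k^*}\leq N^{6\varrho}(1+\varrho)^{k^*}$ and $\log(1+\varrho)\leq \varrho$, I would extract $k^*\geq (1-7\varrho)\log(N)/\varrho$ for $N$ large. Plugging this in makes the decrement at least $(1-7\varrho)(1+10\varrho)\,d_F(\alpha,\beta)\geq (1+2\varrho)\,d_F(\alpha,\beta)$ for $\varrho\in(0,1/100)$, which exceeds $d_F(\alpha,\beta)+\varrho^2$ as soon as $2\varrho\,d_F(\alpha,\beta)\geq \varrho^2$; since $d_F(\alpha,\beta)$ is a positive constant depending on the fixed $\alpha,\beta$ of Theorem~\ref{thm:mainLB}, this is automatic once $\varrho$ is small enough.

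The only place requiring any care is confirming that the $(1+10\varrho)$ buffer built into \eqref{eq:main-decrement} absorbs the three accumulated sources of slack: the $\log(1+\varrho)$ versus $\varrho$ gap, the $\varrho^2$ in \eqref{eq:ogamma-def}, and the floor corrections in the definition of $b_k$. Given how generous this buffer is, I do not anticipate any genuine obstacle beyond the bookkeeping; once the stopping argument of the first paragraph is in place, the rest is a deterministic chase through the adversary's schedule.
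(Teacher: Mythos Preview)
Your proposal is correct and follows essentially the same approach as the paper: telescope the per-batch decrements \eqref{eq:main-decrement} from the initial bound of Lemma~\ref{lem:p-downward-drift}, lower-bound the number of batches via $\log(1+\varrho)\le\varrho$, and check that the $(1+10\varrho)$ buffer absorbs the accumulated slack. Your explicit stopping-time observation (reducing to the case where step~\ref{it:main-adversary} applies at every index below $k^*$, since step~\ref{it:beta-bad} otherwise traps $\hat p_i$ below $\beta$) makes the argument slightly more careful than the paper's presentation, which implicitly assumes this; the numerical constants differ cosmetically (you use $1-7\varrho$ and $1+2\varrho$ where the paper uses $1-8\varrho$ and $1+\varrho$) but the computations are equivalent.
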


\begin{proof} 
We analyze the rate at which the adversary forces $\theta(\hat p_i(b_k))$ to decrease. From \eqref{eq:main-decrement} and \eqref{lem:p-downward-drift} it follows that for $k$ with $b_k\geq N^{1-\varrho}$, we have
\begin{align*}
    \theta(\hat p_i(b_k))
    &\leq 
    \theta(\ogamma)
    -
    \frac{\varrho(1+10\varrho)d_{F}(\alpha,\beta)\log_{1+\varrho}(N^{1-8\varrho})}{\log N}
    \\
    &=
    \theta(\ogamma)
    -
    \frac{\varrho(1+10\varrho)(1-8\varrho)d_F(\alpha,\beta)}{\log(1+\varrho)}
    \\
    &\leq
    \theta(\ogamma)
    -
    (1+\varrho) d_F(\alpha,\beta)
    \\
    &\stackrel{\eqref{eq:ogamma-def}}{<}
    \theta(\beta).
\end{align*}
Here we used the fact that $\log(1+\varrho)\leq \varrho$ and $(1+10\varrho)(1-8\varrho)\geq 1$ for $\varrho\in (0,1/100)$. Since $\theta$ is increasing, this shows that $\hat p_{i,t}=\hat p_i(b_k)< \beta$ for $b_k\geq N^{1-\varrho}$, completing the proof.
\end{proof}

\begin{lemma}
\label{lem:rejection-cost}
The cost from step~\ref{it:finish-bad} is at most $C_* \big(N^{1-\varrho}+1\big)$.
\end{lemma}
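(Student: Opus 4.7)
The plan is to bound the cost of the single declaration in step~\ref{it:finish-bad}. Its cost equals $\log(1/P_{\mathrm{fin}})$, where $P_{\mathrm{fin}}=\bbP^{\bmu^N}[p_{i^*}<\beta]$. By Bernoulli sufficiency of $(n,R)=(n_{i^*,N},R_{i^*,N})$, the $i^*$-th component of $\bmu^N$ has density $\mu_{i^*,N}(dx)\propto x^R(1-x)^{n-R}f(x)\mathbf{1}_{x\in[\ugamma,\ogamma]}$, so I would lower-bound $P_{\mathrm{fin}}$ by splitting on whether $n\geq N^{1-\varrho}$.

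In the easy case $n\geq N^{1-\varrho}$, Lemma~\ref{lem:beta-bad-analysis} forces $\hat p_{i^*}=R/n\leq\beta$ whenever all declarations hold. The Beta-CLT input (Lemma~\ref{lem:beta-clt}) used in Lemma~\ref{lem:might-be-ok} then yields $P_{\mathrm{fin}}\geq\bbP^{\mu_{i^*,N}}[p\leq\hat p_{i^*}]\geq\uf/(3\of)$, contributing cost at most $C_*$.

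In the case $n<N^{1-\varrho}$, I would bound $P_{\mathrm{fin}}$ crudely via log-Lipschitzness of $g(x)=x^R(1-x)^{n-R}$. Setting $C_0=\sup_{r\in[0,1]}d_{KL}(r\,\|\,\beta)<\infty$ (finite since $\beta\in[\ugamma,\ogamma]$ is bounded away from $\{0,1\}$), one has $g(\beta)/g(R/n)\geq e^{-C_0 n}$; and since $|\partial_x\log g|=O(n)$ on $[\ugamma,\ogamma]$, $g(x)\geq g(\beta)/e$ holds for $x\in[\beta-1/n,\beta]$ (which lies in $[\ugamma,\beta]$ for $N$ large enough). Combining with the admissibility bounds $f\in[\uf,\of]$ gives
\[
    P_{\mathrm{fin}}\geq\frac{c_1}{n}\exp(-C_0 n)
\]
for a constant $c_1=c_1(\ugamma,\ogamma,\uf,\of,\beta)>0$, so the cost is at most $C_0 n+O(\log n)\leq C_* N^{1-\varrho}$.

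Summing the two cases yields $C_*(N^{1-\varrho}+1)$ as claimed. The main obstacle is verifying the sufficient-statistic reduction of the posterior once we condition on the adversary's trajectory-level declarations, but this follows because the supermartingale in Lemma~\ref{lem:adversary-abstract} operates pointwise on the realized observations, for which only $(n,R)$ matters by Bernoulli sufficiency.
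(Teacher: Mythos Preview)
Your proposal is correct and follows the same two-case structure as the paper; the paper splits on whether $\hat p_{i^*,N}\le\beta$ (implied by your condition $n\ge N^{1-\varrho}$ via Lemma~\ref{lem:beta-bad-analysis}) and handles the easy case the same way. In the remaining case the paper uses a simpler bound than your KL-plus-log-Lipschitz argument: since each per-sample likelihood factor $x^r(1-x)^{1-r}$ lies in $[\min(\ugamma,1-\ogamma),1]$ uniformly for $x\in[\ugamma,\ogamma]$, after $n$ updates the posterior mass of $\{p<\beta\}$ is at least $e^{-C_*n}\,\bbP^\mu[p<\beta]$ directly --- this also sidesteps the small glitch in your argument where $[\beta-1/n,\beta]\not\subseteq[\ugamma,\ogamma]$ for small $n$ (taking $N$ large does not help here, since $n$ can equal $1$).
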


\begin{proof} 
First, if $\hat p_{i^*,N}\leq \beta$ then the cost from step~\ref{it:finish-bad} is at most $C_*$. On the other hand if $\hat p_{i^*,N}> \beta$, then Lemma~\ref{lem:p-downward-drift} implies $n_{i^*,N}\leq N^{1-\varrho}$. Since the prior $\mu$ is supported in $[\ugamma,\ogamma]$, the likelihood ratio of updates from $N^{1-\varrho}$ samples is almost surely bounded by $e^{C_* N^{1-\varrho}}$. Therefore
\begin{align*}
    \mathbb P^{x\sim\mu_{i,N}}[x<\beta]
    &\geq
    e^{-C_* N^{1-\varrho}}
    \mathbb P^{x\sim\mu}[x<\beta]
    \\
    &\geq
    e^{- C_* N^{1-\varrho}} \frac{(\beta-\ugamma)\uf}{\of}.
\end{align*}
This completes the proof. 
\end{proof}

We now combine the lemmas above to conclude Theorem~\ref{thm:main} via \eqref{eq:LB-varrho}.

\begin{proof}[Proof of Theorem~\ref{thm:main}]
Let $C_*'$ be a larger constant depending on the same parameters. Then by Lemmas~\ref{lem:init-mild-cost}, \ref{lem:beta-bad-cost}, and \ref{lem:rejection-cost}, the total cost from Steps~\ref{it:init-mild}, \ref{it:beta-bad}, \ref{it:finish-bad} combines to $C_*' N^{1-\varrho})\leq o_N(N/\log^2 N)$. The main cost contribution of 
\[
    \frac{N}{\log^2 N}(c_{\alpha,\beta}+C_*\varrho+o_N(1)).
\]
comes from Lemma~\ref{lem:cost-estimate-main}, and all other terms are of strictly smaller order.
We have thus constructed a reservoir sequence $(\mu_N(\varrho))_{N\geq 1}$ satisfying \eqref{eq:LB-varrho} for arbitrary $\varrho>0$, completing the proof. 
\end{proof}

\section{An Optimal Algorithm with Fixed Budget}
\label{sec:budget}

Here we provide an asymptotically optimal algorithm which establishes Theorems~\ref{thm:alg-1}, \ref{thm:alg-2}, and \ref{thm:alg-3}. In the next subsection in which we show how to reduce the other results mentioned to Theorem~\ref{thm:main} (in which $\alpha$ is given) using Proposition~\ref{prop:alpha}. Our main focus will then be to prove Theorem~\ref{thm:main}.

We will fix $\varrho>0$ small and construct a sequence of $N$-sample algorithms $(\cA(N,\varrho))$ satisfying the slightly relaxed guarantee
\begin{equation}
\label{eq:UB-varrho}
    \liminf_{N\to\infty} \frac{(-\log(\bbP^{\mu_N(\varrho)}[p_{i^*}< \beta]))\cdot \log^2 N}{N}
        \geq 
        c_{\alpha,\beta}-\lambda(\varrho)
\end{equation}
for a (possibly different) function $\lambda$ satisfying $\lim_{\varrho\to 0}\lambda(\varrho)=0$ (for fixed $\alpha,\beta,\eta$). Here $(\mu_N)_{N\geq 1}$ is any sequence of reservoir distributions satisfying $G_{\mu_N}^{-1}(1-\eta)=\alpha$. An elementary diagonalization argument then implies Theorem~\ref{thm:main}. Thus it suffices to construct algorithms satisfying \eqref{eq:UB-varrho} for any desired $\varrho>0$.

\subsection{Reduction to Known $\alpha$}
\label{subsec:reduction}

We explain why Theorems~\ref{thm:alg-1}, \ref{thm:alg-2}, and \ref{thm:alg-3} all follow from Theorem~\ref{thm:main} (more precisely, the uniform statement given in Remark~\ref{rem:main-unif}). We begin with Theorem~\ref{thm:alg-1}, where 
\[
    \alpha_N=\frac{1}{\eta_1-\eta_2}\cdot \int_{1-\eta_1}^{1-\eta_2}
    G_{\mu_N}^{-1}(x) dx.
\]
Let $J=\lceil \frac{6}{\eps(\eta_1-\eta_2)}\rceil$ and define
\[
    \eta^{(j)}=\frac{(J-j)\eta_1+j\eta_2}{J},\quad j\in [J].
\]
It is easy to see that $\eta^{(j+1)}-\eta^{(j)}\leq \eta^{(j)}$ for all $j$.
We next apply Alg.~\ref{alg:alpha} on $(\eta^{(j)},\eta^{(j+1)}-\eta^{(j)},\eps',\delta')$ for $0\leq j\leq J-1$, with:
\begin{align*}
    \eps'& = \log^{-1/3}(N),
    \\
    \delta' &= e^{-\frac{10N}{\log^2(N)}}/J.
\end{align*} 
This requires sample complexity 
\begin{equation}
\label{eq:NA}
    N_A\leq \frac{C(\eta_1,\eta_2) N \log\log(N)}{\log(N)}\leq o_N(N).
\end{equation}
Let $\hat\alpha_j$ be the resulting output. With probability $1-J\delta$, we have for each $0\leq j\leq J-1$,
\begin{equation}
\label{eq:alphaj}
    \hat\alpha_j \in \lt[G^{-1}(1-\eta^{(j)})-\frac{\eps}{3},G^{-1}\lt(1-\eta^{(j+1)}\rt)+\frac{\eps}{3}\rt].
\end{equation}
Note that the function $G^{-1}_{\mu}$ is increasing and $[0,1]$-valued. Therefore if \eqref{eq:alphaj} holds for each $j$, then
\[
    \lt|\frac{1}{J}\cdot\sum_{j=0}^{J-1} \hat\alpha_j 
    -
    \frac{1}{\eta_1-\eta_2}\cdot \int_{1-\eta_1}^{1-\eta_2}
    G_{\mu_N}^{-1}(x) dx 
    \rt|
    \leq
    \frac{\eps}{3} + \frac{1}{J} \leq \frac{\eps}{2}.
\]
Therefore the estimator
\[
    \hat\alpha_A=\frac{1}{J}\cdot\sum_{j=0}^{J-1} \hat\alpha_j
\]
satisfies 
\[
    \bbP\lt[\lt|\hat\alpha_A
    -
    \frac{1}{\eta_1-\eta_2}\cdot \int_{1-\eta_1}^{1-\eta_2}
    G_{\mu_N}^{-1}(x) dx 
    \rt|
    \leq \eps/2\rt]
    \geq
    1-J\delta' 
    =
    1-e^{-\frac{10N}{\log^2(N)}}.
\]
Finally, $c_{\alpha,\alpha-\eps}\leq \pi<10$ for any $\alpha,\eps\in [0,1]$ (see \eqref{eq:theta}). Therefore the $\delta'=e^{-\frac{10N}{\log^2(N)}}$ failure probability above has a negligible contribution in Theorem~\ref{thm:alg-1}. It follows that applying Theorem~\ref{thm:main} with $\alpha=\hat\alpha_A$ as above and $N'=N-N_A$ implies Theorem~\ref{thm:alg-1}.

We now turn to Theorem~\ref{thm:alg-2}, where $\mu_N$ is required to satisfy $G_{\mu_N}^{-1}(1-\eta)\geq \frac{1+\eps}{2}$. We run Alg.~\ref{alg:alpha} with parameters 
\begin{align*}
    \eta_1&=\eta,
    \\
    \eta_2&=\log^{-1/3}(N),
    \\
    \eps'& = \log^{-1/3}(N),
    \\
    \delta' &= e^{-\frac{10N}{\log^2(N)}}
    .
\end{align*} 
The sample complexity $N_B$ again satisfies $N_B\leq o(N)$ exactly as in \eqref{eq:NA}.
Let $\hat \alpha_B+\eps'$ be the resulting output.
Then with probability at least $1-e^{-\frac{10N}{\log^2(N)}}$, 
\[
    \hat\alpha_B\geq G_{\mu_N}^{-1}(1-\eta)-2\eps'
\]
and so with $\eps''=\eps-2\eps'$, we have
\[
     \hat\alpha_B-\eps''\geq G_{\mu_N}^{-1}(1-\eta)-\eps.
\]
Moreover, also with probability at least $1-e^{-\frac{10N}{\log^2(N)}}$,
\[
    \hat\alpha_B\leq G_{\mu_N}^{-1}(1-\eta+\eta_2).
\]
It follows that applying the algorithm of Theorem~\ref{thm:main} with 
\[
  (N,\alpha,\eta,\eps) = (N-N_B,\hat\alpha_B, \eta-\eta_2,\eps-2\eps')
\]
suffices to recover Theorem~\ref{thm:alg-2}, since $\eta_2$ and $\eps'$ tend to $0$ as $N\to\infty$. As in our discussion of Theorem~\ref{thm:alg-1} above, the failure probability $e^{-\frac{10N}{\log^2(N)}}$ is negligible compared to the relevant rate in Theorem~\ref{thm:alg-2}.

Finally, Theorem~\ref{thm:alg-3} relies on the simple fact
\begin{equation}
\label{eq:quantile-limit}
    \lim_{\eta\to 0} G_{\mu}^{-1}(1-\eta)=\mu^.
\end{equation}
Recall that $\mu^*\in [0,1]$ denotes the maximum value in the support of $\mu$. We run Alg.~\ref{alg:alpha} on $(\eta_1,\eta_2,\eps',\delta')$ where:
\begin{align*}
    \eta_1&= \log^{-1/3}(N),
    \\
    \eta_2&=\eta_1/2,
    \\
    \eps'& = \eps_1-\eps,
    \\
    \delta' &= e^{-\frac{10N}{\log^2(N)}}
    .
\end{align*} 
It follows from Proposition~\ref{prop:alpha} that the resulting output $\hat\alpha_C+\frac{\eps_1-\eps}{2}$ is computed using $O\lt(\frac{N \log\log(N)}{\log(N)}\rt)\leq o(N)$ samples as in the previous cases. Moreover for $N$ sufficiently large:
\begin{align*}
    \bbP\lt[\hat\alpha_C+\frac{\eps_1-\eps}{2} 
    \geq 
    \mu^*-\frac{\eps'}{3}-o_N(1)\rt]
    &
    \stackrel{\eqref{eq:quantile-limit}}
    {\geq}
    \bbP\lt[\hat\alpha_C +\frac{\eps_1-\eps}{2}
    \geq 
    G_{\mu}^{-1}(1-\eta_1)-\frac{\eps'}{3}\rt]
    \\
    &\geq
    1-\delta'
    \\
    &=
    1-e^{-\frac{10N}{\log^2(N)}}.
\end{align*}
Since $\eps_1>\eps$, this means for $N\geq N_0(\mu,c',\dots)$ large enough,
\[
    \bbP\lt[\hat\alpha_C \geq \mu^*-(\eps_1-\eps)\rt]
    \geq 
    1-e^{-\frac{10N}{\log^2(N)}}.
\]
Note that Alg.~\ref{alg:alpha} also ensures that with probability $1-e^{-\frac{10N}{\log^2(N)}}$, 
\begin{align*}
    \hat\alpha_C 
    \leq 
    \mu^* + \frac{\eps'}{3}-\frac{\eps_1-\eps}{2}
    &=
    \mu^*-\frac{\eps_1-\eps}{6}
    \\
    &\leq
    G_{\mu}^{-1}(1-\eta')
\end{align*}
for some $\eta'(\mu,\eps_1,\eps)>0$. It follows that applying Theorem~\ref{thm:main} with 
\[
    (N,\alpha,\eta,\eps)=(N-N',\hat\alpha_C,\eta',\eps)
\]
implies Theorem~\ref{thm:alg-3}.

\subsection{The Fixed Budget Algorithm}

We now present Algorithm~\ref{alg:main} for the fixed budget problem. Algorithm~\ref{alg:main} studies one arm $a_i$ at a time, moving to $a_{i+1}$ if $a_i$ is rejected. Similarly to the previous section, some details are needed while $n_{t,i}$ is small, since large deviation asymptotics may not have kicked in yet. As explained at the start of the section, we choose a small constant $\varrho>0$. In fact, we will eventually choose small constants 
\[
0<\varrho\ll\varrho_1\ll\varrho_2\ll\varrho_3\ll\varrho_4\ll\varrho_5\ll 1
\]
which all tend to $0$ as $\varrho\to 0$. These constants will be defined throughout the proof.
More formally, these values can be obtained by choosing $\varrho_5>0$ arbitrarily small, then $\varrho_4>0$ sufficiently small depending on $\varrho_5$, and so on.

Algorithm~\ref{alg:main} operates in a batch-compressed way, for a sequence $(b_1,b_2,\dots)$ defined as follows:
\begin{align*}
    b_0&=\lceil \varrho_1\log^2(N)\rceil,
    \\
    k_0&=\lceil \log_{1+\varrho}\left( \log^4(N)/b_0 \right)\rceil
    \\
    b_k&=b_0 (1+\varrho)^{k}, \quad k\leq k_0
    \\
    b_{k_0+j}&=\lceil (1+\varrho)^j b_{k_0}\rceil,\quad j\geq 1
    \\
    \tau_k&= \alpha-\varrho-\frac{k}{\sqrt{\log N}},\quad k\leq k_0
    \\
    \tau_{k_0+j}&= \theta(\alpha-2\varrho) -j\cdot \frac{d_F(\alpha,\beta)\varrho (1-\varrho_2)}{\log N},\quad j\geq 1.
\end{align*}
Note in particular that $b_{k_0}\geq \log^4(N)$. We denote by $\hat p_{i,t}$ the empirical average reward collected by $a_i$ from its first $t$ samples.

\begin{figure}[!h]
\SetKwFor{Loop}{loop}{}{end}
\begin{algorithm2e}[H]
\label{alg:main}
\caption{Output arm with $p_i\geq \beta$ using $N$ samples with high probability}

\SetAlgoLined\DontPrintSemicolon

\textbf{input: }an infinite sequence of arms $i=1,2,\dots$
\\
\text{initialize: }$i=0$
\\
\While{fewer than $N$ samples have been collected}{
    $i\leftarrow i+1$
    \\
    Collect $b_0$ samples of arm $i$.
    \\
    \If{$\hat p_{i,b_0}\leq \alpha-\varrho$}{
        \textbf{Reject} arm $i$
    }
    \For{$k=1,2,\dots,k_0$}{
        Collect $b_k-b_{k-1}$ samples of arm $i$ for a total of $b_k$ samples.
        \\
        \If{$\hat p_{i,b_k}\leq \alpha-\varrho-\frac{k}{\sqrt{\log N}}$}{
            \textbf{Reject} arm $i$; 
        }  
    }
    \For{$j=1,2,\dots$}{
        Collect $b_{k_0+j}-b_{k_0+j-1}$ samples of arm $i$ for a total of $b_{k_0+j}$.
        \\
        \If{$\theta(\hat p_{i,b_{k_0+j}})\leq \theta(\alpha-2\varrho) -j\cdot \frac{d_F(\alpha,\beta)\varrho (1-\varrho_2)}{\log N}$}
        {
        \textbf{Reject} arm $i$
        }
    }
}
Return arm $i$.
\end{algorithm2e}
\end{figure}

The role of the values $b_j$ is as follows. When an arm $a_i$ reaches $b_k$ samples for some $k\geq 0$, it is checked for possible rejection by comparing its empirical average reward  to the threshold $\tau_k$. 
Algorithm~\ref{alg:main} rejects arm $i$ and moves to arm $a_{i+1}$ if the empirical average $\hat p_{i,b_k}$ of arm $a_i$ drops below a moving threshold $\tau_k$. The threshold $\tau_k$ begins close to $\alpha$ and gradually decreases until reaching $\beta+\varrho$ by the time $\tau_k\geq \Omega(N)$.

So far, our informal description of Alg.~\ref{alg:main} also applies to the algorithm proposed in \cite{grossman2016amplification}. We now highlight two important differences. The first is that our algorithm is defined more carefully during the ``early" phases when an arm has been sampled at most $N^{O(\varrho)}$ times. This is crucial for carrying out a rigorous analysis. The second difference is that in the main phase, we increase the sample size for a given arm in powers of $1+\varrho$ rather than powers of $2$, and also move the rejection thresholds $\tau_k$ based on the Fisher information distance via the function $\theta$. The latter ingredients allow us to obtain the optimal constant factor.

Our analysis of Alg \ref{alg:main} begins with the following lemma. 

\begin{lemma}
\label{lem:doob-maximal}
Suppose $(Y_i)_{i\geq 1}$ are \iid random variables with non-negative integer values, and $\mathbb E[Y_i^c]\leq 1$ holds for some constant $c\geq 0$. Let 
\[
    M=\sup_{j\geq 0}\prod_{1\leq i\leq j} Y_i.
\]
Then
\[
    \mathbb P[M\geq A]
    \leq 
    A^{-c}.
\]
\end{lemma}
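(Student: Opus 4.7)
The plan is to recognize this as a standard application of Doob's maximal inequality for nonnegative supermartingales, once one passes to the right power. Specifically, I would define
\[
    Z_j = \prod_{1 \leq i \leq j} Y_i^c,
\]
with the convention $Z_0 = 1$ (empty product). Since the $Y_i$ are i.i.d.\ with $\mathbb{E}[Y_i^c] \leq 1$, a one-step computation gives $\mathbb{E}[Z_{j+1} \mid \mathcal{F}_j] = Z_j \cdot \mathbb{E}[Y_{j+1}^c] \leq Z_j$, so $(Z_j)_{j \geq 0}$ is a nonnegative supermartingale with $\mathbb{E}[Z_0] = 1$. (The $Y_i$ being nonnegative integers ensures $Z_j \geq 0$ and makes $Y_i^c$ well-defined for $c \geq 0$, with the value $0$ if $Y_i = 0$.)

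Next I would invoke Doob's maximal inequality for nonnegative supermartingales, which gives
\[
    \mathbb{P}\!\left[\sup_{j \geq 0} Z_j \geq \lambda\right] \leq \frac{\mathbb{E}[Z_0]}{\lambda} \leq \frac{1}{\lambda}
\]
for any $\lambda > 0$. Because $x \mapsto x^c$ is monotone nondecreasing on $[0,\infty)$ and the products $\prod_{i \leq j} Y_i$ are nonnegative, one has
\[
    \sup_{j \geq 0} Z_j = \sup_{j \geq 0} \Big(\prod_{1 \leq i \leq j} Y_i\Big)^{c} = M^c.
\]
Setting $\lambda = A^c$ in the maximal inequality then yields $\mathbb{P}[M \geq A] = \mathbb{P}[M^c \geq A^c] \leq A^{-c}$, which is the desired bound. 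The boundary cases $c = 0$ or $A \leq 1$ make the conclusion $\mathbb{P}[M \geq A] \leq A^{-c}$ trivial since $A^{-c} \geq 1$, so no separate argument is needed.

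There is no real obstacle here: the only point worth double-checking is that the supremum is over all $j \geq 0$ (including $j = 0$, where $Z_0 = 1$), which is harmless because $A^{-c} \geq 1$ whenever $A \leq 1$. Everything else is a direct consequence of the submultiplicative / supermartingale structure of products of i.i.d.\ nonnegative random variables combined with the classical Doob inequality.
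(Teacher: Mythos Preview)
Your proposal is correct and follows essentially the same approach as the paper: define $M_j=\prod_{i\le j}Y_i$, observe that $M_j^c$ is a nonnegative supermartingale with $M_0^c=1$, and apply Doob's maximal inequality. Your write-up is simply more detailed in handling the edge cases and verifying the supermartingale property explicitly.
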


\begin{proof}
Let $M_j=\prod_{1\leq i\leq j} Y_i$ and observe that $M_j^c$ is a positive supermartingale with $M_0=0$. The result follows by Doob's maximal inequality.
\end{proof}

We will apply Lemma~\ref{lem:doob-maximal} in the following way. Let $X_i$ be the number of samples used by arm $a_i$ before rejection, and $I_i\in\{0,1\}$ be the indicator of the event that $a_i$ is ever rejected, even if Algorithm~\ref{alg:main} were to continue past time $N$ and sample arm $i$ an infinite number of times. We set 
\[
    Y_i=e^{X_i}\cdot I_i,
\]
With $M$ defined from $(Y_i)_{i\geq 1}$ as in Lemma~\ref{lem:doob-maximal}, it follows that $\log(M)$ is at most the amount of time spent on eventual rejections before the first eventually accepted arm. Therefore if $\log(M)\leq N(1-\varrho)$, we conclude that the last arm to be studied was sampled at least $N\varrho$ times. Since it was not rejected during that time, we can conclude this arm has $p_i\ge \beta$ with probability $1-e^{-\Omega_{\varrho}(N)}$. The main contribution to the failure probability of Algorithm~\ref{alg:main} comes from the event $\{M\geq A\}$ above, for suitable $A$. Correspondingly, the main work will be to verify $\bbE[Y_i^c]\leq 1$ for suitable $c$.

Note that $Y_i\in \{0\}\cup [1,\infty)$ almost surely for each $i$. Therefore a necessary first step in showing $\bbE[Y_i^c]\leq 1$ is to lower bound $\bbP[Y_i=0]$, the probability that Algorithm~\ref{alg:main} never rejects $a_i$. We now give a sufficient lower bound from the event $p_i\geq \alpha$.

\begin{proposition}
\label{prop:chance-to-accept}
Let $x_1,x_2,\dots$ be an \iid Bernoulli$(p)$ sequence for $p\geq \alpha$, and let $S_k=\sum_{i=1}^k x_i$ and set 
\[
    \uS=\inf_{k\geq 1} S_k/k.
\]
Then $\uS\geq \alpha-\varrho$ holds with probability at least $c(\alpha,\varrho)>0$. Thus $\mathbb E[I_i]\leq 1-c(\alpha,\varrho)$.
\end{proposition}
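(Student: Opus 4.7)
The plan is to isolate a single ``good initial run'' event of positive probability and then use concentration to control the partial averages at all larger indices $k$. First, I would observe that the second assertion $\bbE[I_i]\leq 1-c(\alpha,\varrho)$ follows from the first once the inequality is slightly strengthened to strict. Indeed, each of the three rejection rules in Algorithm~\ref{alg:main} compares $\hat p_{i,b_k}$ (or $\theta(\hat p_{i,b_k})$) against a threshold that is strictly less than $\alpha-\varrho$: for the first two phases this is visible directly, while the third uses that $\theta$ is increasing and $\theta(\alpha-2\varrho)<\theta(\alpha-\varrho)$. Thus if $\uS>\alpha-\varrho$, every empirical average $\hat p_{i,b_k}$ strictly exceeds $\alpha-\varrho$ and arm $a_i$ passes all checks, so $I_i=0$. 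It therefore suffices to prove $\bbP[\uS\geq\alpha-\varrho/2]\geq c(\alpha,\varrho)>0$, which on the same event forces $\uS>\alpha-\varrho$.

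The core probabilistic step is a two-scale decomposition. I would pick an integer $K_0=K_0(\alpha,\varrho)$ to be chosen later, and condition on the event $A=\{x_1=\cdots=x_{K_0}=1\}$, of probability at least $\alpha^{K_0}$ since $p\geq\alpha$. On $A$, $S_k/k=1\geq\alpha-\varrho/2$ automatically for every $k\leq K_0$, so only indices $k>K_0$ require control. Conditionally on $A$, $S_k-K_0$ is distributed as $\mathrm{Bin}(k-K_0,p)$, whose mean differs from the target $(\alpha-\varrho/2)k-K_0$ by at least $(k-K_0)p-((\alpha-\varrho/2)k-K_0)\geq K_0(1-\alpha)+(\varrho/2)k\geq(\varrho/2)k$, using $p\geq\alpha$. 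A Hoeffding bound therefore yields
\[
    \bbP\bigl[S_k/k<\alpha-\varrho/2\,\bigm|\,A\bigr]
    \;\leq\; \exp\!\bigl(-\tfrac{\varrho^2}{2}\,k\bigr)
\]
for every $k>K_0$.

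Summing this geometric tail, $\sum_{k>K_0}e^{-\varrho^2 k/2}$ can be made smaller than $1/2$ by choosing $K_0$ sufficiently large in terms of $\varrho$. Combining with the probability of $A$ then gives
\[
    \bbP[\uS\geq\alpha-\varrho/2]
    \;\geq\; \bbP[A]\cdot\tfrac12
    \;\geq\; \tfrac12\alpha^{K_0},
\]
and I take $c(\alpha,\varrho)=\alpha^{K_0}/2$. Together with the reduction in the first paragraph, this completes the proof. There is essentially no difficult step here: the only mild subtlety is bookkeeping the implication from a strict running-minimum guarantee on $\uS$ back to the progressively relaxed rejection thresholds of Algorithm~\ref{alg:main}, which is why I would work with $\alpha-\varrho/2$ in place of $\alpha-\varrho$ throughout the probability estimate.
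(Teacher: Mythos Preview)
Your proof is correct but takes a genuinely different route from the paper. The paper argues by contradiction: if $\bbP[\uS\geq\alpha-\varrho]=0$, then almost surely one can repeatedly restart the walk after each time $S_k\leq k(\alpha-\varrho)$, forcing $\liminf_n S_n/n\leq\alpha-\varrho$ and contradicting the strong law of large numbers. This is a two-line non-constructive existence argument. Your approach instead produces an explicit event of positive probability---an initial run of $K_0$ ones followed by a Hoeffding union bound over all later indices---yielding a concrete lower bound $c(\alpha,\varrho)=\alpha^{K_0}/2$ with $K_0$ depending only on $\varrho$. (A small slip: in your gap calculation the term should be $K_0(1-p)$ rather than $K_0(1-\alpha)$, but since $p\leq 1$ this is still nonnegative and the final bound $\geq(\varrho/2)k$ stands.)

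What each buys: the paper's argument is shorter and conceptually cleaner, but gives no quantitative handle on $c(\alpha,\varrho)$. Your argument is slightly longer but constructive, and you are more careful about the boundary issue---the first rejection rule in Algorithm~\ref{alg:main} uses $\leq\alpha-\varrho$, so the event $\{\uS\geq\alpha-\varrho\}$ does not literally preclude rejection at $b_0$. You handle this correctly by proving the stronger $\uS\geq\alpha-\varrho/2$; the paper's own proof glosses over this point.
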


\begin{proof}
Since the probability that $\uS\geq \alpha-\varrho$ is increasing in $p$ it suffices to take $p=\alpha$ and show the probability is positive for any $\varrho>0$. Assume not. Then by restarting the indexing every time $S_k\leq k(\alpha-\varrho)$ holds, we find that 
\[
    \lim\inf_{n\to\infty} S_n/n \leq \alpha-\varrho.
\]
This contradicts the strong law of large numbers, thus completing the proof of the first assertion. The second assertion follows since if $S_k/k\geq \alpha-\varrho$ for all $k$ where $x_1,\dots$ are the rewards of arm $i$, then arm $i$ will never be rejected by Algorithm~\ref{alg:main}.
\end{proof}

Based on Proposition~\ref{prop:chance-to-accept} above, to show 
\[
    \mathbb E\left[e^{X_i\cdot \frac{c_{\alpha,\beta}-\varrho_3}{\log^2 N}}\cdot I_i\right]\leq 1
\]
(which is essentially what we want in light of Lemma~\ref{lem:doob-maximal}), it suffices to show that 
\begin{equation}
\label{eq:Xi-Ri-tail-bound}
    \mathbb E\left[\left(e^{X_i\cdot \frac{c_{\alpha,\beta}-\varrho_3}{\log^2 N}}-1\right)\cdot I_i\right]\leq c(\alpha,\varrho).
\end{equation}
We let $I_i^t=I_i\cdot 1_{X_i=t}$ be the event that arm $i$ was rejected after exactly $t$ steps. Since Alg \ref{alg:main} can only reject after $b_j$ samples, we have 
\[
    I_i=\sum_{j=0}^{\infty} I_i^{b_j}
\]
We use this to break the left-hand side of \eqref{eq:Xi-Ri-tail-bound} into three separate parts and estimate the parts separately. The parts correspond to $b_0$, $b_1$ through $b_{k_0}$, and $b_{k_0+1}$ onward. The first two parts are easier and handled in Subsection~\ref{subsec:alg-initial} below. The final term is the main contribution and is handled in Subsection~\ref{subsec:large-sample}.

\subsection{Analysis of Algorithm~\ref{alg:main} in the Small and Medium Sample Phases}
\label{subsec:alg-initial}

Proposition \ref{prop:t0-bound} bounds the contribution to \eqref{eq:Xi-Ri-tail-bound} from the \emph{small sample phase}, i.e. the first rejection condition in line 7 of Alg \ref{alg:main}.

\begin{proposition}
\label{prop:t0-bound}
For any $\alpha,\varrho$ there is $\varrho_1>0$ sufficiently small that with $b_0$ as defined above, and with $N$ sufficiently large,
\[
    \mathbb E\left[\left(e^{X_i\cdot \frac{c_{\alpha,\beta}-\varrho_3}{\log^2 N}}-1\right)\cdot I_i^{b_0}\right]
    \leq
    c(\alpha,\varrho)/4
\]
\end{proposition}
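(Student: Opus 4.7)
The plan is to exploit the fact that $I_i^{b_0}=1$ forces $X_i=b_0$, so the expectation factorizes completely. Specifically, since $I_i^{b_0}$ is supported on $\{X_i=b_0\}$, I can write
\[
    \mathbb E\left[\left(e^{X_i\cdot \frac{c_{\alpha,\beta}-\varrho_3}{\log^2 N}}-1\right)\cdot I_i^{b_0}\right]
    =
    \left(e^{b_0\cdot \frac{c_{\alpha,\beta}-\varrho_3}{\log^2 N}}-1\right)\cdot \mathbb P[I_i^{b_0}=1].
\]
The probabilistic factor is trivially bounded by $1$, so all the work is in making the exponential factor small.

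Next I would estimate the exponent: by the definition $b_0=\lceil \varrho_1 \log^2(N)\rceil$ together with the upper bound $c_{\alpha,\beta}\leq \pi^2/2$ (which follows from $\theta$ taking values in $[0,\pi]$), one has
\[
    b_0\cdot \frac{c_{\alpha,\beta}-\varrho_3}{\log^2 N}
    \leq
    \varrho_1\, c_{\alpha,\beta} + \frac{c_{\alpha,\beta}}{\log^2 N}
    \leq
    \varrho_1 c_{\alpha,\beta} + o_N(1).
\]
Therefore, using $e^x-1\leq 2x$ for $x$ sufficiently small,
\[
    e^{b_0\cdot \frac{c_{\alpha,\beta}-\varrho_3}{\log^2 N}}-1
    \leq
    2\varrho_1 c_{\alpha,\beta} + o_N(1)
\]
for all $N$ large and $\varrho_1$ small.

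To conclude, I pick $\varrho_1=\varrho_1(\alpha,\varrho)$ small enough that $2\varrho_1\, c_{\alpha,\beta}\leq c(\alpha,\varrho)/8$, where $c(\alpha,\varrho)>0$ is the constant produced by Proposition~\ref{prop:chance-to-accept}. Then for $N$ large the $o_N(1)$ error is absorbed and the bound $c(\alpha,\varrho)/4$ follows. I do not expect any real obstacle here: the essential point is merely that $b_0/\log^2 N$ is a small constant, which makes the exponential tail cost of a rejection at stage $b_0$ negligible, dominated by the freely available constant $c(\alpha,\varrho)$ from the event $\{p_i\geq \alpha\}\cap\{\uS\geq \alpha-\varrho\}$. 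The only subtlety is remembering that the exponential weight is evaluated at $X_i=b_0$, not at a random value, which is what makes this term essentially trivial compared to the medium- and large-sample cases handled later.
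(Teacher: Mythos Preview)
Your proposal is correct and follows essentially the same approach as the paper: both arguments simply bound the exponential factor $e^{b_0(c_{\alpha,\beta}-\varrho_3)/\log^2 N}-1$ by $O(\varrho_1)$ using $b_0=\lceil \varrho_1\log^2 N\rceil$, then take $\varrho_1$ small enough. Your write-up is in fact more explicit than the paper's (you spell out the factorization via $I_i^{b_0}\subseteq\{X_i=b_0\}$ and track the constant $c_{\alpha,\beta}$ in the exponent), but the idea is identical.
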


\begin{proof}
    It suffices to observe that for fixed $\alpha,\varrho$ and $\varrho_1$ small and $N$ sufficiently large, we have
    \[
        e^{b_0\cdot \frac{c_{\alpha,\beta}-\varrho_3}{\log^2 N}}-1
        \leq 
        e^{\varrho_1}-1
        \leq 2\varrho_1.
    \]
\end{proof}

Proposition \ref{prop:k0-tail-bound} bounds the contribution to \eqref{eq:Xi-Ri-tail-bound} from the \emph{medium sample phase}, i.e. the second rejection condition in line 12 of Alg \ref{alg:main}.

\begin{proposition}
\label{prop:k0-tail-bound}
For any $\alpha,\varrho,\varrho_1$ and for $N$ sufficiently large,
\[
    \sum_{k=1}^{k_0}\mathbb E\left[\left(e^{X_i\cdot \frac{c_{\alpha,\beta}-\varrho_3}{\log^2 N}}-1\right)\cdot I_i^{b_k}\right]
    \leq
    c(\alpha,\varrho)/4
\]
\end{proposition}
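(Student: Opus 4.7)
The plan is to exploit the fact that the event $I_i^{b_k}$ for $k\ge 1$ requires \emph{both} $\hat p_{i,b_{k-1}}>\tau_{k-1}$ and $\hat p_{i,b_k}\le \tau_k$. Decomposing
\[
b_k \hat p_{i,b_k}=b_{k-1}\hat p_{i,b_{k-1}}+(b_k-b_{k-1})\tilde p_{k-1,k},
\]
where $\tilde p_{k-1,k}$ denotes the empirical mean of the samples $b_{k-1}+1,\dots,b_k$, and using $b_k-b_{k-1}=\varrho b_{k-1}$, the two inequalities above force
\[
\tilde p_{k-1,k}\le (\alpha-\varrho)-\frac{k}{\sqrt{\log N}}-\frac{1}{\varrho\sqrt{\log N}}\;=:\;t_k.
\]
Crucially, the geometric batch sizes produce an extra drop of at least $1/(\varrho\sqrt{\log N})$, so the new-batch average must fall well below the previous threshold.

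Next, conditional on $p_i=p$, the variables $\hat p_{i,b_{k-1}}$ and $\tilde p_{k-1,k}$ are independent, so
\[
\P[I_i^{b_k}\mid p]\;\le\;\P[\hat p_{i,b_{k-1}}>\tau_{k-1}\mid p]\cdot \P[\tilde p_{k-1,k}\le t_k\mid p].
\]
I would then apply Hoeffding to each factor and split on $p$ into three regimes. For $p\ge \tau_{k-1}$ the second factor gives $\exp(-2\varrho b_{k-1}(\tau_{k-1}-t_k)^2)$; for $p<t_k$ the first factor gives $\exp(-2b_{k-1}(\tau_{k-1}-p)^2)$, which is $\le\exp(-2b_{k-1}/(\varrho^2\log N))$; the main work is the middle regime $p\in[t_k,\tau_{k-1}]$, where one needs the convexity observation that the quadratic $2b_{k-1}(\tau_{k-1}-p)^2+2\varrho b_{k-1}(p-t_k)^2$ has minimum value $\tfrac{2\varrho b_{k-1}(\tau_{k-1}-t_k)^2}{1+\varrho}$. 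Since $(\tau_{k-1}-t_k)^2\ge 1/(\varrho^2\log N)$, I obtain the uniform-in-$p$ bound
\[
\P[I_i^{b_k}\mid p]\;\le\;\exp\!\left(-\frac{2b_{k-1}}{(1+\varrho)\,\varrho\log N}\right),
\]
which integrates trivially over $\mu$.

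Combining this with the exponential prefactor and using $b_k=(1+\varrho)b_{k-1}$, the $k$-th summand is
\[
\exp\!\left(b_{k-1}\Big[\tfrac{(1+\varrho)(c_{\alpha,\beta}-\varrho_3)}{\log^2 N}-\tfrac{2}{(1+\varrho)\varrho\log N}\Big]\right).
\]
For $N$ large enough (depending on $\alpha,\beta,\varrho$), the $1/\log N$ term dominates and the bracket is at most $-1/((1+\varrho)\varrho\log N)$. Since $b_{k-1}\ge b_0=\lceil\varrho_1\log^2 N\rceil$, each term is at most $N^{-\varrho_1/((1+\varrho)\varrho)}$, and successive terms decay doubly exponentially in $k$ because $b_{k-1}$ grows geometrically. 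The sum is therefore dominated by its $k=1$ term and tends to $0$ as $N\to\infty$, which yields the required $c(\alpha,\varrho)/4$ bound.

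The main obstacle is the middle regime $p\in[t_k,\tau_{k-1}]$: naively bounding $\P[I_i^{b_k}\mid p]$ by either $\P[\hat p_{i,b_0}>\alpha-\varrho\mid p]$ alone or $\P[\hat p_{i,b_k}\le\tau_k\mid p]$ alone fails, because for $p$ slightly below $\alpha-\varrho$ neither bound decays fast enough to counterbalance the prefactor $e^{b_k(c_{\alpha,\beta}-\varrho_3)/\log^2 N}$, which can be as large as $e^{c_{\alpha,\beta}\log^2 N}$ when $k=k_0$. The convexity trick for the joint event, together with the fact that geometric batch spacing creates the $1/(\varrho\sqrt{\log N})$ gap between $\tau_{k-1}$ and $t_k$, is what supplies the missing decay.
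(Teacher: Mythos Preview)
Your argument is correct. The decomposition, the Hoeffding bounds in each regime, the convexity minimization in the middle regime, and the final summation all go through, yielding $N^{-\Omega_{\varrho,\varrho_1}(1)}$ per term and hence a sum tending to $0$.

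However, the paper's route is considerably shorter and sidesteps the case-split on $p$ entirely. The observation is simply that $I_i^{b_k}$ forces $\hat p_{i,b_{k-1}}-\hat p_{i,b_k}\ge \tau_{k-1}-\tau_k=\frac{1}{\sqrt{\log N}}$. Via your own identity $\hat p_{i,b_{k-1}}-\hat p_{i,b_k}=\frac{\varrho}{1+\varrho}(\hat p_{i,b_{k-1}}-\tilde p_{k-1,k})$, this is a statement that two \emph{independent} empirical means with the \emph{same} true mean $p$ differ by at least $\frac{1+\varrho}{\varrho\sqrt{\log N}}$; Hoeffding then gives $e^{-\Omega_{\varrho}(b_k/\log N)}$ uniformly in $p$, with no regime analysis. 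In particular, the ``main obstacle'' you describe in your last paragraph is not actually an obstacle: one need not bound either factor $\P[\hat p_{i,b_{k-1}}>\tau_{k-1}\mid p]$ or $\P[\tilde p_{k-1,k}\le t_k\mid p]$ separately, and the convexity trick, while correct, is unnecessary. Your approach does have the minor advantage of giving an explicit constant in the exponent rather than an implicit $\Omega_{\alpha,\varrho,\varrho_1}(\cdot)$, but since any positive constant suffices here, the paper's one-line argument is preferable.
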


\begin{proof}
    The event $I_i^{b_k}$ requires $|\hat p_{i,b_k}-\hat p_{i,b_{k-1}}|\geq \frac{1}{\sqrt{\log N}}$. Hence by a standard Chernoff estimate, regardless of the true reward probability $p_i$,
    \[
        \mathbb E[I_i^{b_k}]\leq e^{-\Omega_{\alpha,\varrho,\varrho_1}(b_k/\log N)}.
    \]
     Since by construction $b_0\geq \varrho_1\log^2 N$, we have
     \begin{align*}
     \mathbb E\left[\left(e^{X_i\cdot \frac{c_{\alpha,\beta}-\varrho_3}{\log^2 N}}-1\right)\cdot I_i^{b_k}\right]
     &
     \leq
     e^{b_k\frac{c_{\alpha,\beta}-\varrho_3}{\log^2 N}-\Omega_{\alpha,\varrho,\varrho_1}(b_k/\log N)}
     \\
     &\leq
     e^{-\Omega_{\alpha,\varrho,\varrho_1}(\log N)}
     \\
     &=
     N^{-\Omega_{\alpha,\varrho,\varrho_1}(1)}.
     \end{align*}
     Since $k_0\leq O(\log N)$, summing gives the desired conclusion.
\end{proof}

Propositions \ref{prop:t0-bound} and \ref{prop:k0-tail-bound} imply that the total contribution from rejections in the small and medium sample phases is at most $c(\alpha,\varrho)/2$. It remains to analyze the large sample phase in the following subsection.

\subsection{Analysis of Algorithm~\ref{alg:main} in the Large Sample Phase}
\label{subsec:large-sample}

Similarly to the previous section, the main part of the analysis concerns the large sample phases $b_{k_0+j}$ for $j\geq 1$. Our goal is to precisely estimate the rejection probability at each time $b_{k_0+j}$. Note that these estimates should not depend on the true average rewards $p_i$.

Our approach is based on exchangeability and avoids any consideration of $p_i$. For a given value $j$ and a large constant $L=L(\varrho)$, consider the sequence of times
\[
    b_{k_0+j-L},\, b_{k_0+j-L+1},\, \dots,\,b_{k_0+j}
\]
and the associated sequence of empirical average rewards
\begin{equation}
\label{eq:hatp_sequence}
    \hat p_{i,b_{k_0+j-L}},~
    \hat p_{i,b_{k_0+j-L+1}},~\dots,~
    \hat p_{i,b_{k_0+j}}.  
\end{equation}
It follows from the algorithm description that for $I_i^{b_{k_0+j}}$ to occur, we must have
\begin{equation}
\label{eq:hat-p-change}
    \hat p_{i,b_{k_0+j}}-\hat p_{i,b_{k_0+j-\ell}}
    \geq \ell\cdot \frac{d_F(\alpha,\beta)\varrho(1-\varrho_2)}{\log N},
    \quad 
    \forall~1\leq \ell\leq L.
\end{equation}
This is clear for $j>L$, but it holds also for $0\leq j\leq L$ as for $N$ sufficiently large,
\[
    \alpha-\varrho-\frac{k_0}{\sqrt{\log N}}
    -L\cdot \frac{d_F(\alpha,\beta)\varrho(1-\varrho_2)}{\log N}
    \geq 
    \alpha-2\varrho.
\]
By exchangeability, conditioned on the future values $\hat p_{i,b_{k_0+j}},\dots,\hat p_{i,b_{k_0+j-\ell}}$ the law of $\hat p_{i,b_{k_0+j-\ell-1}}$ depends only on $\hat p_{i,b_{k_0+j-\ell}}$ and is given explicitly by a hypergeometric variable. Recalling that $R_{i,t}=n_{i,t} \hat p_{i,t}$ is the total reward from the first $n_{i,t}$ samples of arm $i$, $R_{i,b_{k_0+j-\ell-1}}$ has hypergeometric conditional law given by:
\begin{align}
\label{eq:HG-conditional}
    \bbP\Big[
    R_{i,b_{k_0+j-\ell-1}}
    =
    k
    ~\big|~
    \big(\hat p_{i,b_{k_0+j}},\dots,\hat p_{i,b_{k_0+j-\ell}}\big)\Big]
\nonumber
    &=
    \bbP\big[
    R_{i,b_{k_0+j-\ell-1}}
    =
    k
    ~|~
    \hat p_{i,b_{k_0+j-\ell}}\big]
    \\
    &=
    \frac{\binom{b_{k_0+j-\ell-1}}{k}\binom{b_{k_0+j-\ell}-b_{k_0+j-\ell-1}}{R_{k_0+j-\ell}-k}}{\binom{b_{k_0+j-\ell}}{R_{k_0+j-\ell}}}\,.
\end{align}
We will refer to this as the $\HG\big(b_{k_0+j-\ell},b_{k_0+j-\ell-1},R_{k_0+j-\ell}\big)$ distribution.
Importantly, this distribution is independent of $\mu$.
We exploit this below to control the probability of a given sequence $\big(\hat p_{i,b_{k_0+j-L}},~\hat p_{i,b_{k_0+j-L+1}},~\dots,~\hat p_{i,b_{k_0+j}}\big)$ of empirical average rewards. The following useful result states that hypergeometric variables automatically inherit tail bounds from the corresponding binomial random variables.

\begin{lemma}[{\cite{luh2014large,hoeffding1994probability}}]
\label{lem:convex-dom-HG}
Fix non-negative integers $A\geq B,C$ and let $X\sim\HG(A,B,C)$ and $Y\sim \Bin(B,C/A)$.
Then for any convex function $f:\bbR\to\bbR$,
\[
    \mathbb E[f(X)]\leq\mathbb E[f(Y)]. 
\]
\end{lemma}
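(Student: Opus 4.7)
The plan is to prove this as the classical convex-order inequality between hypergeometric and binomial distributions, due to Hoeffding \cite{hoeffding1994probability}. View the setup as an urn with $A$ balls, $C$ of them white: then $X$ counts whites among $B$ draws without replacement, while $Y$ counts whites among $B$ draws with replacement, and the lemma asserts that $X$ is dominated by $Y$ in the convex order.

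The core strategy is to exhibit a coupling $(\tilde X, \tilde Y)$ with $\tilde X \stackrel{d}{=} X$, $\tilde Y \stackrel{d}{=} Y$, together with a $\sigma$-algebra $\mathcal G$ with respect to which $\tilde X = \mathbb E[\tilde Y \mid \mathcal G]$ almost surely. Given such a coupling, conditional Jensen's inequality immediately yields
\[
\mathbb E[f(X)] \;=\; \mathbb E\bigl[f(\mathbb E[\tilde Y \mid \mathcal G])\bigr] \;\leq\; \mathbb E\bigl[\mathbb E[f(\tilde Y) \mid \mathcal G]\bigr] \;=\; \mathbb E[f(Y)]
\]
for every convex $f$. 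To construct the coupling, generate iid positions $K_1,\ldots,K_B$ uniformly in $\{1,\ldots,A\}$, define $\tilde Y$ as the number of whites appearing at those positions (counted with multiplicity), and then define $\tilde X$ by ``unfolding'' any repeated draws into fresh uniform positions so that the resulting $B$-subset of $\{1,\ldots,A\}$ is uniformly distributed; the exchangeability of the urn together with this resampling step delivers the required conditional-mean identity.

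An equivalent and more direct route is Hoeffding's original expansion: write $\mathbb E[f(Y)] = A^{-B}\sum_{(k_1,\ldots,k_B)} f\bigl(\sum_j c_{k_j}\bigr)$ over all ordered tuples in $[A]^B$ (with $c_i \in \{0,1\}$ the color indicator), partition the sum by repetition pattern, and show via a symmetrization inside each orbit, combined with convexity, that every repetition pattern contributes at least as much as the uniform average over the orbit of distinct-entry tuples, which itself equals $\mathbb E[f(X)]$ by exchangeability.

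The main obstacle is carrying out this combinatorial bookkeeping precisely across all repetition patterns (or, equivalently, verifying the mean identity for the coupling). Because the inequality is classical and appears in the references \cite{luh2014large,hoeffding1994probability} already cited in the lemma statement, the cleanest realization of the plan is simply to invoke those results.
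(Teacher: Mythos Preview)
The proposal is correct, and in fact goes beyond the paper: the paper gives no proof of this lemma at all, simply citing \cite{luh2014large,hoeffding1994probability} in the lemma header and moving on. Your sketch of the Hoeffding coupling/martingale argument is accurate and more informative than what the paper provides, and your final suggestion to simply invoke the cited references is exactly what the paper does.
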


\begin{lemma}
\label{lem:HG-moderate}
For any $0<\uq<\oq<1$ and constants $\varrho>0$ there exists $\Delta_0(\uq,\oq,\varrho)$ and $N_0(\uq,\oq,\varrho)$ such that the following holds for all $p\in [\uq,\oq]$. For $n\geq n_0$ sufficiently large and $\frac{1}{\Delta_0 \sqrt{n}}\leq \Delta\leq \Delta_0$,
\[
    \mathbb P\left[\frac{\HG(n(1+\varrho),n,np(1+\varrho))}{n}
    \leq p- \Delta
    \right] 
    \leq 
    e^{\left(-\frac{\Delta^2}{2p(1-p)}+\varrho\right)n}.  
\]
\end{lemma}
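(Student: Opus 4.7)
The plan is straightforward: reduce to the binomial moderate-deviations bound of Lemma \ref{lem:moderate} via the convex-domination principle of Lemma \ref{lem:convex-dom-HG}, using a Chernoff-style exponential Markov argument. The key observation that makes this clean is the choice of parameters: with $A=n(1+\varrho)$, $B=n$, $C=np(1+\varrho)$, we have $C/A=p$, so the dominating binomial in Lemma \ref{lem:convex-dom-HG} is exactly $Y\sim\Bin(n,p)$, matching the hypergeometric's expectation and making Lemma \ref{lem:moderate} apply without reparametrization.

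The main computation is a single line. For any $\lambda>0$, apply Markov's inequality to the convex function $x\mapsto e^{-\lambda x}$ and then invoke Lemma \ref{lem:convex-dom-HG}:
\[
\mathbb{P}[X\leq n(p-\Delta)] \leq e^{\lambda n(p-\Delta)}\,\mathbb{E}[e^{-\lambda X}] \leq e^{\lambda n(p-\Delta)}\,\mathbb{E}[e^{-\lambda Y}],
\]
where $X=\HG(n(1+\varrho),n,np(1+\varrho))$ and $Y\sim\Bin(n,p)$. The right-hand side is precisely the Chernoff upper bound for $\mathbb{P}[Y\leq n(p-\Delta)]$. Optimizing over $\lambda>0$ yields the Cramér rate $n\cdot I_p(p-\Delta)$, where $I_p(q)=q\log(q/p)+(1-q)\log((1-q)/(1-p))$. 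A Taylor expansion at $q=p$ gives $I_p(p-\Delta)=\Delta^2/(2p(1-p))+O(\Delta^3)$, uniformly in $p\in[\uq,\oq]$, so for $\Delta\leq\Delta_0(\uq,\oq,\varrho)$ small enough the $O(\Delta^3)$ correction is absorbed into the $\varrho$ slack in the exponent.

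The one subtlety is that Lemma \ref{lem:moderate} is phrased as a bound on the binomial probability directly rather than on the Chernoff bound. The cleanest resolution is to note that the standard proof of Lemma \ref{lem:moderate} proceeds via exactly this Chernoff optimization (the upper bound in \cite{de1992moderate}), so the quantity we obtain is the same expression that controls the binomial probability there; alternatively one redoes the one-line Chernoff optimization explicitly, which is completely routine. The lower-bound constraint $\Delta\geq 1/(\Delta_0\sqrt{n})$ enters only to ensure that the Chernoff exponent dominates polynomial correction factors from the Berry–Esseen-type error terms, exactly as in the binomial case. I do not expect any real obstacle here; the whole lemma amounts to packaging convex domination together with the classical binomial Chernoff bound, and uniformity in $p\in[\uq,\oq]$ follows because $I_p$ and its derivatives are uniformly bounded on compact subsets of $(0,1)$.
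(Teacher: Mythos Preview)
Your approach is exactly the paper's: the authors simply note that the binomial upper bound in Lemma~\ref{lem:moderate} is proved in \cite{de1992moderate} via an exponential-moment (Chernoff) argument, and that Lemma~\ref{lem:convex-dom-HG} transfers that same exponential-moment bound to the hypergeometric variable. Your write-up just spells out the details the paper leaves implicit (in particular the observation that $C/A=p$ so the dominating binomial is $\Bin(n,p)$); one minor quibble is that the lower constraint $\Delta\geq 1/(\Delta_0\sqrt{n})$ is not actually needed for the Chernoff \emph{upper} bound, only for the matching lower bound in Lemma~\ref{lem:moderate}, but this is harmless since it is part of the hypothesis anyway.
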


\begin{proof}
     The corresponding binomial result Lemma~\ref{lem:moderate} is proved in \cite[Theorem 2.2]{de1992moderate} by upper bounding an exponential moment. The same proof applies here by Lemma~\ref{lem:convex-dom-HG}.
\end{proof}

It will be convenient to define a restricted set of \emph{good} sequences $(q_L,q_{L-1},~\dots,q_0)$. These satisfy the key properties of empirical average reward sequences \eqref{eq:hatp_sequence} for which $I_i^{b_{k_0+j}}$ holds. We say such a length $L+1$ sequence is good if the following conditions are satisfied:
\begin{enumerate}
    \item $q_0\in [\uq,\oq]\subseteq (0,1)$ for constants $0<\uq<\oq<1$ depending only on $\varrho, L$.
    \item 
    \begin{equation}
    \label{eq:2nd-condition}
    \max_{\ell_1,\ell_2} ~|q_{\ell_1}-q_{\ell_2}|\leq  O\big(1/\sqrt{\log N}\big).
    \end{equation}
    \item For each $1\leq \ell\leq L$:
    \begin{align*}
        \theta(q_0)
        &\leq 
        \theta(\alpha-2\varrho) -j\cdot \frac{d_F(\alpha,\beta)\varrho(1-\varrho_2)}{\log N}
        \\
        &\leq
        \theta(\alpha-2\varrho) -(j-\ell)\cdot \frac{d_F(\alpha,\beta)\varrho(1-\varrho_2)}{\log N}
        \\
        &\leq
        \theta(q_{\ell}).
    \end{align*}
\end{enumerate}

The third condition above is necessary for $I_i^{b_{k_0+j},i}=1$, and these together imply the first condition. Indeed for fixed $\uq,\oq$ and small $\varrho\in (0,1/10)$ one always has
\[
    \frac{\hat p_{i,b_{k_0+j-1}}}{\hat p_{i,b_{k_0+j}}}, \frac{1-\hat p_{i,b_{k_0+j-1}}}{1-\hat p_{i,b_{k_0+j}}} 
    \in 
    \big[1- 2\varrho, (1-2\varrho)^{-1}\big]
\]
for large enough $N$ and any $j$. Hence it suffices to take $\uq=\beta (1-2\varrho)^L$ and  $\oq=1-(1-\alpha) (1-2\varrho)^{L}$. With this choice, if
\[
    \hat p_{i,b_{k_0+j-L}},~
    \hat p_{i,b_{k_0+j-L+1}},~\dots,~
    \hat p_{i,b_{k_0+j}}.  
\]
is \textbf{not} good and $I_i^{b_{k_0+j}}=1$, then the second condition must be the only violated one. The following easy lemma controls the failure probability of the second condition. Recall from \eqref{eq:HG-conditional} that conditioning on $\hat p_{i,b_{k_0+j}}$ determines the joint conditional law of the previous conditional rewards, regardless of $\mu$.

\begin{lemma}
\label{lem:bad-seq-bound}
    All sequences violating only the second condition \eqref{eq:2nd-condition} above have probability at most 
    \[
        e^{-\Omega_{L,\varrho}(b_{k_0+j}/\log N)},
    \]
    even after conditioning on an arbitrary value for $\hat p_{i,b_{k_0+j}}$.
\end{lemma}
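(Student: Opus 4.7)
The plan is to fix the conditioning value $\hat p_{i,b_{k_0+j}}=q_0^{*}$ and bound the conditional probability that the random sequence $\big(\hat p_{i,b_{k_0+j-L}},\dots,\hat p_{i,b_{k_0+j}}\big)$ satisfies conditions 1 and 3 but violates \eqref{eq:2nd-condition}. If $q_0^{*}\notin[\uq,\oq]$ then no good sequence can occur, so the bound is vacuous; assume therefore $q_0^{*}\in[\uq,\oq]\subset(0,1)$. By iterating the exchangeability identity \eqref{eq:HG-conditional}, the conditional law of the earlier partial reward $R_{i,b_{k_0+j-\ell}}$ given $R_{i,b_{k_0+j}}=b_{k_0+j}q_0^{*}$ is the $\HG\big(b_{k_0+j},b_{k_0+j-\ell},b_{k_0+j}q_0^{*}\big)$ hypergeometric distribution. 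Crucially, this conditional law is independent of both $\mu$ and $p_i$, so the question reduces to a purely combinatorial tail estimate.

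If a sequence $(q_L,\dots,q_0)$ violates \eqref{eq:2nd-condition} with implicit constant $C_0$, then by the triangle inequality some index $\ell\in\{1,\dots,L\}$ satisfies $|q_\ell-q_0^{*}|\ge C_0/(2\sqrt{\log N})$. A union bound over $\ell$ therefore reduces the lemma to showing, for each fixed $\ell$,
\[
    \bbP\lt[\lt|\hat p_{i,b_{k_0+j-\ell}}-q_0^{*}\rt|\ge \tfrac{C_0}{2\sqrt{\log N}}~\Big|~\hat p_{i,b_{k_0+j}}=q_0^{*}\rt]
    \leq
    e^{-\Omega_{L,\varrho}(b_{k_0+j}/\log N)}.
\]
Applying a moderate-deviations bound for the hypergeometric (in the spirit of Lemma~\ref{lem:HG-moderate}) with $n=b_{k_0+j-\ell}\ge (1+\varrho)^{-L}b_{k_0+j}$ and $\Delta=C_0/(2\sqrt{\log N})$, the rate $\Delta^{2}n/\big(2q_0^{*}(1-q_0^{*})\big)$ is of order $C_0^{2}\,b_{k_0+j}/\log N$, where the denominator $q_0^{*}(1-q_0^{*})$ is bounded below by a positive constant depending only on $\uq,\oq$. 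Choosing $C_0$ sufficiently large in terms of $L$ and $\varrho$ therefore yields the claim.

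The main technical obstacle is that Lemma~\ref{lem:HG-moderate} is stated only for the ratio $(1+\varrho)$ between the two sample sizes, whereas here $b_{k_0+j}/b_{k_0+j-\ell}\asymp(1+\varrho)^{\ell}$ can be as large as $(1+\varrho)^{L}$. This is a routine issue that can be resolved two ways: the proof of Lemma~\ref{lem:HG-moderate} (based on the binomial moderate deviations of \cite{de1992moderate} together with the convex dominance of Lemma~\ref{lem:convex-dom-HG}) applies verbatim for any ratio bounded above by a fixed constant, with the implicit constants now depending on that ratio; alternatively one can telescope the single-step $(1+\varrho)$-bound along the chain $b_{k_0+j-\ell}\to b_{k_0+j-\ell+1}\to\dots\to b_{k_0+j}$, splitting the threshold $C_0/(2\sqrt{\log N})$ additively across the $\ell$ steps and union-bounding. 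Either way, the exponent retains the desired form $\Omega_{L,\varrho}(b_{k_0+j}/\log N)$.
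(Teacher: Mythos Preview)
Your argument is essentially the paper's: reduce to a hypergeometric tail bound via exchangeability and Lemma~\ref{lem:convex-dom-HG}. The only structural difference is that the paper pigeonholes onto an \emph{adjacent} difference $|\hat p_{i,b_{k_0+j-\ell}}-\hat p_{i,b_{k_0+j-\ell+1}}|$ of size $\Omega(1/\sqrt{\log N})$, whereas you pigeonhole onto a difference $|\hat p_{i,b_{k_0+j-\ell}}-q_0^*|$ from the endpoint. Both reductions are fine.

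There is, however, a technical wrinkle in your invocation of ``a moderate-deviations bound \dots\ in the spirit of Lemma~\ref{lem:HG-moderate}.'' As stated, that lemma carries an additive error $+\varrho\, n$ in the exponent with $\varrho$ the \emph{fixed} algorithmic parameter. Here the main rate is only $\Theta(n/\log N)$, so for large $N$ the $+\varrho\, n$ term dominates and the bound becomes vacuous; no constant choice of $C_0$ can repair this. The paper sidesteps this entirely by invoking an \emph{elementary Chernoff} (Hoeffding) estimate, which has no such error term and already gives a clean $e^{-\Omega(n\Delta^2)}$ bound---sharp constants in $p(1-p)$ are irrelevant for this lemma since only an $\Omega_{L,\varrho}(\cdot)$ rate is claimed. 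Using Hoeffding also makes your second paragraph unnecessary: convex dominance (Lemma~\ref{lem:convex-dom-HG}) transfers Hoeffding to $\HG(A,B,C)$ for any sizes $A\geq B$, so the ratio $b_{k_0+j}/b_{k_0+j-\ell}\asymp(1+\varrho)^{\ell}$ poses no issue, and there is no need to ``choose $C_0$ sufficiently large.''
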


\begin{proof}

The claim follows by an elementary Chernoff estimate for hypergeometric variables, which hold just as for binomial variables by Lemma~\ref{lem:convex-dom-HG}. Indeed the assumption implies that some adjacent difference $|\hat p_{i,b_{k_0+j-\ell}}-\hat p_{i,b_{k_0+j-\ell+1}}|$ has size $\Omega(1/\sqrt{\log N})$. 
(Note for applying the Chernoff bound that $L$ is a constant independent of $N$, and so $b_{k_0+j-L}\geq \Omega_{L,\varrho}(b_{k_0+j}).$)
\end{proof}

We now focus on upper-bounding the probability of any good sequence $(q_L,\dots,q_0)$ appearing, conditionally on $q_0$.

\begin{lemma}
\label{lem:general-good-seq-bound}
For any good sequence $(q_L,q_{L-1},\dots,q_0)$ and $j\geq 0$,
\begin{align*}
&\bbP\Big[ 
   \big(\hat p_{i,b_{k_0+j-L}},~
    \hat p_{i,b_{k_0+j-L+1}},~\dots,~
    \hat p_{i,b_{k_0+j}}\big)=\big(q_L,q_{L-1},\dots,q_0\big)
    ~\big|~p_{i,b_{k_0+j}}=q_0
    \Big]
    \\
    &\leq 
    \exp\left(-\frac{(1-O(\varrho))}{2q_0(1-q_0)\varrho}\sum_{\ell=0}^{L-1} 
    b_{k_0+j-\ell}(q_{\ell}-q_{\ell+1})^2
    \right).
\end{align*}
\end{lemma}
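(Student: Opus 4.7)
The approach I would take is to combine the exchangeability-based Markov structure \eqref{eq:HG-conditional} with a sharp moderate-deviation estimate for hypergeometric point probabilities. Conditioning on $\hat p_{i,b_{k_0+j}} = q_0$, the backward sequence $\hat p_{i,b_{k_0+j-\ell}}$ for $\ell = 0, 1, \dots, L$ is Markov, with the conditional law of $\hat p_{i,b_{k_0+j-\ell-1}}$ given $\hat p_{i,b_{k_0+j-\ell}} = q_\ell$ equal to that of $n_b^{-1}\HG(n_a, n_b, n_a q_\ell)$, where $n_a = b_{k_0+j-\ell}$, $n_b = b_{k_0+j-\ell-1}$, and $n_a = (1+\varrho) n_b + O(1)$. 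Hence the joint conditional probability factors as
\[
\bbP\big[(\hat p_{i,b_{k_0+j-L}}, \dots, \hat p_{i,b_{k_0+j}}) = (q_L, \dots, q_0) \mid q_0\big] = \prod_{\ell=0}^{L-1} \bbP\big[\hat p_{i,b_{k_0+j-\ell-1}} = q_{\ell+1} \mid \hat p_{i,b_{k_0+j-\ell}} = q_\ell\big].
\]

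The heart of the proof is a one-step moderate-deviation bound for each hypergeometric point probability. The crucial feature is the finite-population variance correction: the conditional variance of $\hat p_{i,b_{k_0+j-\ell-1}}$ equals $\frac{q_\ell(1-q_\ell)}{n_b} \cdot \frac{n_a - n_b}{n_a - 1} = \frac{q_\ell(1-q_\ell)\varrho}{n_a}(1+o(1))$, smaller than the corresponding binomial variance by a factor $\varrho/(1+\varrho)$. This reduction is precisely the source of the $1/\varrho$ in the denominator of the target exponent. The valid moderate-deviation range is ensured by the good-sequence condition $|q_\ell - q_{\ell+1}| = O(1/\sqrt{\log N})$ combined with $n_b \geq b_{k_0} \geq \log^4 N$, so a local-CLT / Stirling estimate applied to the mass function $\binom{n_a q_\ell}{k}\binom{n_a(1-q_\ell)}{n_b-k}/\binom{n_a}{n_b}$ at $k = n_b q_{\ell+1}$ should yield
\[
\bbP\big[\hat p_{i,b_{k_0+j-\ell-1}} = q_{\ell+1} \mid q_\ell\big] \leq \exp\lt(-\frac{(q_\ell - q_{\ell+1})^2 b_{k_0+j-\ell}}{2 q_\ell(1-q_\ell)\varrho}(1 - o_N(1))\rt).
\]

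The remaining step is bookkeeping. The good-sequence property $|q_\ell - q_0| = O(1/\sqrt{\log N})$ gives $q_\ell(1-q_\ell) = q_0(1-q_0)(1 + o_N(1))$ uniformly in $\ell$, so $q_\ell(1-q_\ell)$ may be replaced by $q_0(1-q_0)$ in each factor at negligible cost. Multiplying the $L$ bounds and absorbing all $(1 + o_N(1))$ and $(1+\varrho)$-type corrections into a single $(1 - O(\varrho))$ prefactor yields the claim. I expect the main obstacle to be the sharpened hypergeometric moderate-deviation estimate itself: Lemma~\ref{lem:HG-moderate} as stated, which uses only convex domination by the binomial, misses the finite-population correction and thus underestimates the rate by a factor of $1/\varrho$. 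Recovering the correct rate therefore requires either a direct Stirling computation on the hypergeometric mass function or a dedicated local-CLT for hypergeometrics, whereas the remaining manipulations are standard.
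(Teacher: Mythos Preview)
Your decomposition is exactly right and matches the paper: factor via the backward Markov property \eqref{eq:HG-conditional}, bound each one-step transition, then replace $q_\ell(1-q_\ell)$ by $q_0(1-q_0)$ and multiply. The difference lies entirely in how the one-step bound with the crucial $1/\varrho$ factor is obtained.

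You identify the correct target rate via the hypergeometric variance with its finite-population correction, and you are right that applying Lemma~\ref{lem:HG-moderate} \emph{directly to} $R_{i,b_{k_0+j-\ell-1}}$ (sample size $n_b$ out of $n_a\approx(1+\varrho)n_b$) loses the $1/\varrho$: binomial domination ignores the correction precisely when the sample is almost the whole population. But you then conclude that a dedicated hypergeometric local-CLT or Stirling computation is required, and flag this as the main obstacle. The paper sidesteps this entirely by applying Lemma~\ref{lem:HG-moderate} to the \emph{complementary} hypergeometric variable, namely the increment
\[
R_{i,b_{k_0+j-\ell}}-R_{i,b_{k_0+j-\ell-1}}\sim \HG\big(b_{k_0+j-\ell},\,b_{k_0+j-\ell}-b_{k_0+j-\ell-1},\,b_{k_0+j-\ell}q_\ell\big).
\]
Here the sample size is only $m\approx\varrho\, b_{k_0+j-\ell-1}$, so binomial domination by $\Bin(m,q_\ell)$ loses almost nothing. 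The point is that the empirical mean of the increment must equal $q_\ell+\frac{q_\ell-q_{\ell+1}}{\varrho}+O(1/m)$, i.e.\ its deviation from $q_\ell$ is amplified by $1/\varrho$. Plugging $\Delta\approx|q_\ell-q_{\ell+1}|/\varrho$ and sample size $m\approx\varrho\, b_{k_0+j-\ell-1}$ into the binomial moderate-deviation rate $\Delta^2 m/(2q_\ell(1-q_\ell))$ gives exactly
\[
\frac{(q_\ell-q_{\ell+1})^2\, b_{k_0+j-\ell}}{2q_\ell(1-q_\ell)\,\varrho}\,(1-O(\varrho)),
\]
recovering the full $1/\varrho$ without any hypergeometric-specific large-deviation input.

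So your proposed route would succeed, but the ``main obstacle'' you anticipate is an artifact of looking at the wrong side of the urn. The paper's trick of passing to the small complementary sample is both simpler and what makes Lemma~\ref{lem:HG-moderate}, as stated, sufficient.
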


\begin{proof}
It suffices to show that
\[
    \bbP[\hat p_{i,b_{k_0+j-\ell-1}}=q_{\ell+1}~|~q_{\ell}]
    \leq
    \exp\left(-\frac{(1-O(\varrho))}{2q_0(1-q_0)\varrho} 
    b_{k_0+j-\ell}(q_{\ell}-q_{\ell+1})^2
    \right)
\]
This follows by applying Lemma~\ref{lem:HG-moderate} to the hypergeometric random variable 
\[
    \hat p_{i,b_{k_0+j-\ell}}\cdot b_{k_0+j-\ell} - \hat p_{i,b_{k_0+j-\ell-1}}\cdot b_{k_0+j-\ell-1}
    =
    R_{b_{k_0+j-\ell}}-R_{b_{k_0+j-\ell-1}}.
\]
The fact that 
\[
    b_{k_0+j-\ell+1}-b_{k_0+j-\ell}=\varrho\cdot b_{k_0+j-\ell}\pm O(1)
\]
leads to the factor of $\varrho$ in the denominator of the desired result.
\end{proof}

\begin{lemma}
\label{lem:ql-diff}
For fixed problem parameters and $N$ large, any good sequence $(q_L,\dots,q_0)$ satisfies
\[
    q_{\ell}
    \geq
    q_0
    +
    \frac{\ell\cdot d_F(\alpha,\beta)\varrho(1-2\varrho_2)\cdot \sqrt{q_0(1-q_0)}}
    {(\log N)}
\]
\end{lemma}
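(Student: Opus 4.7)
The plan is to translate the $\theta$-valued bound available from the good sequence definition into a bound on $q_\ell - q_0$ using a first-order Taylor expansion of $\theta^{-1}$ (equivalently, of $\theta$). The third defining property of a good sequence gives
\[
    \theta(q_\ell) - \theta(q_0) \;\geq\; \ell\cdot \frac{d_F(\alpha,\beta)\varrho(1-\varrho_2)}{\log N},
\]
by subtracting the two middle inequalities in that definition. The loss from $(1-\varrho_2)$ to $(1-2\varrho_2)$ in the target bound is exactly what one should expect to absorb the Taylor remainder, so the proof should be just a careful expansion.

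Concretely, I would write $\theta(q_\ell)-\theta(q_0) = \int_{q_0}^{q_\ell}\frac{dx}{\sqrt{x(1-x)}}$ and use the first bullet of the good-sequence definition, i.e.\ $q_0\in[\uq,\oq]\subseteq(0,1)$, together with the second bullet $|q_\ell-q_0|\leq O(1/\sqrt{\log N})$, to get a uniform control on the integrand. Since $x\mapsto \frac{1}{\sqrt{x(1-x)}}$ is smooth on any compact subset of $(0,1)$, and all $q_\ell$ lie in an $O(1/\sqrt{\log N})$-neighborhood of $q_0\in[\uq,\oq]$, for $N$ sufficiently large we obtain
\[
    \frac{1}{\sqrt{x(1-x)}} \;=\; \frac{1}{\sqrt{q_0(1-q_0)}}\bigl(1 + O_{\uq,\oq}(1/\sqrt{\log N})\bigr)
    \quad \text{uniformly for } x \text{ between } q_0 \text{ and } q_\ell.
\]
Integrating and noting $q_\ell\geq q_0$ (since $\theta$ is increasing and $\theta(q_\ell)\geq\theta(q_0)$), this yields
\[
    \theta(q_\ell)-\theta(q_0) \;=\; \frac{q_\ell - q_0}{\sqrt{q_0(1-q_0)}}\bigl(1 + O_{\uq,\oq}(1/\sqrt{\log N})\bigr).
\]

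Combining the two displays and solving for $q_\ell - q_0$ gives
\[
    q_\ell - q_0 \;\geq\; \frac{\ell\cdot d_F(\alpha,\beta)\varrho(1-\varrho_2)\sqrt{q_0(1-q_0)}}{\log N}\cdot \bigl(1 - O_{\uq,\oq}(1/\sqrt{\log N})\bigr).
\]
For $N$ large enough (depending on the fixed constants $\varrho,\varrho_2,\uq,\oq$), the factor $(1-\varrho_2)\bigl(1 - O(1/\sqrt{\log N})\bigr)$ is at least $(1-2\varrho_2)$, which is exactly the stated conclusion. There is no real obstacle here beyond bookkeeping: the only mild subtlety is to verify that the $O(1/\sqrt{\log N})$ remainder arising from the Taylor expansion can be absorbed into the slack between $(1-\varrho_2)$ and $(1-2\varrho_2)$, which is legitimate because $\varrho_2$ is a fixed positive constant chosen independently of $N$.
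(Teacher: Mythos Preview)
Your proposal is correct and takes essentially the same approach as the paper's proof. The paper simply invokes the inverse function theorem together with $\theta'(q)=1/\sqrt{q(1-q)}$ and the smoothness of $\theta$ on $[\uq,\oq]$, noting that all $q_\ell$ are within $o_N(1)$ of each other by condition~2 so the slack from $(1-\varrho_2)$ to $(1-2\varrho_2)$ absorbs the linearization error; your version spells out the same computation via the integral form of $\theta(q_\ell)-\theta(q_0)$.
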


\begin{proof}
    Recall that $\theta'(q)=\frac{1}{\sqrt{q(1-q)}}$ and that $\theta$ is smooth on $[\uq,\oq]\subseteq (0,1)$. By Item $2$ above, all $q_{\ell}$ are within $o_{N}(1)$ of each other, so the result follows from the inverse function theorem. (Notice that the factor $(1-\varrho_2)$ changed to $(1-2\varrho_2)$ above.) 
\end{proof}

\begin{lemma}
\label{lem:cauchy}
For $1\leq m \leq L$ and any good sequence $(q_L,\dots,q_0)$, we have
\[
    \sum_{\ell=0}^{m-1}
    (q_{\ell}-q_{\ell+1})^2 
    \geq 
    \frac{m \cdot d_F(\alpha,\beta)^2 \varrho^2(1-4\varrho_2)\cdot q_0(1-q_0)}
    {\log^2 N}.
\]
\end{lemma}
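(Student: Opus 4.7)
The plan is to apply the Cauchy--Schwarz inequality (equivalently, the convexity of $x \mapsto x^2$) to reduce a sum of squared differences to the square of the telescoping total. Specifically, for any real sequence we have
\[
    \sum_{\ell=0}^{m-1} (q_\ell - q_{\ell+1})^2
    \;\geq\;
    \frac{1}{m}\left(\sum_{\ell=0}^{m-1} (q_\ell - q_{\ell+1})\right)^{\!\!2}
    \;=\;
    \frac{(q_0 - q_m)^2}{m}.
\]
This is the key inequality: it converts control over the \emph{total} displacement $q_m - q_0$ (which is exactly what Lemma~\ref{lem:ql-diff} supplies) into control over the sum of squared \emph{incremental} displacements needed by Lemma~\ref{lem:general-good-seq-bound}.

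Next, I would invoke Lemma~\ref{lem:ql-diff} (applied with $\ell=m$) to get
\[
    (q_m - q_0)^2
    \;\geq\;
    \frac{m^2 \cdot d_F(\alpha,\beta)^2 \varrho^2 (1-2\varrho_2)^2 \cdot q_0(1-q_0)}{\log^2 N},
\]
and combine with the Cauchy--Schwarz bound above to obtain
\[
    \sum_{\ell=0}^{m-1} (q_\ell - q_{\ell+1})^2
    \;\geq\;
    \frac{m \cdot d_F(\alpha,\beta)^2 \varrho^2 (1-2\varrho_2)^2 \cdot q_0(1-q_0)}{\log^2 N}.
\]

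Finally, I would absorb the exponent change by the elementary estimate $(1-2\varrho_2)^2 = 1 - 4\varrho_2 + 4\varrho_2^2 \geq 1 - 4\varrho_2$, which yields the claimed bound with factor $(1-4\varrho_2)$. There is no real obstacle here; the proof is essentially a two-line application of Cauchy--Schwarz plus the previous lemma, and the replacement of $(1-2\varrho_2)^2$ with $(1-4\varrho_2)$ is a trivial expansion. The lemma is really just the aggregated form of the per-step displacement bound, packaged so that Lemma~\ref{lem:general-good-seq-bound} can be summed up to recover the sharp constant $c_{\alpha,\beta}$ in the main theorem.
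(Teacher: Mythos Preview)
Your proposal is correct and follows essentially the same approach as the paper: apply Cauchy--Schwarz to bound the sum of squared increments below by $m^{-1}(q_0-q_m)^2$, then invoke Lemma~\ref{lem:ql-diff} and simplify $(1-2\varrho_2)^2\geq 1-4\varrho_2$. The paper's version states Cauchy--Schwarz with absolute values inside the square, but since only the telescoped total $|q_0-q_m|$ is ultimately used, the two arguments are identical in substance.
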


\begin{proof}

The result follows from Lemma~\ref{lem:ql-diff} and Cauchy-Schwarz in the form
\[
    \sum_{\ell=0}^{m-1}
    (q_{\ell}-q_{\ell+1})^2 
    \geq 
    m^{-1}
    \left(\sum_{\ell=0}^{m-1}
    |q_{\ell}-q_{\ell+1}|\right)^2.
\]
\end{proof}

\begin{lemma}
For any good sequence $(q_L,\dots,q_0)$ and $j\geq 0$, we have
\[
    \sum_{\ell=0}^{L-1} 
    b_{k_0+j-\ell}(q_{\ell}-q_{\ell+1})^2
    \geq
    (1-O(\varrho_2))\cdot 
    \frac{b_{k_0+j}\varrho \, d_F(\alpha,\beta)^2\cdot q_0(1-q_0) }
    {\log^2 N}
    .
\]
\end{lemma}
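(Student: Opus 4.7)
The plan is to use summation by parts (Abel's identity) to rewrite the weighted sum in terms of the unweighted partial sums $T_m = \sum_{\ell=0}^{m-1}(q_{\ell}-q_{\ell+1})^2$, and then apply Lemma~\ref{lem:cauchy} to each $T_m$. Setting $A_\ell = b_{k_0+j-\ell}$, summation by parts gives
\[
    \sum_{\ell=0}^{L-1} A_\ell (q_\ell - q_{\ell+1})^2
    = A_{L-1} T_L + \sum_{\ell=0}^{L-2}(A_\ell - A_{\ell+1})\, T_{\ell+1}.
\]
Since $A_\ell - A_{\ell+1} \geq 0$ and Lemma~\ref{lem:cauchy} yields $T_{\ell+1} \geq \frac{(\ell+1)\, d_F(\alpha,\beta)^2 \varrho^2 (1-4\varrho_2) \, q_0(1-q_0)}{\log^2 N}$, the summands are bounded below termwise. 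The boundary term $A_{L-1}T_L$ is non-negative and can simply be discarded for the lower bound.

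Next I would use the geometric structure of the batch sizes to estimate the gaps $A_\ell-A_{\ell+1}$. By the definition $b_{k_0+j-\ell} = \lceil (1+\varrho)^{j-\ell} b_{k_0}\rceil$, for fixed $\ell$ and $N$ large we have $A_\ell - A_{\ell+1} = \varrho\, b_{k_0+j}(1+\varrho)^{-\ell-1}(1\pm o_N(1))$. Substituting into the Abel expansion produces
\[
    \sum_{\ell=0}^{L-1} A_\ell (q_\ell - q_{\ell+1})^2
    \geq
    \frac{\varrho^3\, b_{k_0+j}\, d_F(\alpha,\beta)^2\, q_0(1-q_0)(1-4\varrho_2)(1 - o_N(1))}{\log^2 N}
    \sum_{\ell=0}^{L-2} (\ell+1)(1+\varrho)^{-\ell-1}.
\]
The geometric series identity $\sum_{m\geq 1} m\, r^m = r/(1-r)^2$ applied to $r = (1+\varrho)^{-1}$ gives $\sum_{m=1}^{\infty} m(1+\varrho)^{-m} = (1+\varrho)/\varrho^2$, so choosing $L = L(\varrho)$ large enough ensures $\sum_{\ell=0}^{L-2}(\ell+1)(1+\varrho)^{-\ell-1} \geq (1+\varrho)\varrho^{-2}(1 - O_\varrho((1+\varrho)^{-L}))$, where the tail is absorbed into a $1 - \varrho_2$ factor.

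Combining these two steps, the final lower bound becomes
\[
    \frac{\varrho(1+\varrho)(1-O(\varrho_2))(1-o_N(1))\, b_{k_0+j}\, d_F(\alpha,\beta)^2\, q_0(1-q_0)}{\log^2 N},
\]
which is at least $(1-O(\varrho_2))$ times the target since $(1+\varrho) \geq 1$. The main obstacle is simply the bookkeeping of error terms: I must verify that the $(1-4\varrho_2)$ factor from Lemma~\ref{lem:cauchy}, the $(1 \pm o_N(1))$ correction in the telescoping gaps, and the geometric tail error from truncating at $L$ all collapse cleanly into a single $1 - O(\varrho_2)$ factor, which is guaranteed by the hierarchy $\varrho \ll \varrho_1 \ll \varrho_2$ declared at the start of the section.
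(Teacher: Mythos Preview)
Your proposal is correct and follows essentially the same route as the paper: both arguments apply Abel summation to express the weighted sum in terms of the partial sums $T_m$, drop the (non-negative) boundary term, invoke Lemma~\ref{lem:cauchy} on each $T_m$, approximate the gaps $b_{k_0+j-\ell}-b_{k_0+j-\ell-1}$ by $\varrho\, b_{k_0+j}(1+\varrho)^{-\ell-1}$, and then sum the resulting series $\sum_{m\geq 1} m(1+\varrho)^{-m}$ with a truncation error absorbed by choosing $L=L(\varrho)$ large. Your bookkeeping is in fact slightly cleaner: you correctly obtain $\sum_{m\geq 1} m(1+\varrho)^{-m}=(1+\varrho)/\varrho^2$, whereas the paper reaches the (still valid) weaker bound $1/\varrho^2$ via an inexact identity.
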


\begin{proof}

We break the sum into parts and apply Lemma~\ref{lem:cauchy} to each one. We have:
\begin{align*}
   \sum_{\ell=0}^{L-1} 
    b_{k_0+j-\ell}(q_{\ell}-q_{\ell+1})^2
    &=
    b_{k_0+j-L+1}
    \sum_{\ell=0}^{L-1} 
    (q_{\ell}-q_{\ell+1})^2
    +
    \sum_{m=1}^{L-1}
    (b_{k_0+j-m+1}-b_{k_0+j-m})
    \sum_{\ell=0}^{m-1}
    (q_{\ell}-q_{\ell+1})^2
    \\
    &\geq
    \sum_{m=1}^{L-1}
    b_{k_0+j}
    \cdot 
    \frac{\varrho}{(1+\varrho)^{m+10}}
    \cdot
    (1-4\varrho_2)
    \frac{m \varrho^2 d_F(\alpha,\beta)^2\cdot q_0(1-q_0) }
    {\log^2 N}
    \\
    &\geq
    (1-O(\varrho+\varrho_2))\cdot
    b_{k_0+j}
    \cdot
    \frac{\varrho^3 d_F(\alpha,\beta)^2 \cdot q_0(1-q_0) }
    {\log^2 N}
    \cdot
    \sum_{m=1}^{L-1}
    \frac{m}{(1+\varrho)^{m}}.
\end{align*}
For $L=L(\varrho)=O\big(\varrho^{-1}\log(\varrho^{-1})\big)$ sufficiently large,
\begin{align*}
    \sum_{m=1}^{L-1}
    \frac{m\varrho}{(1+\varrho)^{m}}
    &\geq
    (1-\varrho)
    \sum_{m=1}^{\infty}
    \frac{m}{(1+\varrho)^{m}}.
    \\
    &=
    (1-\varrho)\left(\sum_{m=1}^{\infty}
    \frac{1}{(1+\varrho)^{m}}\right)^2
    \\
    &=
    \frac{1-\varrho}{\varrho^2}
    .
\end{align*}
Substituting and recalling that $\varrho\ll\varrho_2$ completes the proof. 
\end{proof}

Combining with Lemma~\ref{lem:general-good-seq-bound} yields the second inequality below (the first is trivial).

\begin{corollary}
\label{cor:uniform-good-seq-bound}
For any $\mu$ and $q_0$, we have
\begin{align*}
    &\bbP^{p_i\sim\mu}
    \Big[ 
   \big(\hat p_{i,b_{k_0+j-L}},~
    \hat p_{i,b_{k_0+j-L+1}},~\dots,~
    \hat p_{i,b_{k_0+j}}\big)=\big(q_L,q_{L-1},\dots,q_0\big)
    \Big]
    \\
   &\leq 
   \bbP\Big[ 
   \big(\hat p_{i,b_{k_0+j-L}},~
    \hat p_{i,b_{k_0+j-L+1}},~\dots,~
    \hat p_{i,b_{k_0+j}}\big)=\big(q_L,q_{L-1},\dots,q_0\big)
    ~\big|~p_{i,b_{k_0+j}}=q_0
    \Big]
    \\
    &\leq
     \exp\Big( -\big(1-O(\varrho_2)\big)
    \frac{b_{k_0+j} d_F(\alpha,\beta)^2}
    {2\log^2 N}
    \Big).
\end{align*}
\end{corollary}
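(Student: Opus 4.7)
The plan is direct: each of the two inequalities in the corollary follows by bookkeeping from results already proved in the excerpt.

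For the first inequality, I would invoke the definition of conditional probability. Let $E$ denote the event that the empirical average sequence equals $(q_L,q_{L-1},\dots,q_0)$, and let $F$ denote the event $\hat p_{i,b_{k_0+j}} = q_0$. Since $E$ specifies $\hat p_{i,b_{k_0+j}}=q_0$, we have $E \subseteq F$, and therefore
\[
\bbP^{p_i\sim\mu}[E] \;=\; \bbP^{p_i\sim\mu}[E \mid F] \cdot \bbP^{p_i\sim\mu}[F] \;\leq\; \bbP^{p_i\sim\mu}[E \mid F].
\]
Furthermore, by the exchangeability identity~\eqref{eq:HG-conditional}, the conditional joint law of $(\hat p_{i,b_{k_0+j-L}},\dots,\hat p_{i,b_{k_0+j-1}})$ given $\hat p_{i,b_{k_0+j}}=q_0$ is a product of hypergeometric distributions, which does not depend on $\mu$. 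This is what allows the $\mu$-superscript to be dropped in the middle expression of the corollary.

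For the second inequality, the plan is simply to substitute the lower bound on the quadratic sum provided by the lemma stated immediately before the corollary into the exponential upper bound from Lemma~\ref{lem:general-good-seq-bound}. The latter gives conditional probability at most
\[
\exp\!\Big(-\tfrac{1-O(\varrho)}{2q_0(1-q_0)\varrho}\sum_{\ell=0}^{L-1} b_{k_0+j-\ell}(q_\ell-q_{\ell+1})^2\Big),
\]
while the preceding lemma bounds the sum below by $(1-O(\varrho_2))\cdot \frac{b_{k_0+j}\varrho\, d_F(\alpha,\beta)^2\, q_0(1-q_0)}{\log^2 N}$. Inserting the latter into the former, the factors of $\varrho$ and $q_0(1-q_0)$ cancel exactly, leaving the stated bound $\exp\!\big(-(1-O(\varrho_2))\,\tfrac{b_{k_0+j} d_F(\alpha,\beta)^2}{2\log^2 N}\big)$.

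There is no real obstacle in the corollary itself, since everything hard has already been done: the moderate-deviations input (Lemma~\ref{lem:HG-moderate}), the translation from differences in $q_\ell$ to differences in $\theta(q_\ell)$ via the inverse function theorem (Lemma~\ref{lem:ql-diff}), and the Cauchy--Schwarz/geometric sum manipulation (Lemma~\ref{lem:cauchy} and the following lemma) together already contain all the analytic content. The only point requiring a moment's care is confirming that the middle expression is genuinely $\mu$-independent, which is the purpose of emphasizing~\eqref{eq:HG-conditional} in the first paragraph above.
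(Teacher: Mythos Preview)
Your proposal is correct and matches the paper's approach exactly: the paper simply states that the first inequality is trivial and that the second follows by combining Lemma~\ref{lem:general-good-seq-bound} with the immediately preceding lemma, which is precisely the substitution-and-cancellation you describe. Your explicit justification of the first inequality via $E\subseteq F$ and of the $\mu$-independence via \eqref{eq:HG-conditional} is a faithful unpacking of what the paper leaves implicit.
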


\begin{lemma}
\label{lem:main-tail-bound}
Let $j_0$ be the largest $j$ such that $b_{k_0+j}\leq N$. Then for $N$ sufficiently large,
\[
    \sum_{j=1}^{j_0}
    \mathbb E\left[
    e^{X_i\cdot \frac{c_{\alpha,\beta}-\varrho_3}{\log^2 N}}
    \cdot
    I_i^{b_{k_0+j}}
    \right]
    \leq
    c(\alpha,\varrho)/4.
\]
\end{lemma}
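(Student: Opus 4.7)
On the event $I_i^{b_{k_0+j}} = 1$ we have $X_i = b_{k_0+j}$ deterministically, so it suffices to prove
\[
    \mathbb{P}\!\left[I_i^{b_{k_0+j}}=1\right]
    \le
    \exp\!\left(-(1-O(\varrho_2))\cdot\frac{b_{k_0+j}\,c_{\alpha,\beta}}{\log^2 N}\right),
\]
since then the $j$-th summand is bounded by $\exp\!\big(-(\varrho_3 - O(\varrho_2)\,c_{\alpha,\beta})\,b_{k_0+j}/\log^2 N\big)$, and the condition $\varrho\ll\varrho_2\ll\varrho_3$ makes the exponent at most $-\varrho_3 b_{k_0+j}/(2\log^2 N)$.

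To bound $\mathbb{P}[I_i^{b_{k_0+j}}=1]$, I would decompose according to the observed sequence $\big(\hat p_{i,b_{k_0+j-L}},\ldots,\hat p_{i,b_{k_0+j}}\big)$ for $L=L(\varrho)$ as in the previous subsection. Conditional on $I_i^{b_{k_0+j}}=1$, this sequence is either \emph{good} (in the sense of the three conditions preceding Lemma~\ref{lem:bad-seq-bound}) or violates only condition~\eqref{eq:2nd-condition}. Lemma~\ref{lem:bad-seq-bound} controls the second case by $e^{-\Omega_{L,\varrho}(b_{k_0+j}/\log N)}$, which is superexponentially smaller than the target since $b_{k_0+j}\ge \log^4 N$. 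For good sequences I would use Corollary~\ref{cor:uniform-good-seq-bound}, which already gives the correct per-sequence exponential bound with rate $(1-O(\varrho_2))\,b_{k_0+j}\,d_F(\alpha,\beta)^2/(2\log^2 N) = (1-O(\varrho_2))\,b_{k_0+j}\,c_{\alpha,\beta}/\log^2 N$.

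The one point requiring care — and the main obstacle — is a union bound over all good sequences $(q_L,\ldots,q_0)$. Each $\hat p_{i,b_{k_0+j-\ell}}$ takes at most $b_{k_0+j-\ell}+1\le N+1$ discrete values, so the total number of candidate sequences is at most $(N+1)^{L+1}=e^{O_L(\log N)}$. Since $L=L(\varrho)$ is a fixed constant independent of $N$ and $b_{k_0+j}\ge b_{k_0}\ge \log^4 N$, this polynomial factor contributes only $O_L(\log N)\ll b_{k_0+j}\,c_{\alpha,\beta}/\log^2 N$, so it is absorbed into the $(1-O(\varrho_2))$ factor. (A slightly tighter count using \eqref{eq:2nd-condition} would restrict $q_\ell$ to an $O(1/\sqrt{\log N})$-window around $q_0$, but this is not needed.)

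Finally I would sum over $j$. Using $b_{k_0+j}=\lceil(1+\varrho)^j b_{k_0}\rceil$ and $b_{k_0}\ge\log^4 N$,
\[
    \sum_{j=1}^{j_0}\exp\!\left(-\frac{\varrho_3\, b_{k_0+j}}{2\log^2 N}\right)
    \;\le\;
    \sum_{j=1}^{\infty}\exp\!\left(-\frac{\varrho_3 (1+\varrho)^j \log^2 N}{2}\right).
\]
The $j=1$ term is already $\exp(-\Omega(\log^2 N))$, and subsequent terms decay doubly fast, so the series is $o_N(1)$ and in particular bounded by $c(\alpha,\varrho)/4$ for $N$ sufficiently large. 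Combining the three contributions (the good-sequence bound, the Lemma~\ref{lem:bad-seq-bound} bound, and the geometric summation) completes the proof.
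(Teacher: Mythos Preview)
Your proposal is correct and follows essentially the same approach as the paper: both decompose the event $I_i^{b_{k_0+j}}=1$ into good and not-good sequences, invoke Lemma~\ref{lem:bad-seq-bound} and Corollary~\ref{cor:uniform-good-seq-bound} respectively, pay a polynomial union bound of size $N^{O(L)}$ over candidate sequences, absorb it using $b_{k_0+j}\ge\log^4 N$, and then sum the resulting geometric-type series in $j$. The only cosmetic differences are that the paper bounds the expectation directly rather than first isolating $X_i=b_{k_0+j}$, and it carries out the final summation via a slightly different rearrangement (introducing $\varrho_4$), but the substance is identical.
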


\begin{proof}

Recall that $c_{\alpha,\beta}=\frac{d_F(\alpha,\beta)^2}{2}$, and observe that the number of total sequences $(q_L,\dots,q_0)\in [0,1]^{L+1}$ with $b_{k_0+j+\ell}q_{\ell}\in \bbZ$ is at most $N^{L+1}$ for each $j\leq j_0$. 
Combining Lemma~\ref{lem:bad-seq-bound} and Corollary~\ref{cor:uniform-good-seq-bound} and noting that the latter always gives the main contribution, we find for each $j\leq j_0$,
\begin{align*}
   \mathbb E\left[
    e^{X_i\cdot \frac{c_{\alpha,\beta}-\varrho_3}{\log^2 N}}
    \cdot
    I_i^{b_{k_0+j}}
    \right]
    &\leq
    N^{L+1}
    \exp\lt( 
    \frac{b_{k_0+j}}{\log^2 N}\cdot 
    \big((c_{\alpha,\beta}-\varrho_3) -(1-O(\varrho_2))c_{\alpha,\beta}\big)
    \rt)
    \\
    &\leq
    \exp\lt(
    -\Omega\left(\frac{\varrho_3 b_{k_0+j}}{\log^2 N}\right)
    \rt)
\end{align*}
so long as $\varrho_3$ is chosen so that $\varrho_3\gg \max(\varrho,\varrho_2)$. In the last line we used the fact that $b_{k_0+j}\geq b_{k_0}\geq \log^4 N$ to absorb the factor $N^{L+1}\leq e^{\varrho \log^{3/2} N}$ for large $N$. Summing over $j$ gives the desired result, since for $\varrho_4=\Omega(\varrho_3)$ and $N$ sufficiently large,
\begin{align*}
     \sum_{j=1}^{\infty} 
     e^{
    -\Omega\left(\frac{\varrho_3 b_{k_0+j}}{\log^2 N}\right)
    }
    &\leq
    \sum_{m=1}^{\infty}
    e^{
    -\frac{\varrho_4 (m+b_{k_0})}{\log^2 N}
    }
    \\
    &=
    e^{-\varrho_4 \log^2 N}
     \sum_{m=1}^{\infty}
     e^{-\frac{\varrho_4 m}{\log^2 N}}
     \\
     &\leq
     e^{-\varrho_4 \log^2 N}
     \cdot
     O\left(\frac{\log^2 N}{\varrho_4}\right)
     \\
     &\leq
     e^{-\frac{\varrho_4 \log^2 N}{2}}
     \\
     &\leq 
     c(\alpha,\varrho)/4.     
\end{align*}
\end{proof}

We now use Lemma~\ref{lem:doob-maximal} to conclude.

\begin{proof}[Proof that Algorithm~\ref{alg:main} achieves the guarantee of Theorem~\ref{thm:main}]

By combining Lemma~\ref{lem:main-tail-bound} with the previous Propositions~ \ref{prop:t0-bound} and \ref{prop:k0-tail-bound}, it follows that 
\[
    \mathbb E\left[
    e^{X_i\cdot \frac{c_{\alpha,\beta}-\varrho_3}{\log^2 N}}
    \cdot
    I_i
    \right]
    \leq 1.
\]

Lemma~\ref{lem:doob-maximal} now implies that the total amount of time spent on eventually rejected arms is at most $N(1-\varrho)$ with probability
\[
    e^{-\frac{(c_{\alpha,\beta}-\varrho_3)(1-\varrho)N}{\log^2 N}}.
\]
On this event, the output arm $i^*$ satisfies $n_{i^*,N}\geq \varrho N$ by definition. Since $i^*$ was not rejected, for $j_1$ be the largest value such $b_{k_0+j_1}\leq \varrho N$ we have
\[
    \hat p_{i^*,b_{k_0+j_1}}\geq \beta+\varrho.
\]
The probability for this to hold if $p_i\leq \beta$ is at most $e^{-\Omega_{\varrho}(N)}$. Altogether we find that
\begin{equation}
\label{eq:final}
    \bbP[p_{i^*}\geq \beta]\geq 1- \exp\lt(-\frac{(c_{\alpha,\beta}-\varrho_5)N}{\log^2 N}\rt)-e^{-\Omega_{\varrho}(N)}
\end{equation}
for $\varrho_5$ arbitrarily small. This concludes the analysis of Algorithm~\ref{alg:main} (since the last error term is negligible).
\end{proof}

\subsection{Finding Many Good Arms with a Fixed Budget}
\label{subsec:many-good-fixed-budget}

In this final subsection we observe that Algorithm~\ref{alg:main} can be modified to output as many as $\log N$ distinct arms each of which satisfies the same $(\eta,\eps,\delta)$-PAC guarantee\footnote{In fact $\log N$ can be replaced by anything $o_N(\log^2 N)$ by more precisely defining $M$ and $\tilde N$.}, with no degradation in the asymptotic failure probability. With other parameters fixed, we denote the $N$-sample version of Algorithm~\ref{alg:main} by $\cA_N$ to emphasize the dependence on $N$. In particular, $N$ both equals the number of steps in $\cA_N$ and appears (via its logarithm) in the description of $\cA_N$'s individual steps.

Let $\tilde N=N+\lceil\frac{2N}{\log^{1/2}(N)}\rceil$. We consider a modified algorithm $\tilde\cA_{\tilde N}$ which mimicks the behavior of $\cA_N$ with two changes:
\begin{enumerate}
    \item $\tilde\cA_{\tilde N}$ is a $\tilde N$-sample algorithm.
    \item If an arm $a_i$ has not yet been rejected after $M=\lceil N/\log^{3/2}(N)\rceil$ samples, then $\tilde\cA_{\tilde N}$ accepts $a_i$ and continues to $a_{i+1}$. In particular, $\tilde\cA_{\tilde N}$ may accept several arms instead of just one.
\end{enumerate}

\begin{theorem}
\label{thm:return-many-arms}
    With probability $1- \exp\lt(-\frac{(c_{\alpha,\beta}-\varrho_5-o_N(1))N}{\log^2 N}\rt)$,  $\tilde\cA_{\tilde N}$ accepts at least $\log(N)$ distinct arms $a_i$, all of which satisfy $p_i\geq \beta$.
\end{theorem}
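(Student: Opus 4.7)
The plan is to partition the infinite sequence of arms into \emph{runs} separated by acceptances, control the total rejection time inside each run via the supermartingale argument of Theorem~\ref{thm:main}, and then sum $\log N$ i.i.d.\ runs using a Chernoff bound. For each arm $i$, let $X_i$ denote the number of samples $\tilde\cA_{\tilde N}$ would use on $a_i$ before rejecting it (capped at $M$), and let $I_i$ indicate rejection. Capping $X_i$ only decreases $Y_i=e^{X_i}I_i$, so the bound $\bbE[Y_i^c]\leq 1$ for $c=(c_{\alpha,\beta}-\varrho_3)/\log^2 N$ from Subsections~\ref{subsec:alg-initial}--\ref{subsec:large-sample} carries over unchanged. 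Let $R^{(k)}$ be the total rejection time in the $k$-th run; by i.i.d.-ness of the reservoir the $R^{(k)}$ are themselves i.i.d. Lemma~\ref{lem:doob-maximal} applied to $\prod_{i}Y_i^c$ (which is killed at each acceptance, where $Y_i=0$) gives $\bbP[R^{(1)}\geq L]\leq e^{-cL}$.

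Reducing the exponent to $c'=c(1-\varrho)<c$ yields a finite moment $\bbE[e^{c'R^{(1)}}]\leq 1+c'/(c-c')\leq 1/\varrho$, and Chernoff on the sum of $\log N$ i.i.d.\ runs gives
\[
    \bbP\lt[\sum_{k=1}^{\log N} R^{(k)}\geq \tilde N - M\log N\rt]
    \leq
    e^{-c'(\tilde N-M\log N)}(1/\varrho)^{\log N}.
\]
Since $\tilde N-M\log N\geq N(1+\Omega(1/\log^{1/2} N))$ by the choice of $\tilde N$ and $M$, and the correction $(\log N)\log(1/\varrho)=O(\log N)$ is negligible against the main term $c'N=\Theta(N/\log^2 N)$, this becomes $\exp\big(-(c_{\alpha,\beta}-\varrho_5-o_N(1))N/\log^2 N\big)$ for an appropriate $\varrho_5$. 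On the complementary event, $\tilde\cA_{\tilde N}$ has enough remaining budget to accept at least $\log N$ arms, each using exactly $M$ samples.

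Finally, each accepted arm must pass the $\tau_{k_0+j_M}$-threshold check at its $b_{k_0+j_M}\leq M$-th sample. A computation analogous to the one at the end of Subsection~\ref{subsec:large-sample} shows this threshold is at least $\theta^{-1}(\theta(\beta)+\Omega(\varrho_2))\geq \beta+\Omega(\varrho_2)$, using the hierarchy $\varrho\ll\varrho_2$; a Chernoff bound then gives that an arm with $p_i<\beta$ passes with probability at most $e^{-\Omega_{\varrho_2}(M)}=e^{-\Omega(N/\log^{3/2}N)}$, and a union bound over the $\log N$ accepted arms contributes negligibly compared to the main error. The hard part is the Chernoff step: Lemma~\ref{lem:doob-maximal} gives only a tail bound rather than a moment bound directly, so one must carefully trade $c\mapsto c(1-\varrho)$ to obtain a finite $\bbE[e^{c'R^{(1)}}]$, and then verify that the $(1/\varrho)^{\log N}$ factor in the Chernoff bound is of strictly smaller order than the dominant exponent $c'(\tilde N-M\log N)\sim c_{\alpha,\beta}N/\log^2 N$.
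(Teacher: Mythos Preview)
Your argument is correct and takes a genuinely different route from the paper. You decompose the process into i.i.d.\ runs between acceptances, turn the Doob tail bound $\bbP[R^{(1)}\geq L]\leq e^{-cL}$ into a finite moment bound $\bbE[e^{c'R^{(1)}}]\leq 1/\varrho$ by shrinking the exponent, and then apply a Chernoff bound to $\sum_{k\leq \log N} R^{(k)}$. The paper instead introduces an auxiliary algorithm $\widehat\cA_N$ that independently deletes each arm with probability $1/N$ and runs $\cA_N$ on the survivors; it then couples $\tilde\cA_{\tilde N}$ and $\widehat\cA_N$ on the same rewards and argues that if $\tilde\cA_{\tilde N}$ accepts fewer than $\log N$ arms, then with probability $\Omega(N^{-\log N})$ the deletions exactly wipe out the first $\log N$ acceptable arms, forcing $\widehat\cA_N$ to fail. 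This is a black-box reduction to the already-proved single-arm guarantee of Theorem~\ref{thm:main}, whereas your approach reopens the supermartingale analysis but is more elementary and transparent about where the i.i.d.\ structure is used. Your correction factor $(1/\varrho)^{\log N}=e^{O(\log N)}$ is in fact smaller than the paper's $N^{\log N}=e^{\log^2 N}$, though both are $e^{o(N/\log^2 N)}$ so the stated rate is unaffected.

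One small point: in the quality step you union-bound over ``the $\log N$ accepted arms,'' but $\tilde\cA_{\tilde N}$ may accept up to $\tilde N/M=O(\log^{3/2} N)$ arms and the theorem asserts that \emph{all} of them satisfy $p_i\geq\beta$. The union bound over $O(\log^{3/2} N)$ arms is of course still negligible against $e^{-\Omega(N/\log^{3/2} N)}$.
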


The change from $N$ to $\tilde N$ is almost irrelevant in the actual statement of Theorem~\ref{thm:return-many-arms} since $\log(N)\geq \log(\tilde N)-o_N(1)$. In particular, $\tilde\cA_{\tilde N}$ is a $\tilde N$-sample algorithm which outputs at least $\log(\tilde N)-1$ arms with probability $1- \exp\lt(-\frac{(c_{\alpha,\beta}-\varrho_5-o_{\tilde N}(1))\tilde N}{\log^2 \tilde N}\rt)$. It is certainly not really necessary to use the value $\log(N)$ rather than $\log(\tilde N)$ to describe the individual steps taken by $\tilde A_{\tilde N}$. However introducing $\tilde N$ streamlines the proof below by letting us treat $\cA_N$ as a blackbox.

\begin{proof}
    To show that all accepted arms $a_i$ satisfy $p_i\geq\beta$ with sufficiently high probability, it suffices to consider \eqref{eq:final} with the final term replaced by $e^{-\Omega_{\varrho}(N/\log^{3/2}(N))}$. In particular, observe that the main term does not change, even after multiplying the failure probability by $O\big(\log^{3/2}(N)\big)$ (the maximum possible number of arms accepted by $\tilde\cA_{\tilde N}$. Thus we focus on showing that $\tilde\cA_{\tilde N}$ outputs at least $\log(N)$ arms with high probability.

    Consider yet another $N$-sample algorithm $\widehat\cA_N$ which deletes each arm independently with probability $1/N$ and follows $\cA_N$ on the set of non-deleted arms in order of increasing index. (Like $\cA_N$, $\widehat\cA_N$ never accepts arms before time $N$.) We simulate $\tilde \cA_{\tilde N}$ and $\widehat\cA_N$ on the same reward sequences, i.e. we couple them so that the $t$-th sample of arm $a_i$ always gives the same result for each $(t,i)$. We \textbf{claim} that in this coupling, conditioned on $\tilde \cA_{\tilde N}$ failing to accept $\log(N)$ arms within the first $\tilde N$ samples, $\widehat\cA_N$ has probability $\Omega(N^{-\log(N)})$ to fail (i.e. output $a_i$ with $p_i<\beta$) when run for $N$ samples.

    First let us assume the claim and deduce Theorem~\ref{thm:return-many-arms}. Denote by $p(N)$ the probability for $\cA_N$ to fail. Note that $\widehat\cA_N$ has the same failure probability $p(N)$, having in fact the same behavior as $\cA_N$ in distribution (as the set of deleted arms is independent of everything else). Moreover let $\tilde p(\tilde N,k)$ denote the probability that $\tilde \cA_{\tilde N}$ fails to accept at least $k$ arms. The claim above implies 
    \begin{align*}
    \tilde p(\tilde N,\log N)
    &\leq 
    O\big(N^{\log N}\big)\cdot p(N,1) 
    \\
    &\leq
    e^{o_N(N/\log^2 N)}\cdot p(N,1)
    \\
    &\leq 
    \exp\lt(-\frac{(c_{\alpha,\beta}-\varrho_5-o_N(1))N}{\log^2 N}\rt).
    \end{align*}

    It remains to prove the above claim. Let us say the infinite \iid reward sequence $(r_{i,n})_{n\geq 1}$ of arm $a_i$ is \textbf{acceptable} if $\cA_N$ would not reject $a_i$ within $M$ samples, i.e. $\tilde\cA_{\tilde N}$ will either accept $a_i$ or run out of samples before doing so. We take the point of view that each $a_i$ is either acceptable or not (by randomly fixing the reward sequences at the start). Then with probability $\Omega(N^{-\log(N)})$, the first $\log(N)$ acceptable arms are skipped by $\widehat\cA$, and the first $\hat N$ unacceptable arms are not skipped. On this event, the first $\hat N-M\geq N$ samples obtained by $\widehat\cA_N$, i.e. all $N$ of its samples, are drawn from unacceptable arms. On this event, $\widehat\cA_N$ fails with constant probability, which establishes the claim and completes the proof.
\end{proof}

\section*{Acknowledgements}

X.G. was supported by an Accenture Fellowship.
M.S. thanks Ofer Grossman for suggesting the fixed budget problem and for helpful discussions, as well as Dana Moshkovitz for early comments.

% \small

\bibliographystyle{alpha} 
\bibliography{bib}

\end{document}